\newcommand{\ie}{i.e.\@\xspace}
\newcommand{\Ie}{I.e.\@\xspace}
\newcommand{\eg}{e.g.\@\xspace}
\newcommand{\cf}{cf.\@\xspace} %
\newcommand{\wrt}{w.r.t.\@\xspace} %
\newcommand{\inv}[1]{\ensuremath{#1^{-1}}}
\newcommand{\transpose}[1]{\ensuremath{#1^{T}}}
\newcommand{\diag}[1]{\ensuremath{\mathrm{diag}\parenthesis{#1}}}
\newcommand{\mat}[1]{\ensuremath{\boldsymbol{\mathrm{#1}}}}
\newcommand{\sumk}[1][M]{\ensuremath{\sum_{k=1}^{#1}}}
\newcommand{\derivative}[2]{\ensuremath{\dfrac{\partial #1}{\partial #2}}}
\newcommand{\cols}[1]{\ensuremath{\brackets{#1}_{:k}}}
\newcommand{\rows}[1]{\ensuremath{\brackets{#1}_{k:}}}
\newcommand{\norm}[1]{\ensuremath{\left\Vert#1\right\Vert}}
\newcommand{\normsquared}[1]{\ensuremath{\norm{#1}^2}}
\newcommand{\abs}[1]{\ensuremath{\left|#1\right|}}
\newcommand{\logabsdet}[1]{\ensuremath{\log\left|#1\right|}}
\newcommand{\expnum}[2]{\ensuremath{{#1}\mathrm{e}{#2}}}
\newcommand{\expectation}[1]{\ensuremath{\mathbb{E}_{#1}}}
\newcommand{\normal}[1][\ensuremath{I_n}]{\ensuremath{\mathcal{N}\parenthesis{0;#1}}}
\newcommand{\parenthesis}[1]{\ensuremath{\left(#1\right)}}
\newcommand{\brackets}[1]{\ensuremath{\left[#1\right]}}
\newcommand{\braces}[1]{\ensuremath{\left\{#1\right\}}}
\newcommand{\trace}[1]{\ensuremath{\mathrm{tr}\parenthesis{#1}}}
\newcommand{\kl}[2]{\ensuremath{\text{KL}\brackets{#1|| #2}}}
\newtheorem{assum}{Assumption}
\newtheorem{prop}{Proposition}
\newtheorem{conject}{Conjecture}
\newtheorem{lem}{Lemma}
\newtheorem{remark}{Remark}
\newtheorem{definition}{Definition}
\newtheorem{cor}{Corollary}
\newcommand{\patrik}[1]{\textcolor{cyan}{[\textbf{Patrik:} #1]}}
\newcommand{\luigi}[1]{\textcolor{red}{[\textbf{Luigi:} #1]}}
\newcommand{\jack}[1]{\textcolor{blue}{[\textbf{Jack:} #1]}}
\newcommand{\michel}[1]{\textcolor{teal}{[\textbf{Michel:} #1]}}
\newcommand{\georg}[2][]{\todo[backgroundcolor=green!30!white, linecolor=green!50!white, #1]{Georg: #2}
}
\newcommand{\luigig}[2][]{\todo[backgroundcolor=red!30!white, linecolor=red!50!white, #1]{Luigi: #2}
}
\renewcommand{\derivative}[2]{\ensuremath{\frac{\partial #1}{\partial #2}}}
\newcommand{\RR}{\mathbb{R}}
\newcommand{\decpar}{\boldsymbol{\theta}}
\newcommand{\decod}{{{\rm f}^{\decpar}}}
\newcommand{\encpar}{\boldsymbol{\phi}}
\newcommand{\encparopt}{\widehat{\boldsymbol{\phi}}}
\newcommand{\encmean}{\boldsymbol{\mu}^{\encpar}}
\newcommand{\encmeancomp}{{\mu}^{\encpar}}
\newcommand{\encmeanopt}{\boldsymbol{\mu}^{\encparopt}}
\newcommand{\encmeancompopt}{{\mu}^{\encparopt}}
\newcommand{\encstd}{\boldsymbol{\sigma}^{\encpar}}
\newcommand{\encstdcomp}{{\sigma}^{\encpar}}
\newcommand{\encstdopt}{\boldsymbol{\sigma}^{\encparopt}}
\newcommand{\encstdcompopt}{{\sigma}^{\encparopt}}
\newcommand{\encparstar}{{\boldsymbol{\phi}^*}}
\newcommand{\encmeanstar}{\boldsymbol{\mu}^{\encparstar}}
\newcommand{\encmeancompstar}{{\mu}^{\encparstar}}
\newcommand{\encstdstar}{\boldsymbol{\sigma}^{\encparstar}}
\newcommand{\encstdcompstar}{{\sigma}^{\encparstar}}
\newcommand{\Lambdamat}{\ensuremath{\boldsymbol{\mathrm{\Lambda}}}}
\newcommand{\Umat}{\ensuremath{\boldsymbol{\mathrm{U}}}}
\newcommand{\Mmat}{\ensuremath{\boldsymbol{\mathrm{M}}}}
\newcommand{\ptheta}[1]{\ensuremath{p_{\gls{decpar}}(#1)}}
\newcommand{\xmark}{\ding{55}}%
\newacronym{mpa}{MPA}{Measure Preserving Automorphism}
\newacronym{iid}{i.i.d.}{independent and identically distributed}
\newacronym{vmf}{vMF}{von Mises-Fisher}
\newacronym{pd}{PD}{positive definite}
\newacronym{psd}{PSD}{positive semi-definite}
\newacronym{nd}{ND}{negative definite}
\newacronym{nsd}{NSD}{negative semi-definite}
\newacronym{ae}{AE}{AutoEncoder}
\newacronym{lae}{LAE}{Linear Autoencoder}
\newacronym{vae}{VAE}{Variational Autoencoder}
\newacronym{cvvae}{CV-VAE}{Constant-Variance Variational Autoencoder}
\newacronym{ivae}{iVAE}{Identifiable Variational Autoencoder}
\newacronym{rae}{RAE}{Regularized Autoencoder}
\newacronym{grae}{GRAE}{Gaussian Regularized Autoencoder}
\newacronym{lvm}{LVM}{Latent Variable Model}
\newcommand{\betavae}{$\beta$-\gls{vae}\xspace}
\newacronym{kld}{KL}{Kullback-Leibler Divergence}
\newacronym{elbo}{{\text{\upshape ELBO}}}{evidence lower bound}
\newacronym{pca}{PCA}{Principal Component Analysis}
\newacronym{ppca}{PPCA}{Probabilistic Principal Component Analysis}
\newacronym{ebm}{EBM}{Energy-Based Model}
\newacronym{icm}{ICM}{Independent Causal Mechanisms}
\newacronym{sem}{SEM}{Structural Equation Model}
\newacronym{lingam}{LiNGAM}{Linear Non-Gaussian Acyclic Model}
\newacronym{dag}{DAG}{Directed Acyclic Graph}
\newacronym{anm}{ANM}{Additive Noise Model}
\newacronym{cd}{CD}{Causal Discovery}
\newacronym{crl}{CRL}{Causal Representation Learning}
\newacronym{ica}{ICA}{Independent Component Analysis}
\newacronym{nlica}{NLICA}{nonlinear Independent Component Analysis}
\newacronym{bss}{BSS}{Blind Source Separation}
\newacronym{ima}{{\text{\upshape IMA}}}{Independent Mechanism Analysis}
\newacronym{igci}{IGCI}{Information Geometric Causal Inference}
\newacronym{nce}{NCE}{Noise Contrastive Estimation}
\newacronym{pcl}{PCL}{Permutation-Contrastive Learning}
\newacronym{tcl}{TCL}{Time-Contrastive Learning}
\newacronym{iia}{IIA}{Independent Innovation Analysis}
\newacronym{ai}{AI}{Artificial Intelligence}
\newacronym{ml}{ML}{Machine Learning}
\newacronym{dl}{DL}{Deep Learning}
\newacronym{ssl}{SSL}{Self-Supervised Learning}
\newacronym{cl}{CL}{Contrastive Learning}
\newacronym{gmc}{GMC}{Geometric Multimodal Contrastive Learning}
\newacronym{nlp}{NLP}{Natural Language Processing}
\newacronym{gdl}{GDL}{Geometric Deep Learning}
\newacronym{dnn}{DNN}{Deep Neural Network}
\newacronym{nn}{NN}{Neural Network}
\newacronym{ann}{ANN}{Artificial Neural Network}
\newacronym{mlp}{MLP}{Multi-Layer Perceptron}
\newacronym{fc}{FC}{Fully Connected}
\newacronym{ar}{AR}{AutoRegressive}
\newacronym{cn}{conv}{Convolutional layer}
\newacronym{cnn}{CNN}{Convolutional Neural Network}
\newacronym{gnn}{GNN}{Graph Neural Network}
\newacronym{rnn}{RNN}{Recurrent Neural Network}
\newacronym{lstm}{LSTM}{Long Short-Term Memory}
\newacronym{gru}{GRU}{Gated Recurrent Unit}
\newacronym{relu}{ReLU}{Rectified Linear Unit}
\newacronym{bn}{BN}{Batch Normalization}
\newacronym{gan}{GAN}{Generative Adversarial Network}
\newacronym{sgd}{SGD}{Stochastic Gradient Descent}
\newacronym{adam}{ADAM}{Adaptive Moment Estimation}
\newacronym{svd}{SVD}{Singular Value Decomposition}
\newacronym{wls}{WLS}{Weighted Least Squares}
\newacronym{dgp}{DGP}{Data Generating Process}
\newacronym{map}{MAP}{Maximum A Posteriori}
\newacronym{mle}{MLE}{Maximum Likelihood Estimation}
\newacronym{etf}{ETF}{Equiangular Tight Frame}
\newacronym{mse}{MSE}{Mean Squared Error}
\newacronym{mae}{MAE}{Mean Absolute Error}
\newacronym{ce}{{\text{\upshape CE}}}{Cross Entropy}
\newacronym{sid}{SID}{Structural Intervention Distance}
\newacronym{shd}{SHD}{Structural Hamming Distance}
\newacronym{mcc}{MCC}{Mean Correlation Coefficient}
\newacronym{mig}{MIG}{Mutual Information Gap}
\newacronym{dci}{DCI}{Disentanglement Completeness Informativeness score}
\newacronym{api}{API}{Application Programming Interface}
\newacronym{cpu}{CPU}{Central Processing Unit}
\newacronym{gpu}{GPU}{Graphics Processing Unit}
\newacronym{gt}{{\text{\upshape GT}}}{ground truth}
\newglossaryentry{obs}{
    name        = \ensuremath{\boldsymbol{x}} ,
    description = {observation vector} ,
    type        = abbrev,
}
\newglossaryentry{obscomp}{
    name        = \ensuremath{x} ,
    description = {observation single component} ,
    type        = abbrev,
}
\newglossaryentry{Obs}{
    name        = \ensuremath{\mathcal{X}} ,
    description = {observation space} ,
    type        = abbrev,
}
\newglossaryentry{obsdim}{
    name        = \ensuremath{d} ,
    description = {dimensionality of the observation space \gls{Obs}} ,
    type        = abbrev,
}
\newglossaryentry{latent}{
    name        = \ensuremath{\boldsymbol{z}} ,
    description = {latent vector} ,
    type        = abbrev,
}
\newglossaryentry{latentcomp}{
    name        = \ensuremath{z} ,
    description = {latent single component} ,
    type        = abbrev,
}
\newglossaryentry{Latent}{
    name        = \ensuremath{\mathcal{Z}} ,
    description = {latents} ,
    type        = abbrev,
}
\newglossaryentry{latentdim}{
    name        = \ensuremath{d} ,
    description = {dimensionality of the latent space \gls{Latent}} ,
    type        = abbrev,
}
\newglossaryentry{identity}{
    name        = \ensuremath{\boldsymbol{\mathrm{I}}_{\gls{obsdim}}} ,
    description = {\gls{obsdim}-dimensional identity matrix} ,
    type        = abbrev,
}
\newglossaryentry{jacobian}{
    name        = \ensuremath{\boldsymbol{\mathrm{J}}} ,
    description = {Jacobian matrix} ,
    type        = abbrev,
}
\newglossaryentry{hessian}{
    name        = \ensuremath{\boldsymbol{\mathrm{H}}} ,
    description = {Hessian matrix} ,
    type        = abbrev,
}
\newglossaryentry{cov}{
    name        = \ensuremath{\boldsymbol{\mathrm{\Sigma}}},
    description = {covariance matrix} ,
    type        = abbrev,
}
\newglossaryentry{loss}{
    name        = \ensuremath{\mathcal{L}} ,
    description = {loss function} ,
    type        = abbrev,
}
\newglossaryentry{entropy}{
    name        = \ensuremath{\mathrm{H}} ,
    description = {entropy} ,
    type        = abbrev,
}
\newglossaryentry{vaes}{type=abbrev,name=\acrlong{vae},description={\nopostdesc}}
\newglossaryentry{q}{
    name        = \ensuremath{q_{\gls{encpar}}(\gls{latent}|\gls{obs})} ,
    description = {variational posterior of the \acrshort{vae}, mapping $\gls{obs}\mapsto\gls{latent}$ parametrized by \gls{encpar}} ,
    type        = abbrev,
    parent      = vaes,
}
\newglossaryentry{qopt}{
    name        = \ensuremath{q_{\widehat{\gls{encpar}}}(\gls{latent}|\gls{obs})} ,
    description = {optimal variational posterior of the \acrshort{vae}, mapping $\gls{obs}\mapsto\gls{latent}$ parametrized by \gls{encpar}} ,
    type        = abbrev,
    parent      = vaes,
}
\newglossaryentry{encpar}{
    name        = \ensuremath{\boldsymbol{\phi}} ,
    description = {parameters of the variational posterior \gls{q}} ,
    type        = abbrev,
    parent      = vaes,
}
\newglossaryentry{encparopt}{
    name        = \ensuremath{\widehat{\boldsymbol{\phi}}} ,
    description = {optimal parameters of the variational posterior \gls{q}} ,
    type        = abbrev,
    parent      = vaes,
}
\newglossaryentry{var_family}{
    name        = \ensuremath{\mathcal{Q}} ,
    description = {distribution family of the variational posterior \gls{q} } ,
    type        = abbrev,
    parent      = vaes,
}
\newglossaryentry{pz}{
    name        = \ensuremath{p_0(\gls{latent})} ,
    description = {latent prior distribution} ,
    type        = abbrev,
    parent      = vaes,
}
\newglossaryentry{px}{
    name        = \ensuremath{p_{\gls{decpar}}(\gls{obs})} ,
    description = {marginal likelihood } ,
    type        = abbrev,
    parent      = vaes,
}
\newglossaryentry{pdata}{
    name        = \ensuremath{p(\gls{obs})} ,
    description = {data distribution } ,
    type        = abbrev,
    parent      = vaes,
}
\newglossaryentry{mean_enc}{
    name        = \ensuremath{\mu_{\gls{latent}|\gls{obs}}} ,
    description = {mean encoder of the \acrshort{vae}, \ie, $\expectation{\gls{latent}\sim\gls{q}}\parenthesis{\gls{latent}}$, mapping $\gls{obs}\mapsto\gls{latent}$} ,
    type        = abbrev,
    parent      = vaes,
}
\newglossaryentry{var_cov}{
    name        = \ensuremath{\gls{cov}^{\gls{encpar}}_{\gls{latent}|\gls{obs}}} ,
    description = {covariance matrix of \gls{q}} ,
    type        = abbrev,
    parent      = vaes,
}
\newglossaryentry{sigmak}{
    name        = \ensuremath{{\sigma}_{k}^{\gls{encpar}}(\gls{obs})^{2}} ,
    description = {variance of \gls{q} in dimension $k$} ,
    type        = abbrev,
    parent      = vaes,
}
\newglossaryentry{sigmaopt}{
    name        = \ensuremath{\boldsymbol{\sigma}^{\gls{encparopt}}(\gls{obs})^{2}} ,
    description = {optimal variance of \gls{q}} ,
    type        = abbrev,
    parent      = vaes,
}
\newglossaryentry{sigmaoptk}{
    name        = \ensuremath{{\sigma}_{k}^{\gls{encparopt}}(\gls{obs})^{2}} ,
    description = {optimal variance of \gls{q} in dimension $k$} ,
    type        = abbrev,
    parent      = vaes,
}
\newcommand{\sigmaoptkcube}{\ensuremath{{\sigma}_{k}^{\gls{encparopt}}(\gls{obs})^{3}}}
\newglossaryentry{mu}{
    name        = \ensuremath{\boldsymbol{\mu}^{\gls{encpar}}(\gls{obs})} ,
    description = {mean of \gls{q}} ,
    type        = abbrev,
    parent      = vaes,
}
\newglossaryentry{muk}{
    name        = \ensuremath{{\mu}_{k}^{\gls{encpar}}(\gls{obs})} ,
    description = {mean of \gls{q} in dimension $k$} ,
    type        = abbrev,
    parent      = vaes,
}
\newglossaryentry{muopt}{
    name        = \ensuremath{\boldsymbol{\mu}^{\gls{encparopt}}(\gls{obs})} ,
    description = {optimal mean of \gls{q}} ,
    type        = abbrev,
    parent      = vaes,
}
\newglossaryentry{muoptk}{
    name        = \ensuremath{{\mu}_{k}^{\gls{encparopt}}(\gls{obs})} ,
    description = {optimal mean of \gls{q} in dimension $k$} ,
    type        = abbrev,
    parent      = vaes,
}
\newglossaryentry{gamma}{
    name        = \ensuremath{\gamma} ,
    description = {square root of the precision of the \gls{vae} decoder} ,
    type        = abbrev,
    parent      = vaes,
}
\newglossaryentry{betaloss}{
    name        = \ensuremath{\mathcal{L}_{\beta}} ,
    description = {\betavae loss function} ,
    type        = abbrev,
    parent      = vaes,
}
\newcommand{\gsq}{\ensuremath{\gls{gamma}^2}\xspace}
\newglossaryentry{pxz}{
    name        = \ensuremath{p_{\gls{decpar}}(\gls{obs}|\gls{latent})} ,
    description = {conditional distribution of the decoded samples of the \acrshort{vae}, mapping $\gls{latent}\mapsto\gls{obs}$, parametrized by \gls{decpar}} ,
    type        = abbrev,
    parent      = vaes,
}
\newglossaryentry{pzx}{
    name        = \ensuremath{p_{\gls{decpar}}(\gls{latent}|\gls{obs})} ,
    description = {true posterior distribution of the decoded samples of the \acrshort{vae}, mapping $\gls{obs}\mapsto\gls{latent}$, parametrized by \gls{decpar}} ,
    type        = abbrev,
    parent      = vaes,
}
\newglossaryentry{decpar}{
    name        = \ensuremath{\boldsymbol{\theta}} ,
    description = {parameters of the decoder \gls{pxz}} ,
    type        = abbrev,
    parent      = vaes,
}
\newglossaryentry{invdeccomp}{
    name        = \ensuremath{{g}^{\decpar}} ,
    description = {inverse decoder component} ,
    type        = abbrev,
    parent      = vaes,
}
\newcommand{\invdeccompk}[1][k]{\ensuremath{{g}^{\decpar}_{#1}}(\gls{obs})}
\newglossaryentry{invdec}{
    name        = \ensuremath{\mathrm{\boldsymbol{g}}^{\gls{decpar}}} ,
    description = {inverse decoder} ,
    type        = abbrev,
    parent      = vaes,
}
\newglossaryentry{dec}{
    name        = \ensuremath{\mathrm{\boldsymbol{f}}^{\gls{decpar}}} ,
    description = {decoder} ,
    type        = abbrev,
    parent      = vaes,
}
\newglossaryentry{distortion}{
    name        = \ensuremath{D} ,
    description = {Distortion of \cite{alemi_fixing_2018}, the same as the reconstruction term of the \acrshort{elbo} for $\beta=1$} ,
    type        = abbrev,
    parent      = vaes,
}
\newglossaryentry{rate}{
    name        = \ensuremath{R} ,
    description = {Rate of \cite{alemi_fixing_2018}, the same as the \acrshort{kld} term of the \acrshort{elbo} for $\beta=1$} ,
    type        = abbrev,
    parent      = vaes,
}
\newglossaryentry{lindec}{
    name        = \ensuremath{\boldsymbol{\mathrm{W}}} ,
    description = {weight matrix of a linear decoder} ,
    type        = abbrev,
    parent      = vaes,
}
\newglossaryentry{linenc}{
    name        = \ensuremath{\boldsymbol{\mathrm{V}}} ,
    description = {weight matrix of a linear encoder} ,
    type        = abbrev,
    parent      = vaes,
}
\newglossaryentry{imas}{type=abbrev,name=\acrlong{ima},description={\nopostdesc}}
\newglossaryentry{mixing}{
    name        = \ensuremath{\inv{g}} ,
    description = {inverse of the learned unmixing of the \acrshort{ima}, mapping $\gls{latent}\mapsto\gls{obs}$ } ,
    type        = abbrev,
    parent      = imas,
}
\newglossaryentry{lin_mixing}{
    name        = \ensuremath{A} ,
    description = {ground-truth \emph{linear} mixing process of the \acrshort{ima}, mapping $\gls{latent}\mapsto\gls{obs}$ } ,
    type        = abbrev,
    parent      = imas,
}
\newglossaryentry{cima_local}{
    name        = \ensuremath{c_{\acrshort{ima}}} ,
    description = {local \acrshort{ima} contrast } ,
    type        = abbrev,
    parent      = imas,
}
\newglossaryentry{cima_global}{
    name        = \ensuremath{C_{\acrshort{ima}}} ,
    description = {global \acrshort{ima} contrast } ,
    type        = abbrev,
    parent      = imas,
}
\newglossaryentry{source}{
    name        = \ensuremath{s} ,
    description = {sources (\acrshort{ica} equivalent of latents)} ,
    type        = abbrev,
    parent      = imas,
}
\newglossaryentry{rec_s}{
    name        = \ensuremath{\boldsymbol{y}} ,
    description = {reconstructed sources} ,
    type        = abbrev,
    parent      = imas,
}
\newglossaryentry{p_source}{
    name        = \ensuremath{p_{\gls{latent}}} ,
    description = {source distribution} ,
    type        = abbrev,
    parent      = imas,
}
\newglossaryentry{o}{
    name        = \ensuremath{\boldsymbol{\mathrm{O}}},
    description = {orthogonal matrix} ,
    type        = abbrev,
    parent      = imas,
}
\newglossaryentry{d}{
    name        = \ensuremath{\boldsymbol{\mathrm{D}}} ,
    description = {general diagonal matrix} ,
    type        = abbrev,
    parent      = imas,
}
\newglossaryentry{scalar}{
    name        = \ensuremath{\alpha} ,
    description = {scalar field} ,
    type        = abbrev,
    parent      = imas,
}
\newglossaryentry{imaloss}{
    name        = \ensuremath{\mathcal{L}_{\gls{ima}}} ,
    description = {\gls{ima} loss function} ,
    type        = abbrev,
    parent      = imas,
}
\newcommand{\unmix}[1][\gls{obs}]{\ensuremath{\gls{invdec}\parenthesis{#1}}\xspace}
\newcommand{\gtmix}[1][\gls{latent}]{\ensuremath{\gls{dec}\parenthesis{#1}}\xspace}
\newcommand{\unmixjacobian}[1][\gls{obs}]{\ensuremath{\gls{jacobian}_{\gls{invdec}}(#1)}\xspace}
\newcommand{\mixjacobian}[1][\unmix]{\ensuremath{\gls{jacobian}_{\gls{dec}}\parenthesis{#1}}\xspace}
\newcommand{\mixjacobianbis}[1]{\ensuremath{\gls{jacobian}_{\gls{dec}}\parenthesis{#1}}}
\newcommand{\gtmixjacobian}[1][\gls{latent}]{\ensuremath{\gls{jacobian}_{\gls{dec}}\parenthesis{#1}}\xspace}
\newcommand{\imaloss}{\ensuremath{\gls{imaloss}(\gls{dec}\!, \gls{latent})}\xspace}
\newcommand{\betaloss}{\ensuremath{\gls{betaloss}(\gls{obs};\gls{decpar}, \gls{encpar})}\xspace}
\newcommand{\betalossgamma}[1][\gamma]{\ensuremath{\gls{betaloss}(\gls{obs};\gls{decpar}, \gls{encpar}, #1)}\xspace}
\newcommand{\betalossgammastar}[1][\gamma]{\ensuremath{\gls{betaloss}^*(\gls{obs};\gls{decpar}, \gls{encpar}, #1)}\xspace}
\newcommand{\elbolossgamma}[1][\gamma]{\ensuremath{\gls{elbo}(\gls{obs};\gls{decpar}, \gls{encpar}, #1)}\xspace}
\newcommand{\elbolossgammastar}[1][\gamma]{\ensuremath{\gls{elbo}^*(\gls{obs};\gls{decpar}, \gls{encpar}, #1)}\xspace}
\NewDocumentCommand{\cima}{ O{\gls{dec}} O{\gls{latent}}  }{\ensuremath{\gls{cima_local} ( #1\!,  #2) }\xspace}
\NewDocumentCommand{\Cima}{ O{\gls{dec}} O{\ensuremath{p_0}  }}{\ensuremath{\gls{cima_global} ( #1,  #2) }\xspace}
\titlespacing*{\section}{0pt}{0.2ex plus .1ex minus .1ex}{0.1ex plus .1ex minus .1ex}
\titlespacing*{\subsection}{0pt}{0.1ex plus .1ex minus .1ex}{0.05ex plus .05ex minus .05ex}
\crefname{section}{\S}{\S\S}
\crefname{figure}{Fig.}{Figs.}
\crefname{prop}{Prop.}{Props.}
\crefname{appendix}{Appx.}{Appxs.}
\crefname{theorem}{Thm.}{Thms.}
\crefname{definition}{Defn.}{Defns.}
\crefname{cor}{Corollary}{Corollaries}
\crefname{lem}{Lemma}{Lemmas}
\crefname{table}{Tab.}{Tabs.}
\crefname{assum}{Assum.}{Assums.}
  \def\gls#1{<#1>}%
  \def\glspl#1{<#1>}%
  \def\acrshort#1{<#1>}%
  \def\acrlong#1{<#1>}%
  \def\acrfull#1{<#1>}%
\title{Embrace the Gap: VAEs Perform\\ Independent Mechanism Analysis}
\author[1]{\href{mailto:patrik.reizinger@uni-tuebingen.de}{Patrik~Reizinger\thanks{Equal contribution. Code available at: \href{https://github.com/rpatrik96/ima-vae}{\texttt{github.com/rpatrik96/ima-vae}}}$\ $  }{}}
\author[2]{Luigi~Gresele$^*$}
\author[1]{Jack~Brady$^*$}
\author[2,3]{Julius~von~K{\"u}gelgen}
\author[2,4]{Dominik~Zietlow}
\author[2]{Bernhard~Schölkopf}
\author[2]{Georg~Martius}
\author[1]{Wieland~Brendel}
\author[2]{Michel~Besserve\thanks{Senior author}$\ $ }
\affil[1]{%
    University of Tübingen, Germany
}
\affil[2]{%
    Max Planck Institute for Intelligent Systems, Tübingen, Germany
}
\affil[3]{%
    University of Cambridge, Cambridge, United Kingdom 
  }
\affil[4]{%
    Amazon Web Services, Tübingen, Germany 
  }
\affil[ ]{%
    \texttt{\{patrik.reizinger,jack.brady,wieland.brendel\}@uni-tuebingen.de}\\
  \texttt{\{luigi.gresele,jvk,bs,gmartius,besserve\}@tue.mpg.de}\\ \texttt{zietld@amazon.de}
  }
\begin{document}
\doparttoc %
\faketableofcontents %

\maketitle

\iftrue

\vspace{-0.5em}
\begin{abstract}
\vspace{-0.5em}

Variational autoencoders (VAEs) are a popular framework for modeling complex data distributions; they can be efficiently trained via variational inference by maximizing the evidence lower bound (ELBO), at the expense of a gap to the exact (log-)marginal likelihood. While VAEs are commonly used for disentangled representation learning, it is unclear why ELBO maximization would yield such representations, since unregularized maximum likelihood estimation generally cannot invert the data-generating process without additional assumptions. Yet, VAEs often succeed at this task. We seek to elucidate this apparent paradox by studying nonlinear VAEs in the limit of near-deterministic decoders. We first prove that, in this regime, the optimal encoder approximately inverts the decoder---a commonly used but unproven conjecture---which we refer to as {\em self-consistency}. Leveraging self-consistency, we show that the ELBO converges to a regularized log-likelihood. This 
allows VAEs to perform what has recently been termed independent mechanism analysis (IMA): it adds an inductive bias towards decoders with column-orthogonal Jacobians, which 
helps recovering the true latent factors. The gap between ELBO and log-likelihood is therefore welcome, since it bears unanticipated benefits for nonlinear representation learning. In experiments on synthetic and image data, we show that VAEs uncover the true latent factors when the data generating process satisfies the IMA assumption.
\end{abstract}

\section{Introduction}
\label{sec:Introduction}
\glspl{lvm} 
allow to effectively approximate 
a complex %
data distribution and to sample from it%
~\citep{bishop2006pattern, murphy2012machine}. %
Deep \glspl{lvm} employ a neural network (the \textit{decoder} or \textit{generator}) to parameterize the conditional distribution of the observations given latent variables, which are typically assumed to be independent.
However, \gls{mle} of the model parameters
is computationally intractable.
In %
\textit{\glspl{vae}}
~\citep{kingma_auto-encoding_2014, rezende2014stochastic}, %
the exact log-likelihood is substituted with %
a tractable lower bound, the 
\gls{elbo}. 
This objective introduces
an approximate posterior of the latents given the observations (the \textit{encoder}%
) from a suitable variational distribution whose mean and covariance are parametrized by neural networks.
The encoder is introduced to efficiently train a deep \gls{lvm}: however, it is not explicitly designed to extract useful representations~\cite{doersch_tutorial_2021, rubenstein_2019}.

\looseness-1 
Nonetheless, \glspl{vae} and their variants are widely used in representation learning~\cite{higgins2016beta, alemi_fixing_2018}, where they often
recover semantically meaningful representations 
~\cite{kumar_variational_2018, chen2018isolating, kim2018disentangling, burgess_understanding_2018}. %
Our understanding of this empirical success is still incomplete, since (deep) \gls{lvm}s with independent latents are nonidentifiable from i.i.d.\ data~\cite{hyvarinen_nonlinear_1999, locatello_challenging_2019}; different models fitting the data equally well may yield arbitrarily different representations, thus making the recovery of a ground truth generative model 
impossible. 
While auxiliary variables, weak supervision~\cite{hyvarinen_nonlinear_2017, hyvarinen_nonlinear_2019, gresele_incomplete_2019, locatello_weakly-supervised_2020, zimmermann_contrastive_2021, halva_disentangling_2021}, or  specific model constraints~\cite{hyvarinen_nonlinear_1999, zhang2008minimal, zhang_identifiability_2012, horan_when_2021, gresele_independent_2021} can help identifiability, the mechanism through which the \gls{elbo} may enforce a useful inductive bias remains unclear, despite recent efforts~\citep{burgess_understanding_2018, rolinek_variational_2019, kumar_implicit_2020,Dai2020:usualsuspects,zietlow_demystifying_2021}.%

\looseness-1 
In this work, we investigate %
the benefits of optimizing the \gls{elbo} for representation learning
by analyzing \glspl{vae} in a {\em near-deterministic} limit for the conditional distribution parametrized by the nonlinear decoder. %
Our first result concerns the encoder's optimality in this regime. %
Previous works %
relied %
on the intuitive assumption that the encoder inverts the decoder in the optimum~\citep{nielsen_survae_2020,kumar_implicit_2020,zietlow_demystifying_2021};
we formalize this \textit{self-consistency} assumption and prove its validity for the optimal variational posterior 
in the %
near-deterministic nonlinear %
regime.

\looseness-1
Using self-consistency, we
show that the %
\gls{elbo} tends to a regularized log-likelihood---rather than to the exact one as conjectured in previous work~\cite{nielsen_survae_2020}. The regularization term allows \glspl{vae}
to perform what has been termed \gls{ima}~\cite{gresele_independent_2021}: it encourages column orthogonality of the decoder's Jacobian. This generalizes previous findings based on linearizations or approximations of the \gls{elbo}~\cite{rolinek_variational_2019, lucas_dont_2019, kumar_implicit_2020}, and
allows us to characterize the gap \wrt
the log-likelihood in the deterministic limit.
Our results elucidate the gap between \gls{elbo} and exact log-likelihood as a possible mechanism through which the \gls{elbo} implements a useful inductive bias.
Unlike the unregularized log-likelihood, 
the \gls{ima}-regularized objective can help invert the data generating process under suitable assumptions~\cite{gresele_independent_2021}.
We verify this %
by training \glspl{vae} %
in experiments on synthetic and image data, showing that they %
can recover the ground truth factors when the \gls{ima} assumptions are met.

The \textbf{contributions} of this paper can be summarized as follows:
\vspace{-.5 em}
\begin{itemize}
    [nolistsep,leftmargin=*]
    \item we characterize and prove %
    \textit{self-consistency} of \glspl{vae} %
    in the near-deterministic regime (i.e., when the decoder variance tends to zero), justifying its usage in previous works~(\cref{sec:self_const});
    \item we show that under self-consistency, the \gls{elbo} %
    converges to a regularized log-likelihood~(\cref{sec:implicit_constraints}),
    and discuss its possible role as a useful inductive bias in representation learning;
    \item we test the applicability of our theoretical results %
    in experiments on synthetic and image data, and %
    show %
    that \glspl{vae} recover the true latent factors when the \gls{ima} assumptions are met~(\cref{sec:experiments}).%
\end{itemize}

\definecolor{figblue}{HTML}{4A90E2}
\definecolor{figred}{HTML}{D0021B}

\begin{figure}[tb]
    \centering
    	\tikzset {_eibd9pq72/.code = {\pgfsetadditionalshadetransform{ \pgftransformshift{\pgfpoint{0 bp } { 0 bp }  }  \pgftransformscale{1 }  }}}
\pgfdeclareradialshading{_8pif6o2sh}{\pgfpoint{0bp}{0bp}}{rgb(0bp)=(1,0,0);
rgb(0bp)=(1,0,0);
rgb(25bp)=(1,1,1);
rgb(400bp)=(1,1,1)}

\tikzset {_t4fmnmc7s/.code = {\pgfsetadditionalshadetransform{ \pgftransformshift{\pgfpoint{0 bp } { 0 bp }  }  \pgftransformscale{1 }  }}}
\pgfdeclareradialshading{_u79fn3e4q}{\pgfpoint{0bp}{0bp}}{rgb(0bp)=(0.29,0.56,0.89);
rgb(0bp)=(0.29,0.56,0.89);
rgb(25bp)=(1,1,1);
rgb(400bp)=(1,1,1)}
\tikzset{every picture/.style={line width=0.75pt}} %

\begin{tikzpicture}[x=0.75pt,y=0.75pt,yscale=-.825,xscale=.825]
\path  [shading=_8pif6o2sh,_eibd9pq72] (381.5,102.84) .. controls (381.5,90) and (391.91,79.59) .. (404.75,79.59) .. controls (417.59,79.59) and (428,90) .. (428,102.84) .. controls (428,115.68) and (417.59,126.09) .. (404.75,126.09) .. controls (391.91,126.09) and (381.5,115.68) .. (381.5,102.84) -- cycle ; %
 \draw  [color={rgb, 255:red, 0; green, 0; blue, 0 }  ,draw opacity=1 ] (381.5,102.84) .. controls (381.5,90) and (391.91,79.59) .. (404.75,79.59) .. controls (417.59,79.59) and (428,90) .. (428,102.84) .. controls (428,115.68) and (417.59,126.09) .. (404.75,126.09) .. controls (391.91,126.09) and (381.5,115.68) .. (381.5,102.84) -- cycle ; %

\draw  [color={rgb, 255:red, 0; green, 0; blue, 0 }  ,draw opacity=1 ] (392.57,102.84) .. controls (392.57,96.11) and (398.02,90.66) .. (404.75,90.66) .. controls (411.48,90.66) and (416.93,96.11) .. (416.93,102.84) .. controls (416.93,109.56) and (411.48,115.02) .. (404.75,115.02) .. controls (398.02,115.02) and (392.57,109.56) .. (392.57,102.84) -- cycle ;
\draw  [fill={rgb, 255:red, 0; green, 0; blue, 0 }  ,fill opacity=1 ] (401.38,102.84) .. controls (401.38,100.97) and (402.89,99.46) .. (404.75,99.46) .. controls (406.61,99.46) and (408.13,100.97) .. (408.13,102.84) .. controls (408.13,104.7) and (406.61,106.21) .. (404.75,106.21) .. controls (402.89,106.21) and (401.38,104.7) .. (401.38,102.84) -- cycle ;

\path  [shading=_u79fn3e4q,_t4fmnmc7s] (157.59,111.89) .. controls (157.59,96.66) and (165.39,84.32) .. (175.02,84.32) .. controls (184.65,84.32) and (192.45,96.66) .. (192.45,111.89) .. controls (192.45,127.11) and (184.65,139.46) .. (175.02,139.46) .. controls (165.39,139.46) and (157.59,127.11) .. (157.59,111.89) -- cycle ; %
 \draw  [color={rgb, 255:red, 0; green, 0; blue, 0 }  ,draw opacity=1 ] (157.59,111.89) .. controls (157.59,96.66) and (165.39,84.32) .. (175.02,84.32) .. controls (184.65,84.32) and (192.45,96.66) .. (192.45,111.89) .. controls (192.45,127.11) and (184.65,139.46) .. (175.02,139.46) .. controls (165.39,139.46) and (157.59,127.11) .. (157.59,111.89) -- cycle ; %

\draw  [color={rgb, 255:red, 0; green, 0; blue, 0 }  ,draw opacity=1 ] (165.89,111.89) .. controls (165.89,103.91) and (169.98,97.45) .. (175.02,97.45) .. controls (180.06,97.45) and (184.15,103.91) .. (184.15,111.89) .. controls (184.15,119.86) and (180.06,126.33) .. (175.02,126.33) .. controls (169.98,126.33) and (165.89,119.86) .. (165.89,111.89) -- cycle ;
\draw  [fill={rgb, 255:red, 0; green, 0; blue, 0 }  ,fill opacity=1 ] (171.65,111.89) .. controls (171.65,110.03) and (173.16,108.51) .. (175.02,108.51) .. controls (176.89,108.51) and (178.4,110.03) .. (178.4,111.89) .. controls (178.4,113.75) and (176.89,115.26) .. (175.02,115.26) .. controls (173.16,115.26) and (171.65,113.75) .. (171.65,111.89) -- cycle ;

\draw   (103,77) .. controls (123,67) and (128,53) .. (163,51.5) .. controls (198,50) and (237,133) .. (216,151) .. controls (195,169) and (138.5,167.5) .. (118.5,137.5) .. controls (98.5,107.5) and (83,87) .. (103,77) -- cycle ;
\draw   (395.5,50) .. controls (415.5,40) and (466,42) .. (476,70) .. controls (486,98) and (502,117) .. (487,140) .. controls (472,163) and (397.5,169.5) .. (377.5,139.5) .. controls (357.5,109.5) and (375.5,60) .. (395.5,50) -- cycle ;
\draw [color={rgb, 255:red, 208; green, 2; blue, 27 }  ,draw opacity=1 ]   (180.5,107.5) .. controls (219.7,78.1) and (353.98,73.19) .. (396.98,97.48) ;
\draw [shift={(399.5,99)}, rotate = 213.02] [fill={rgb, 255:red, 208; green, 2; blue, 27 }  ,fill opacity=1 ][line width=0.08]  [draw opacity=0] (8.93,-4.29) -- (0,0) -- (8.93,4.29) -- cycle    ;
\draw [color={rgb, 255:red, 74; green, 144; blue, 226 }  ,draw opacity=1 ]   (183.51,119.87) .. controls (242.22,149.73) and (369.16,142.82) .. (401.5,109.5) ;
\draw [shift={(180,118)}, rotate = 29.31] [fill={rgb, 255:red, 74; green, 144; blue, 226 }  ,fill opacity=1 ][line width=0.08]  [draw opacity=0] (8.93,-4.29) -- (0,0) -- (8.93,4.29) -- cycle    ;

\draw (97,172.4) node [anchor=north west][inner sep=0.75pt] (diag)   {$\diag{\! \sigma _{1}^{\gls{encpar}}(\gls{obs})^{2},\! \dotsc , \sigma _{d}^{\gls{encpar}}(\gls{obs})^{2}\!}\! =\! \unmixjacobian\dfrac{1}{\gamma ^{2}} \unmixjacobian^{T}$};
\draw (585,181.9) node [anchor=north west][inner sep=0.75pt] (pxz)  {\gls{pxz}};
\draw (467,181.9) node [anchor=north west][inner sep=0.75pt] (gpxz)  {$\gls{invdec}_{*}[\gls{pxz}]$};
\draw (1.5,181.9) node [anchor=north west][inner sep=0.75pt] (q)   {\gls{q}};

\draw (405.5,55.9) node [anchor=north west][inner sep=0.75pt]    {$\mathcal{X}=\mathbb{R}^{d}$};
\draw (268,85.9) node [anchor=north west][inner sep=0.75pt]  [color={rgb, 255:red, 208; green, 2; blue, 27 }  ,opacity=1 ]  {$\gls{obs}=\gtmix$};
\draw (135.5,106.4) node [anchor=north west][inner sep=0.75pt]    {$\gls{latent}_{0}$};
\draw (267,58.4) node [anchor=north west][inner sep=0.75pt]  [color={rgb, 255:red, 208; green, 2; blue, 27 }  ,opacity=1 ]  {$\mathrm{Decoder}$};

\draw [color={rgb, 255:red, 208; green, 2; blue, 27 }  ,draw opacity=1 ][line width=1.5]  (525.19,115.03) -- (554.95,128.82)(547.91,73.78) -- (525.98,121.15) (550.7,121.34) -- (554.95,128.82) -- (546.5,130.41) (540.44,78.03) -- (547.91,73.78) -- (549.51,82.23)  ;

\draw (486.74,76.44) node [anchor=north west][inner sep=0.75pt]  [font=\footnotesize,color={rgb, 255:red, 208; green, 2; blue, 27 }  ,opacity=1 ,rotate=-359.16]  {$\frac{\partial \gls{dec}}{\partial \gls{latentcomp}_{i}}(\gls{latent}) \ $};
\draw (516.53,133.47) node [anchor=north west][inner sep=0.75pt]  [font=\footnotesize,color={rgb, 255:red, 208; green, 2; blue, 27 }  ,opacity=1 ,rotate=-359.16]  {$\frac{\partial \gls{dec}}{\partial \gls{latentcomp}_{j}}(\gls{latent}) \ $};

\draw (102.5,75.9) node [anchor=north west][inner sep=0.75pt]    {$\mathcal{Z} =\mathbb{R}^{d}$};
\draw (261,144.4) node [anchor=north west][inner sep=0.75pt]  [color={rgb, 255:red, 74; green, 144; blue, 226 }  ,opacity=1 ]  {$\gls{latent}=\unmix$};
\draw (430.5,89.4) node [anchor=north west][inner sep=0.75pt]    {$\gls{dec}( \gls{latent}_{0})$};
\draw (263,117.4) node [anchor=north west][inner sep=0.75pt]  [color={rgb, 255:red, 74; green, 144; blue, 226 }  ,opacity=1 ]  {$\mathrm{Encoder}$};

\draw [color={rgb, 255:red, 74; green, 144; blue, 226 }  ,draw opacity=1 ][line width=1.5]  (54.5,125.6) -- (93.5,125.6)(58.4,90.5) -- (58.4,129.5) (86.5,120.6) -- (93.5,125.6) -- (86.5,130.6) (53.4,97.5) -- (58.4,90.5) -- (63.4,97.5)  ;

\draw (.2,99.4) node [anchor=north west][inner sep=0.75pt]  [font=\footnotesize,color={rgb, 255:red, 74; green, 144; blue, 226 }  ,opacity=1 ]  {$\nabla \invdeccompk[i]$};
\draw (56.5,131.9) node [anchor=north west][inner sep=0.75pt]  [font=\footnotesize,color={rgb, 255:red, 74; green, 144; blue, 226 }  ,opacity=1 ]  {$\nabla \invdeccompk[j]$};

\draw [-latex] (q) -- node [above]{Cov}  (diag);
\draw [-latex] (gpxz) -- node [above]{Cov}(diag);
\draw [-latex] (pxz) -- node [above]{\gls{invdec}} (gpxz);

\end{tikzpicture}
    \caption{\small \looseness-1
    {\bf Modeling choices in \glspl{vae} promote  \textit{\acrfull{ima}}~\citep{gresele_independent_2021}.} %
    We assume a Gaussian \gls{vae}~\eqref{eq:gauss_enco_deco}, and 
    prove that in the near-deterministic regime
    the mean \textcolor{figblue}{encoder} approximatetely inverts the mean \textcolor{figred}{decoder}, $\gls{invdec} \!\!\approx\!\! \gls{dec}{}^{-1}$ ({\em self-consistency},~\cref{prop:selfconsist}). %
    \textbf{Bottom:} Closing the gap requires
    matching the covariances of the variational (LHS, \gls{q}) and the true posterior  (RHS, approximated by %
    $\gls{invdec}_*[\gls{pxz}]$, \cf~\cref{sec:implicit_constraints} for details).
    Under self-consistency, an \textcolor{figblue}{encoder} with diagonal covariance enforces a row-orthogonal \textcolor{figblue}{encoder} Jacobian \unmixjacobian---or equivalently, a column-orthogonal  \textcolor{figred}{decoder} Jacobian \gtmixjacobian.
    This regularization was termed \acrfull{ima}~\cite{gresele_independent_2021} and shown to be
    beneficial for learning the true latent factors. The connection  elucidates unintended benefits of using the \gls{elbo} for representation learning.
    } 
    \label{figure:fig1}
\end{figure}

\section{Background}
\label{sec:bg}

We will connect two unsupervised learning objectives: the \gls{elbo} in \glspl{vae}
and the \gls{ima}-regularized %
log-likelihood. Both stem from \glspl{lvm} with %
latent variables \gls{latent} distributed according to 
a \textit{prior} 
\gls{pz}, and a mapping from \gls{latent} to observations \gls{obs} given by a conditional generative model %
\gls{pxz}. 
\textbf{\acrlong{vae}s.}
Optimizing the data likelihood \gls{px} in deep \glspl{lvm}---\ie, finding decoder parameters \gls{decpar} maximizing $\int \gls{pxz}\gls{pz}d\gls{latent}$---is intractable in general, so approximate objectives are required. 
Variational approximations~\citep{struwe2000variational}
replace the true posterior \gls{pzx} by an approximate one, called the \textit{variational posterior} \gls{q}, which is a stochastic mapping $\gls{obs}\mapsto\gls{latent}$ with parameters \gls{encpar}. This allows to evaluate a tractable \acrfull{elbo}~\cite{kingma_auto-encoding_2014, rezende2014stochastic} of the model's log-likelihood that can be defined as
\begin{equation}\label{eq:elbo}
    \gls{elbo}(\gls{obs},\decpar,\gls{encpar})=\expectation{\gls{q}}\brackets{\log \gls{pxz}}-\kl{\gls{q}}{\gls{pz}}.
\end{equation}
The two terms in \eqref{eq:elbo} are sometimes interpreted as a reconstruction term measuring the sample quality of the decoder and a regularizer---the \gls{kld} between the prior and the encoder~\cite{kingma_introduction_2019}. 
The variational approximation trades off computational efficiency with a difference \wrt the exact log-likelihood, %
which is expressed alternatively as (see \cite{doersch_tutorial_2021, kingma_introduction_2019} and \cref{app:complement})
\begin{equation} \label{eq:elbo_kl_truepost}
    \gls{elbo} (\gls{obs},\decpar,\gls{encpar}) = \log \gls{px} -\kl{\gls{q}}{\gls{pzx}},
\end{equation}
where the \gls{kld} between variational and true posteriors %
characterizes the \textit{gap}: 
if the variational family of \gls{q} does not include \gls{pzx}, the \gls{elbo} will be strictly smaller than %
$\log\gls{px}$. 

\looseness-1 \glspl{vae}~\citep{kingma_auto-encoding_2014} %
rely on the variational approximation in~\eqref{eq:elbo} 
to train deep \glspl{lvm} where neural networks parametrize the {\em encoder} \gls{q} and the {\em decoder} \gls{pxz}. 
A common modeling choice constrains the variational family of \gls{q} to a factorized Gaussian with posterior means \gls{muk} and variances \gls{sigmak} 
for the $k^{th}$ factor $\gls{latentcomp}_{k}|\gls{obs}$, %
and with a diagonal covariance \gls{var_cov} %
; and  the decoder to %
a factorized Gaussian, conditional on \gls{latent}, with mean \gtmix\xspace and an isotropic covariance in \gls{obsdim} dimensions, %
\begin{equation} \label{eq:gauss_enco_deco}
    \gls{latentcomp}_{k}|\gls{obs}\sim\mathcal{N}(\gls{muk},\gls{sigmak})\,;
    \qquad
    \gls{obs}|\gls{latent} \sim \mathcal{N}\parenthesis{\gtmix,\gamma^{-2}\gls{identity}}.%
\end{equation}

\textbf{The deterministic limit of \glspl{vae}.}
The stochasticity of \glspl{vae} makes it nontrivial to relate them to generative models with deterministic decoders such as \acrlong{ica} (see paragraph below), though postulating a deterministic regime (where the decoder precision \gsq\xspace becomes infinite) is possible. %
Interestingly, \citet{nielsen_survae_2020} explored this deterministic limit and argued that \textit{deterministic} \glspl{vae} optimize an exact log-likelihood, similar to normalizing flows~\citep{rezende_variational_2016,papamakarios_normalizing_2021}. Normalizing flows model arbitrarily complex distributions using a simple base distribution \gls{pz} and nonlinear, \textit{deterministic and invertible} transformations \gls{dec}. Through a change of variables,\footnote{note that in normalizing flows the change of variables is usually expressed in terms of $\gls{invdec}=\gls{dec}{}^{-1}$} %
the likelihood of the original variables becomes
\begin{align}\label{eq:change_of_var}
    \log \gls{px} &= \log\gls{pz} - \logabsdet{\gtmixjacobian}.
\end{align}
The comparison is nontrivial, since \glspl{vae} contain an encoder and a decoder, whereas normalizing flows consist of a single architecture.  \citet{nielsen_survae_2020} made this analogy by resorting to what we call a \textit{self-consistency assumption}, stating that the \gls{vae} encoder inverts the decoder. We define self-consistency in the \textit{near-deterministic} regime: as the decoder variance goes to zero, i.e. $\gamma\to +\infty$.

\begin{definition}[(Near-deterministic) self-consistency]\label{def:selfcons} \looseness-1
For a fixed \gls{decpar}, assume that mean decoder $\gls{dec}$ is invertible with inverse $\gls{invdec}$, and that a map associates each choice of decoder parameters and observation $( \gls{decpar},\gamma, \gls{obs})$ to an encoder parameter $( \gls{decpar},\gamma, \gls{obs}) \mapsto \widehat{\gls{encpar}}( \gls{decpar},\gamma, \gls{obs})$%
, we say the \gls{vae} is self-consistent whenever
    \begin{eqnarray}
        \gls{muopt} \to \gls{invdec}(\gls{obs})\quad \mbox{and}
        \quad \gls{sigmaopt} \to  \boldsymbol{0}\,\,\mbox{, as } \gamma \to +\infty\,.
    \end{eqnarray}
\end{definition}
\looseness-1 
The encoder parameter map $\widehat{\gls{encpar}}$ reflects the choice of a particular encoder model for each $( \gls{decpar},\gamma)$ pair:\footnote{both the \gls{elbo} and $\widehat{\gls{encpar}}$ depends on the decoder precision $\gamma$: we will omit this in the following for simplicity} in~\cref{sec:self_const}, we study this problem by introducing and justifying a particular choice for \gls{encparopt} (see also~\cref{sec:limitations}). %
This self-consistency assumption %
appears central to deterministic claims~\cite{nielsen_survae_2020, kumar_implicit_2020}, %
but has not yet been proven. In particular, \citet{nielsen_survae_2020}  %
assume that taking the deterministic limit %
is well-behaved. However, \glspl{vae}' \textit{near-deterministic} properties
have not been investigated analytically.

\textbf{Identifiability, \acrshort{ica}, and \gls{ima}.}
\iftrue
\gls{ica}~\citep{comon1994independent, hyvarinen_independent_2001} models observations as the {\em mixing} of a latent vector \gls{latent} with independent components through a deterministic function $\boldsymbol{f}$, \ie, $\gls{obs} = \boldsymbol{f}(\gls{latent}), p_0(\gls{latent})= \prod_i p_0(\gls{latentcomp}_i)$.\footnote{the conditional distribution $p(\gls{obs}|\gls{latent})$ is therefore degenerate} 
In \acrshort{ica} the focus is on defining conditions under which the original latent variables can be recovered from observations---i.e., the
model is ``identifiable by design''~\cite{hyvarinen_nonlinear_2019}.
The goal %
is to learn an unmixing $\gls{invdec}$  such that the recovered components $\gls{rec_s} =\unmix$ are estimates of the true ones up to some ambiguities (e.g., permutation and element-wise nonlinear transformations). Unfortunately, the nonlinear problem is nonidentifiable without further constraints~\cite{darmois1951analyse, hyvarinen_nonlinear_1999}: any two observationally equivalent models can yield components which are arbitrarily entangled%
, thus making recovery of the ground truth factors impossible. 
This is typically shown by suitably constructed counterexamples~\cite{hyvarinen_nonlinear_1999, locatello_challenging_2019}, and it was argued to imply %
impossibility statements for unsupervised disentanglement~\cite{locatello_challenging_2019, tschannen_mutual_2020}. Identifiability can be recovered when %
{\em auxiliary} variables~\cite{hyvarinen_nonlinear_2019, gresele_incomplete_2019, khemakhem_variational_2020, halva_disentangling_2021} are available, or exploiting a temporal structure in the data~\cite{hyvarinen_nonlinear_2017, halva_hidden_2020}.

Restrictions on the mixing function class (e.g., linear~\cite{comon1994independent}) are another possibility to recover identifiability~\cite{hyvarinen_nonlinear_1999, zhang2008minimal}.
Recently,~\citet{gresele_independent_2021} proposed restricting the function class 
by taking inspiration from the 
{\em principle of independent causal mechanisms}~\cite{peters_elements_2018}, in an approach termed \acrfull{ima}.
\acrshort{ima} postulates that the latent components influence the observations ``independently'', where influences correspond to the partial derivatives $\nicefrac{\partial\gls{dec}}{\partial\gls{latentcomp}_k}$, and their non-statistical independence amounts to an orthogonality condition.
While full identifiability has not been proved for this model class%
, it was shown to rule out classical families of spurious solutions used as counterexamples to identifiability of unconstrained non-linear ICA \citep{gresele_independent_2021,buchholz_function_2022}. Moroever, \citet{buchholz_function_2022} further demonstrated local identifiability of this function class. Also, IMA constraints were empirically shown \citep{gresele_independent_2021,sliwa_probing_2022} to help recover the ground truth 
through
regularization of the %
log-likelihood in~\eqref{eq:change_of_var} with an objective $\imaloss:= \log \gls{px} - \lambda \cdot \cima $%
, where $\lambda>0$ %
and the regularization term
\cima and its expectation %
\Cima
are given by
\begin{align}\label{eq:ima_objective}
    \cima =\! 
    \sumk[d]\log\norm{\tfrac{\partial\gls{dec}}{\partial\gls{latentcomp}_k}\parenthesis{\gls{latent}}}\!-\!\logabsdet{\gtmixjacobian}
    \!; \quad \Cima\! = \expectation{\gls{pz}}\! \brackets{\cima}\!,
\end{align} 
\looseness-1 
and termed \textit{local} (resp.\ {\it global}) \gls{ima} contrast. %
When \gls{dec} is in the \gls{ima} function class (i.e., \Cima vanishes), the objective is equal to the log-likelihood; otherwise, it lower bounds it.
\fi
\section{Theory}
\label{sec:theory}

Our theoretical analysis assumes that all the model's defining densities ($\gls{pz}$, $\gls{q}$ and $\gls{pxz}$) are factorized. We also assume a Gaussian decoder, %
matching common modeling practice in \glspl{vae}. %
 \begin{assum}[Factorized VAE class with isotropic Gaussian decoder and log-concave prior] \label{assum:VAE}%
    \looseness-1 We are given a fixed latent prior and three parameterized classes of $\,\RR^d \to \RR^d$ mappings: the mean decoder class $\gls{decpar}\mapsto \gls{dec}$, and the mean and standard deviation encoder classes, $\gls{encpar}\mapsto \encmean$ and $\gls{encpar}\mapsto \encstd$ s.t. %
    \begin{enumerate}[label=(\roman*),nolistsep]
        \item  %
        $\gls{pz}\sim \prod_k m(\gls{latentcomp}_k)$, with $m$ being smooth and fully supported on $\RR$, having bounded non-positive  second-order, and bounded third-order logarithmic derivatives;%
        \item the encoder and decoder are of the form  in~\eqref{eq:gauss_enco_deco}, with isotropic decoder covariance $\nicefrac{1}{\gsq}\gls{identity}$;
        \item the variational mean and variance encoder classes are universal approximators; \item for all $\gls{decpar}$, $f^{\gls{decpar}}:\RR^d\to \RR^d$ is a bijection with inverse $\gls{invdec}$, and both are $C^2$ with bounded first and second order derivatives.  %
    \end{enumerate}
\end{assum}
Crucially, {\em both the mean encoder and the mean decoder can be nonlinear}. %
Moreover, the family of log-concave priors contains the commonly-used Gaussian distribution as a special case.
We study the {\em near-deterministic decoder} regime of such models, where
$\gamma\! \to\! +\infty$. 
This regime is expected to model data generating processes with vanishing observation noise well---in line with the typical \acrshort{ica} setting%
---and is commonly considered in theoretical analyses of \glspl{vae}, \eg, in~\cite{nielsen_survae_2020}
(which additionally assumes quasi-deterministic encoders), and in~\cite{lucas_dont_2019, kumar_implicit_2020}. 
Unlike~\citet{nielsen_survae_2020}, we consider a large but finite $\gamma$, not {\em at} the limit $\gamma\!=\!\infty,$ where the decoder is fully deterministic. In fact, for any large but finite $\gamma$, the objective is well-behaved and amenable to theoretical analysis, while the KL-divergence is undefined in the 
deterministic setting. %
The requirement in assumption \textit{(iv)} deviates from common practice in \glspl{vae}---where observations are typically higher-dimensional---but it allows to connect \glspl{vae} and exact likelihood methods such as normalizing flows~\cite{nielsen_survae_2020} (see also~\cref{sec:limitations}).

Due to considering $\gamma \to +\infty$, results are stated in the following ``big-O'' notation for an integer $p$:%
\[
f(\gls{obs},\gamma) = g(\gls{obs},\gamma) + O_{\gamma\to +\infty}(\nicefrac{1}{\gamma^p}) \iff \gamma^p\|f(\gls{obs},\gamma) - g(\gls{obs},\gamma)\| \mbox{ is bounded as } \gamma\to +\infty\,.
\]

\subsection{Self-consistency}
\label{sec:self_const}
In this section, we will prove a {\em self-consistency} result in the near-deterministic regime. This rests on characterizing optimal variational posteriors (i.e., those minimizing the \gls{elbo} gap \wrt the likelihood) for a \textit{particular point}
\gls{obs} and \textit{fixed decoder parameters }\gls{decpar}. Based on \eqref{eq:elbo_kl_truepost}, any associated optimal choice of encoder parameters satisfies %
\begin{equation}\label{eq:minKLopt}
    \gls{encparopt} (\gls{obs},\gls{decpar}) \in \arg\max_{\gls{encpar}} \, \gls{elbo}(\gls{obs};\gls{decpar},\gls{encpar})=   \arg\min_{\gls{encpar}} \, \kl{\gls{q}}{\gls{pzx}}\,.
\end{equation}
We call \textit{self-consistent \gls{elbo}} the resulting achieved value, denoted as
\begin{equation}
    \gls{elbo}^*(\gls{obs};\gls{decpar}) = \gls{elbo} (\gls{obs};\gls{decpar},\gls{encparopt}(\gls{obs},\gls{decpar})) \,.
\end{equation}
The expression in~\eqref{eq:minKLopt} corresponds to a problem of \textit{information projection}~\cite{cover_elements_1991, murphy2012machine}
of $\gls{pzx}$ onto the set of factorized Gaussian distributions. This means that given a variational family, we search for the optimal \gls{q} to minimize the \gls{kld} to \gls{pzx}. %
While such information projection problems are well studied for closed convex sets where they yield a unique minimizer \citep{csiszar2003information},  the set projected onto in our case is not convex (convex combinations of arbitrary Gaussians are not Gaussian), making this problem of independent interest. After establishing upper and lower bounds on the KL divergence (exposed in~\cref{prop:KLlowerLipschitz}-\ref{prop:KLupperLipschitz} in~\cref{appendix:kl_div_bounds})%
, we obtain the  
 following self-consistency result. 
\begin{restatable}{prop}{propselfconsist}[Self-consistency of near-deterministic \glspl{vae}]\label{prop:selfconsist}
    Under Assumption~\ref{assum:VAE}, for all \gls{obs}, \gls{decpar}, as $\gamma\to +\infty$, there exists at least one global minimum solution of %
    (\ref{eq:minKLopt}). %
    These solutions satisfy 
    \begin{equation}\label{eq:selfcons}
        \gls{muopt}=\unmix%
        +O(\nicefrac{1}{\gamma})
    \quad \mbox{and} \quad
      \gls{sigmaoptk} =  O(\nicefrac{1}{\gamma^2})\,,\ \mbox{for all }k\,.  %
    \end{equation}
\end{restatable}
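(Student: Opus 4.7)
The plan is to reduce the self-consistency statement to a tractable variational problem in finite dimension by expanding $\gls{elbo}(\gls{obs};\gls{decpar},\gls{encpar})$ (equivalently $\kl{\gls{q}}{\gls{pzx}}$) as a series in $\nicefrac{1}{\gamma}$ around the point $\gls{latent}=\unmix$, and then identifying the leading-order conditions that any minimizer must satisfy. Since the statement is pointwise in $\gls{obs}$ and $\gls{decpar}$, I fix both throughout and denote $\gls{latent}^*:=\unmix$ (which exists since $\gls{dec}$ is bijective by Assumption~\ref{assum:VAE}(iv)).

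First, I would rewrite the KL divergence minimization objective, using that $-\log \gls{pxz} = \tfrac{\gamma^2}{2}\|\gls{obs}-\gtmix\|^2 + \text{const}(\gamma)$ and that $\log\gls{pz}=\sum_k\log m(\gls{latentcomp}_k)$, to obtain (up to terms independent of $\encpar$)
\begin{equation*}
\kl{\gls{q}}{\gls{pzx}} = \tfrac{\gamma^2}{2}\,\expectation{\gls{q}}\!\bigl[\|\gls{obs}-\gtmix\|^2\bigr] - \expectation{\gls{q}}\!\bigl[\log\gls{pz}\bigr] - \tfrac{1}{2}\sum_k \log \gls{sigmak} + C.
\end{equation*}
Next, I would Taylor-expand $\gls{dec}$ around $\gls{latent}^*$: $\gls{dec}(\gls{latent}) = \gls{obs} + \gls{jacobian}_{\gls{dec}}(\gls{latent}^*)(\gls{latent}-\gls{latent}^*) + R(\gls{latent})$, with $\|R(\gls{latent})\|=O(\|\gls{latent}-\gls{latent}^*\|^2)$ by the bounded-second-derivatives assumption. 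Taking expectation under a factorized Gaussian $\gls{q}$ with mean $\gls{mu}$ and diagonal variances $\gls{sigmak}$, and writing $\gls{mu}=\gls{latent}^*+\gls{jacobian}_{\gls{dec}}^{-1}(\gls{latent}^*)\,\delta$, yields
\begin{equation*}
\tfrac{\gamma^2}{2}\expectation{\gls{q}}\!\bigl[\|\gls{obs}-\gtmix\|^2\bigr] = \tfrac{\gamma^2}{2}\|\delta\|^2 + \tfrac{\gamma^2}{2}\operatorname{tr}\!\bigl(\gls{jacobian}_{\gls{dec}}(\gls{latent}^*)\operatorname{diag}(\sigma^2)\gls{jacobian}_{\gls{dec}}(\gls{latent}^*)^T\bigr) + \mathcal{E},
\end{equation*}
where $\mathcal{E}$ collects the remainder contributions controlled by $\gamma^2\cdot(\|\delta\|+\|\sigma\|)^3$ plus boundary terms from the tails of the Gaussian encoder, which I would handle by splitting the integral on a ball of radius $\gamma^{-1/2}\log\gamma$ and exploiting the Gaussian concentration together with the bounded derivatives of $\gls{dec}$ and $\log m$. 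The prior expectation is similarly expanded to $\expectation{\gls{q}}[\log\gls{pz}] = \log p_0(\gls{mu}) + \tfrac12 \sum_k \gls{sigmak}\,\partial_k^2\log m(\gls{muk}) + O(\|\sigma\|^4)$ using the bounded third logarithmic derivative of $m$.

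Combining these, the leading-order objective in $(\delta,\sigma^2)$ becomes a strictly convex scalar problem: minimizing $\tfrac{\gamma^2}{2}\|\delta\|^2 + \sum_k \bigl(\tfrac{\gamma^2}{2}A_{kk}\sigma_k^2 - \tfrac12\log\sigma_k^2\bigr)$ where $A=\gls{jacobian}_{\gls{dec}}(\gls{latent}^*)^T\gls{jacobian}_{\gls{dec}}(\gls{latent}^*)$ is positive-definite (by invertibility of $\gls{dec}$). The first-order conditions give $\delta = O(\nicefrac{1}{\gamma})$ (arising from the $-\expectation{\gls{q}}[\log p_0]$ correction) and $\gls{sigmak}=\nicefrac{1}{\gamma^2 A_{kk}}+o(\nicefrac{1}{\gamma^2})$, which in the original coordinates reads $\gls{muopt}=\unmix+O(\nicefrac{1}{\gamma})$ and $\gls{sigmaoptk}=O(\nicefrac{1}{\gamma^2})$.

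The main obstacle is turning these first-order/local computations into a statement about the \emph{global} minimum over the non-convex set of factorized Gaussians. For this, I would invoke the KL upper and lower bounds \cref{prop:KLlowerLipschitz,prop:KLupperLipschitz} already established in the appendix: the lower bound forces any candidate minimizer to have $\gls{mu}$ within $O(\nicefrac{1}{\gamma})$ of $\gls{latent}^*$ and $\gls{sigmak}=O(\nicefrac{1}{\gamma^2})$ (since otherwise the objective exceeds the value achieved at the explicit ansatz above, computed via the upper bound); within this neighbourhood the Taylor expansion is valid, the objective becomes strictly convex in $(\delta,\log\sigma^2)$, and existence of a global minimum follows from coercivity and continuity. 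A mild regularity point to check is that the universal-approximator assumption (iii) guarantees the encoder parameter class can realize both the ansatz and the critical point up to arbitrary accuracy, so that the minimum in $\encpar$-space is attained with the claimed asymptotics.
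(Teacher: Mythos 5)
Your proposal is correct, and the step that actually carries the proposition is the same one the paper uses: the squeeze between the Lipschitz-based lower bound of \cref{prop:KLlowerLipschitz} and the $O(1)$ upper bound of \cref{prop:KLupperLipschitz}, the latter obtained by evaluating the KL gap at the explicit ansatz $\encmean(\gls{obs})=\unmix$, $\encstd(\gls{obs})=\nicefrac{1}{\gamma}$. Because the lower bound contains the coercive term $\tfrac{\gsq}{2}B^{-2}\bigl[\|\unmix-\encmean(\gls{obs})\|^2+\sum_k\gls{sigmak}\bigr]$ while the optimal value stays bounded, that term must remain $O(1)$, which already yields both rates in \eqref{eq:selfcons}; existence likewise follows, as you say, from the lower bound diverging as $\encstdcomp_k\to 0^+$, $\encstdcomp_k\to\infty$, or $\|\encmean\|\to\infty$, plus continuity on the resulting compact set. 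Where you differ is in leading with a Taylor expansion of the objective around $\unmix$ and reading the rates off the first-order conditions of the resulting convex leading-order problem: for this proposition that analysis is logically redundant (your own globalization step delivers the conclusion without it), and the paper indeed defers it to the finer results --- your stationarity computation for $\delta$ in fact gives the sharper $O(\nicefrac{1}{\gsq})$ mean rate established separately in \cref{prop:selfconsmeansqrate}, and your formula $\sigma_k^2\approx(\gsq A_{kk})^{-1}$ is essentially \eqref{eq:optl_sigma} of \cref{prop:vae_ima}. The one point your squeeze glosses over is the entropy term $-\sum_k\log\encstdcomp_k(\gls{obs})$ in the lower bound: it is unbounded below for large $\encstdcomp_k$, so it cannot simply be discarded when extracting boundedness of $\gsq\sum_k\gls{sigmak}$. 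The paper absorbs it by splitting off a fraction $\nu$ of the quadratic term and applying $-\log u+\alpha u^2/2\ge\tfrac12\log\alpha+\tfrac12$, after which the resulting $\tfrac{d}{2}\log\gsq$ cancels against the same term hidden in the constant $c(\gls{obs},\gamma)$ --- a cancellation your argument needs but does not spell out, though it is routine to supply.
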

\cref{prop:selfconsist} states that minimizing the \gls{elbo} gap (equivalently, maximizing the \gls{elbo}) \wrt the encoder parameters $\gls{encpar}$ implies in the limit of large $\gamma$ that the encoder's mean \gls{mu} %
tends to $\gls{invdec}(\gls{obs})$, the image of $\gls{obs}$ by the \textit{inverse} decoder. We can interpret this as the decoder ``inverting'' the encoder.
Additionally, the variances of the encoder will converge to zero.

Let us now consider the relevance of this result for training \glspl{vae}, \ie, maximizing the expectation of the \gls{elbo} for an observed distribution \gls{pdata}. While  maximization \textit{only} \wrt \gls{encpar}  in~\eqref{eq:minKLopt} {does not match common practice}---which is learning  \gls{decpar} and \gls{encpar} \textit{jointly}---it models this process in the limit of large-capacity encoders. Indeed, in this case,  \eqref{eq:minKLopt} can be solved for each \gls{obs} as a separate learning problem, which entails that the following inequality is satisfied for any parameter choice
\begin{multline}\label{eq:self_cons_elbo}
    \expectation{\gls{obs}\sim \gls{pdata}}\brackets{\gls{elbo} (\gls{obs};\gls{decpar},\gls{encpar})}=\medint\int \gls{pdata}\gls{elbo} (\gls{obs};\gls{decpar},\gls{encpar}) d\gls{obs}\\ 
    \leq \medint\int \gls{pdata}\gls{elbo} (\gls{obs};\gls{decpar},\gls{encparopt}(\gls{obs},\gls{decpar})) d\gls{obs}
    =: 
     \expectation{\gls{obs}\sim \gls{pdata}}\brackets{\gls{elbo}^* (\gls{obs};\gls{decpar})}\,.
 \end{multline}
  The joint optimization of encoder and decoder parameters thus reduces to optimizing the subset of pairs $(\gls{decpar},\gls{encparopt}(\gls{obs},\gls{decpar}))$, and is equivalent to optimizing the expected self-consistent \gls{elbo}, that is
  \begin{equation}\label{eq:optimreduction}
      \underset{\gls{decpar},\gls{encpar}}{\mbox{maximize}}\, \expectation{\gls{obs}\sim \gls{pdata}}\brackets{\gls{elbo} (\gls{obs};\gls{decpar},\gls{encpar})}
    \iff
           \underset{\gls{decpar}}{\mbox{maximize}}\, \expectation{\gls{obs}\sim \gls{pdata}}\brackets{\gls{elbo}^* (\gls{obs};\gls{decpar})}
  \end{equation}
\looseness-1 This problem reduction is aligned with the original purpose of the \gls{elbo}:  building a tractable but optimal likelihood approximation. Namely, (i) $\gls{elbo}^*$ depends on the same parameters as the likelihood ($\gls{obs}$, $\gamma$ and $\decpar$)%
, (ii) its gap $\kl{\gls{q}}{\gls{pzx}}$ is minimal. 
The problem reduction of \eqref{eq:optimreduction} %
allows us to compare the optimality of different decoders and \cref{prop:selfconsist} helps addressing the case of near-deterministic decoders.

\subsection{Self-consistent \gls{elbo}, \gls{ima}-regularized log-likelihood and identifiability of VAEs}%
\label{sec:implicit_constraints}
We want to investigate how the choice of \gls{q} and \gls{pxz}
implicitly regularizes the Jacobians of their means \gls{mu} and \gtmix 
in the near-deterministic regime. Exploiting %
self-consistency, we are able to precisely characterize how this happens:
we formalize this in \cref{prop:vae_ima}. %

\begin{figure}[tb]
    \centering
    \includesvg[]{figures/self_cons_mlp_gauss.svg}
    \caption{Self-consistency (\cref{prop:selfconsist}) in \gls{vae} training,  on a log-log plot, \cf \ref{subsec:exp_self_cons} for details. \textbf{Left}: convergence of \gls{sigmaoptk} to ${0}$;  \textbf{Center:} connecting  \gls{sigmaoptk}, \gsq, and the column norms of the decoder Jacobian via LHS and RHS of (\ref{eq:optl_sigma});   \textbf{Right:} convergence of \gls{muopt} to \unmix}
    \label{figure:self_cons}
\end{figure}

\begin{restatable}{theorem}{theoremvaeima}[\glspl{vae} with a near-deterministic decoder approximate the \gls{ima} objective]\label{prop:vae_ima}
    Under \autoref{assum:VAE}, the variational posterior satisfies
    \begin{equation}
      \gls{sigmaoptk}=\parenthesis{-\frac{d^2\log p_0}{d\gls{latentcomp}_k^2}(g^{\decpar}_k(\gls{obs}))+\gsq\norm{\cols{\mixjacobianbis{\gls{invdec}(\gls{obs})}}} ^{2}}^{-1} +O(\nicefrac{1}{\gamma^3})\,,
    \label{eq:optl_sigma}
    \end{equation}
    and the self-consistent \gls{elbo}~\eqref{eq:self_cons_elbo}
    approximates the \gls{ima}-regularized log-likelihood~\eqref{eq:ima_objective}:%
    \begin{align}\label{eq:imaconv}
        \gls{elbo}^* (\gls{obs};\gls{decpar})
        &=
        \log \gls{px} - \gls{cima_local}(\gls{dec}, \gls{invdec}(\gls{obs}))
        +O_{\gamma \to \infty}\left(\nicefrac{1}{\gamma^2}\right).
    \end{align}
\end{restatable}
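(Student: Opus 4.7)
The plan is to Taylor-expand the ELBO integrand around the encoder mean and exploit the bounds $\mu^{\hat\phi}(x) = g^\theta(x) + O(1/\gamma)$ and $\sigma_k^{\hat\phi}(x)^2 = O(1/\gamma^2)$ from Proposition~\ref{prop:selfconsist}. Substituting the Gaussians of~\eqref{eq:gauss_enco_deco} into~\eqref{eq:elbo} gives
\begin{equation*}
\mathrm{ELBO} = -\tfrac{\gamma^2}{2}\,\mathbb{E}_q\|x - f^\theta(z)\|^2 + \tfrac{d}{2}\log\tfrac{\gamma^2}{2\pi} + \tfrac{1}{2}\sum_k\bigl[1 + \log(2\pi\sigma_k^2)\bigr] + \mathbb{E}_q\bigl[\log p_0(z)\bigr].
\end{equation*}
Writing $z - \mu \sim \mathcal{N}(0, \mathrm{diag}(\sigma^2))$, Taylor-expanding $f^\theta$ and $\log p_0$ to second order around $\mu$, and using that odd Gaussian moments vanish while the bounded-derivative assumptions in Assumption~\ref{assum:VAE}(i),(iv) control the remainders, I obtain
\begin{equation*}
\mathbb{E}_q\|x - f^\theta(z)\|^2 = \|x - f^\theta(\mu)\|^2 + \sum_k \sigma_k^2 \|[J_f(\mu)]_{:k}\|^2 + O(1/\gamma^4),
\end{equation*}
\begin{equation*}
\mathbb{E}_q[\log p_0(z)] = \log p_0(\mu) + \tfrac{1}{2}\sum_k \sigma_k^2\, \tfrac{d^2\log p_0}{dz_k^2}(\mu_k) + O(1/\gamma^4).
\end{equation*}

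Next I would optimize this truncated objective in closed form. The leading $\sigma_k^2$-dependence of the ELBO is $-\tfrac{\gamma^2}{2}\sigma_k^2\|[J_f(\mu)]_{:k}\|^2 + \tfrac{1}{2}\log\sigma_k^2 + \tfrac{1}{2}\sigma_k^2\,\tfrac{d^2\log p_0}{dz_k^2}(\mu_k)$, whose stationary condition is exactly~\eqref{eq:optl_sigma} (replacing $\mu$ by $g^\theta(x)$ in $J_f$ introduces an error of order $O(1/\gamma^3)$ by Proposition~\ref{prop:selfconsist}). Optimizing in $\mu$ balances $-\gamma^2 J_f^\top J_f (\mu - g^\theta(x))$ against $\partial_\mu \log p_0$, which tightens the bound on the optimal mean to $\mu^{\hat\phi} = g^\theta(x) + O(1/\gamma^2)$; in particular $\gamma^2\|x - f^\theta(\mu^{\hat\phi})\|^2 = O(1/\gamma^2)$. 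Plugging $\sigma_k^{\star\,2} = (A_k + B_k)^{-1}$, with $A_k = \gamma^2\|[J_f]_{:k}\|^2$ and $B_k = -\partial_k^2\log p_0$, into the truncated ELBO, the cross-terms $-\tfrac{1}{2}\sum_k \sigma_k^{\star\,2}(A_k + B_k)$ cancel the constant $+\tfrac{d}{2}$, while expanding $\log\sigma_k^{\star\,2} = -\log(\gamma^2\|[J_f]_{:k}\|^2) + O(1/\gamma^2)$ makes all $\log\gamma$ and $\log(2\pi)$ factors cancel against $\tfrac{d}{2}\log(\gamma^2/(2\pi))$. What survives is
\begin{equation*}
\mathrm{ELBO}^*(x;\theta) = \log p_0(g^\theta(x)) - \sum_k \log \|[J_f(g^\theta(x))]_{:k}\| + O(1/\gamma^2).
\end{equation*}

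Finally, to identify the right-hand side with $\log p_\theta(x) - c_{\mathrm{IMA}}(f, g^\theta(x))$, I would apply Laplace's method to the marginal $p_\theta(x) = \int \mathcal{N}(x; f^\theta(z), \gamma^{-2} I)\, p_0(z)\, dz$: the substitution $z = g^\theta(x) + u/\gamma$ turns the integrand into a Gaussian in $u$ with precision $J_f(g^\theta(x))^\top J_f(g^\theta(x))$, yielding the change-of-variables expression $\log p_\theta(x) = \log p_0(g^\theta(x)) - \log|\det J_f(g^\theta(x))| + O(1/\gamma^2)$. Combined with $c_{\mathrm{IMA}}(f, g^\theta(x)) = \sum_k \log\|[J_f(g^\theta(x))]_{:k}\| - \log|\det J_f(g^\theta(x))|$ from~\eqref{eq:ima_objective}, this rewrites $\mathrm{ELBO}^*$ as the claimed IMA-regularized log-likelihood~\eqref{eq:imaconv}. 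The main obstacle is the bookkeeping of $\gamma$-orders across the Taylor expansion of the ELBO and the Laplace expansion of the marginal, so that higher-order Gaussian moments, Hessian cross-terms, and the Laplace-remainder integral genuinely contribute only $O(1/\gamma^2)$; here the log-concavity of $p_0$ together with the bounded second and third derivatives of $f^\theta, g^\theta$ required by Assumption~\ref{assum:VAE} are essential for controlling these remainders uniformly.
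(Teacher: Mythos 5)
Your proposal follows essentially the same route as the paper's own proof: decompose the self-consistent ELBO into reconstruction and prior terms, Taylor-expand both to second order around the encoder mean with remainders controlled by the self-consistency rates of Proposition~\ref{prop:selfconsist}, solve the resulting separable problem in $\sigma_k^2$ in closed form (yielding~\eqref{eq:optl_sigma} from the same stationarity condition), sharpen the mean rate to $O(1/\gamma^2)$ via its own first-order condition, and let the $\log\gamma$ and constant terms cancel to leave $\log p_0(g^{\theta}(x))-\sum_k\log\|[J_{f}(g^{\theta}(x))]_{:k}\|$. The only notable deviations are that you make explicit the Laplace-method identification $\log p_{\theta}(x)=\log p_0(g^{\theta}(x))-\log|\det J_{f}(g^{\theta}(x))|+O(1/\gamma^2)$, which the paper leaves implicit when matching its final expression to the IMA objective, and that your $O(1/\gamma^4)$ remainder for $\mathbb{E}_q[\log p_0(z)]$ is slightly optimistic under the bounded-third-logarithmic-derivative assumption (the paper obtains $O(1/\gamma^3)$, which still suffices since that term is not multiplied by $\gamma^2$).
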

Proof is in~\cref{sec:app_proofs}. Below, we provide a qualitative argument on the interplay between distributional assumptions in the \gls{vae} and implicit constraints on the decoder's Jacobian and its inverse%
. %

{\bf Modeling assumptions implicitly regularize the %
mean decoder class \gls{dec} under self-consistency.}
In the near deterministic regime, $\gls{px}$ gets close to the pushforward distribution of the prior by the mean decoder $\gls{dec}_*\brackets{\gls{pz}}$, which can be used to show that the true posterior ${\gls{pzx}=\gls{pxz}\gls{pz}/\gls{px}}$ is approximately the pushforward through the inverse mean decoder $\gls{invdec}_*\brackets{\gls{pxz}}$ (see \cref{app:complement} for more details). %
If we select a given latent $\gls{latent}_0$ and denote its image by $\gtmix[\gls{latent}_0]$ , then we can locally linearize \gls{invdec} by its Jacobian $\gls{jacobian}_{\gls{invdec}}=\unmixjacobian[\gtmix[\gls{latent}_0]]$, yielding a Gaussian for the pushforward distribution $\gls{invdec}_*\brackets{\gls{pxz}}$ with covariance $\nicefrac{1}{\gsq}\gls{jacobian}_{\gls{invdec}}\transpose{\gls{jacobian}_{\gls{invdec}}}$.
As the sufficient statistics of a Gaussian are given by its mean and covariance, the structure of the posterior covariance \gls{var_cov} (which is by design diagonal, \cf~\eqref{eq:gauss_enco_deco}) is crucial for minimizing the gap in~\eqref{eq:elbo_kl_truepost}. Practically, this implies that in the zero gap limit, the covariances of \gls{q} and \gls{pzx} should match, \ie,
$\nicefrac{1}{\gsq}\gls{jacobian}_{\gls{invdec}}\transpose{\gls{jacobian}_{\gls{invdec}}}$ will be diagonal with entries \gls{sigmak}
and therefore
$\gls{jacobian}_{\gls{invdec}}$
has orthogonal rows. We can express the decoder Jacobian via the inverse function theorem as $\mixjacobianbis{\gls{latent}_0}=\unmixjacobian[\gtmix[\gls{latent}_0]]^{-1}$. As the inverse of a row-orthogonal matrix has orthogonal columns, \gls{dec} satisfies the \gls{ima} principle.
Additionally, we can relate the variational posterior's variances to the column-norms of $\gls{jacobian}_{\gls{dec}}$ as $\gls{sigmak}=\nicefrac{1}{\gsq}%
    {\Vert\cols{\mixjacobianbis{\gls{latent}_0}}\Vert}^{-2}$, as predicted by~\eqref{eq:optl_sigma}.

Our argument indicates that minimizing the gap between the \gls{elbo} and the log-likelihood encourages column-orthogonality in $\gls{jacobian}_{\gls{dec}}$ by matching the covariances of \gls{q} and $\gls{invdec}_*\brackets{\gls{pxz}}$. When $\gls{q}\!=\!\gls{pzx}$, the gap is closed; %
this is only possible if the decoder is in the \gls{ima} class, for which \gls{cima_local} vanishes and %
 the \gls{elbo} {\em tends to an exact log-likelihood}. To the best of our knowledge, we are the first to prove this for nonlinear functions, extending related work for linear \glspl{vae}%
~\cite{lucas_dont_2019}.

\looseness-1
\textbf{Implications for identifiability of \glspl{vae}.} 
While previous works argued that the \gls{vae} objective favors decoders with a column-orthogonal Jacobian~\cite{rolinek_variational_2019, kumar_implicit_2020}, they did not exactly characterize how: our result shows that the self-consistent \gls{elbo} tends to a regularized log-likelihood, where the regularization term \gls{cima_local} explicitly enforces this (soft) constraint. %
Thus, it possibly explains why \glspl{vae} are successful in learning disentangled representations: namely, the \gls{ima} function class provably rules out certain spurious solutions for nonlinear \gls{ica}~\cite{gresele_independent_2021}, and the \gls{ima}-regularized log-likelihood was empirically shown to be beneficial in recovering the true latent factors. Thus, we speak about \textit{embracing the gap}, as its functional form equips \glspl{vae} with a useful inductive bias. While the \gls{ima} function class has not yet been shown to be  identifiable in the classical sense\,
such results exist for special cases
such as conformal maps ($d=2$~\cite{hyvarinen_nonlinear_1999}, generalized by the very recent work in~\cite{buchholz_function_2022}), isometries~\cite{horan_when_2021} and
for closely-related unsupervised nonlinear \gls{ica} models~\cite{zheng2022identifiability}. Moreover, \citet{buchholz_function_2022} demonstrate a \textit{local} form of identifiability for the \gls{ima} function class. 
In the following, we empirically corroborate that \glspl{vae}: 1) recover the ground truth sources when the mixing satisfies \gls{ima}, and thereby 2) achieve unsupervised disentanglement.

\section{Experiments}
\label{sec:experiments}

\looseness-1
Our experiments serve three purposes: 1) demonstrating that self-consistency holds in practice (\cref{subsec:exp_self_cons}); 2) showing the relationship of the self-consistent $\gls{elbo}^*$, the \gls{ima}-regularized and unregularized log-likelihood objectives (\cref{subsec:exp_elbo_ima_likelihood}); and 3) providing empirical evidence that the connection to the \gls{ima} function class in VAEs can lead to success in learning disentangled representations~(\cref{subsec:exp_col_gamma_dis}). More details are provided in~\cref{sec:app_exp}.

\subsection{Self-consistency in practical conditions}
\label{subsec:exp_self_cons}

\textbf{Experimental setup.} We use a 3-layer \gls{mlp} with smooth Leaky ReLU nonlinearities~\citep{gresele_relative_2020} and orthogonal weight matrices---which intentionally does not belong to the \gls{ima} class, as our results are more general. The 60,000 source samples are drawn from a standard normal distribution and fed into a \gls{vae} composed of a 3-layer \gls{mlp} encoder and decoder with a Gaussian prior. We use 20 seeds for each  $\gsq\xspace \in \braces{\expnum{1}{1};\expnum{1}{2};\expnum{1}{3};\expnum{1}{4};\expnum{1}{5}}$.\\
\textbf{Results.}
\cref{figure:self_cons} summarizes our results, featuring the \textit{logarithms} on each axes. The \textbf{left} plot shows that the posterior variances \gls{sigmak} converge to zero with a $\nicefrac{1}{\gsq}$ rate, as predicted by \eqref{eq:selfcons}. The \textbf{center} plot shows that the expression for \gls{sigmak} corresponds to \eqref{eq:optl_sigma} in the optimum of the \gls{elbo} by comparing both sides of the equation. The \textbf{right} plot shows approximate convergence of the mean encodings \gls{muopt} to $\gls{invdec}(\gls{obs})$ with a $\nicefrac{1}{\gamma}$ rate (see~\cref{sec:limitations}). As \gls{dec} is not guaranteed to be invertible, we use instead the \textit{optimal} encoder and decoder parameters to compare $\gls{dec}(\gls{muopt})$ to \gls{obs}.

\begin{figure}[tb]
	\centering
	\includesvg[]{figures/moebius_mcc_cima.svg}
	\caption{\textbf{Left:} \gls{cima_local} and \gls{mcc} for 3-dimensional Möbius mixings \textbf{Right:} \acrshort{mcc} depending on the \textit{volume-preserving linear map's}
	\gls{cima_local} ($\gsq=\expnum{1}{5})$}
	\label{figure:mcc_vs_cima_vs_gamma}
\end{figure}

\subsection{Relationship between $\gls{elbo}^*$, \gls{ima}-regularized, and unregularized log-likelihoods}
\label{subsec:exp_elbo_ima_likelihood}

\begin{wrapfigure}{r}{6.5cm}
\centering
    \includesvg[width=11.5cm,keepaspectratio]{figures/ima_elbo_likelihood.svg}
    \caption{Comparison of the $\gls{elbo}^*$, the \gls{ima}-regularized and unregularized log-likelihoods over different \gsq. Error bars are omitted as they are orders of magnitudes smaller}
    \label{figure:ima_elbo_likelihood}
\end{wrapfigure} 

\looseness-1
\textbf{Experimental setup.} We use an \gls{mlp} \gls{dec} with square upper-triangular weight matrices and invertible element-wise nonlinearities to construct a mixing not in the \gls{ima} class~\citep{gresele_independent_2021} and fix the \gls{vae} decoder to the ground truth such that \eqref{eq:change_of_var} gives the true data log-likelihood. This way, we ensure that the unregularized and \gls{ima}-regularized log-likelihoods differ and make the claim of \citet{nielsen_survae_2020} comparable to ours. With a fixed decoder, the $\gls{elbo}^*$ depends only on \gls{encpar}, therefore we only train the encoder with \gsq\xspace values from \brackets{\expnum{1}{1};\expnum{1}{5}} (5 seeds each).\\
\textbf{Results.} \cref{figure:ima_elbo_likelihood} compares the difference of the estimate of $\gls{elbo}^*$ and the unregularized/\gls{ima}-regularized log-likelihoods after convergence over the whole dataset. As the decoder and the data are fixed, $\log \gls{px}$ and \gls{cima_global} will not change during training, only the $\gls{elbo}^*$ does. The figure shows that as $\gamma\!\!\to\!\!+\infty$, $\gls{elbo}^*$ approaches \imaloss, as predicted by \cref{prop:vae_ima}, and not  $\log\gls{px}$, as stated in \citep{nielsen_survae_2020}---the difference is \gls{cima_global}. 
\subsection{Connecting the \gls{ima} principle, \gsq, and disentanglement}
\label{subsec:exp_col_gamma_dis}

\textbf{Experimental setup (synthetic).} We use 3-dimensional conformal mixings (\ie, the Möbius transform~\citep{phillips1969liouville}) from the \gls{ima}  class with \textit{uniform} ground-truth and prior distributions. Our results quantify the relationship of the decoder Jacobian's \gls{ima}-contrast and identifiability with \gls{mcc}~\citep{hyvarinen_unsupervised_2016} and show how this translates to disentanglement---we note that \gls{mcc} was already used to quantify disentanglement~\citep{zimmermann_contrastive_2021,klindt_towards_2021}. To determine whether a mixing from the \gls{ima} class is beneficial for disentanglement, we apply a volume-preserving linear map after the Möbius transform (using 100 seeds) to make $\gls{cima_local}\neq 0$. Other parameters are the same as in \cref{subsec:exp_self_cons}, with the exception of picking the best $\gsq=\expnum{1}{5}$. \\
\textbf{Results (synthetic).} The \textbf{left} of \cref{figure:mcc_vs_cima_vs_gamma} empirically demonstrates the benefits of optimizing the \gls{ima}-regularized log-likelihood. By increasing \gsq,   \gls{mcc} increases, while  \gls{cima_local} decreases, suggesting that \glspl{vae} in the near-deterministic regime encourage disentanglement by enforcing the \gls{ima} principle. The \textbf{right} plot shows that when the mixing is outside the \gls{ima} class, \gls{mcc} decreases, corroborating the benefits of \gls{ima} class mixings for disentanglement. \\
\textbf{Experimental setup (image).} \looseness-1 We train a \gls{vae} (not \betavae) with a factorized Gaussian posterior and Beta prior on a Sprites image dataset generated using the spriteworld renderer~\citep{spriteworld19} with a Beta ground truth distribution. Similar to~\citep{jack_brady_isprites_2020}, we use four latent factors, namely, \textit{x- and y-position, color and size}, and omit factors that can be problematic, such as shape (as it is discrete) and rotation (due to symmetries)~\citep{rolinek_variational_2019,klindt_towards_2021}. Our choice is motivated by~\citep{horan_when_2021,donoho_image_2005} showing that this data-generating process may approximately satisfy the \gls{ima} principle.\\
\textbf{Results (image).} The \textbf{left} of \cref{figure:dsprites} indicates that \glspl{vae} can learn the true latent factors and \gls{mcc} is \textit{anticorrelated} with \gls{cima_local}, reinforcing the hypothesis that the data-generating process belongs to the \gls{ima} class. The \textbf{center} plot compares estimated and true latent factors from the best model (scaling and permutation indeterminacies are removed), whereas the \textbf{right} plot shows the corresponding latent interpolations---thus,  connecting identifiability (measured by \gls{mcc}) to disentanglement.

\begin{figure}[tb]
    \hspace{-0.5em}
	\begin{subfigure}[h]{0.4\textwidth}
		\centering	
		\includesvg[width=1.725\textwidth, height=2.35\textheight]{figures/dsprites_mcc_cima.svg}
	\end{subfigure}
    \hspace{0.5em}
	\begin{subfigure}[h]{0.59\textwidth}
		\vspace{-1.6em}
		\includegraphics[height=7em%
		]{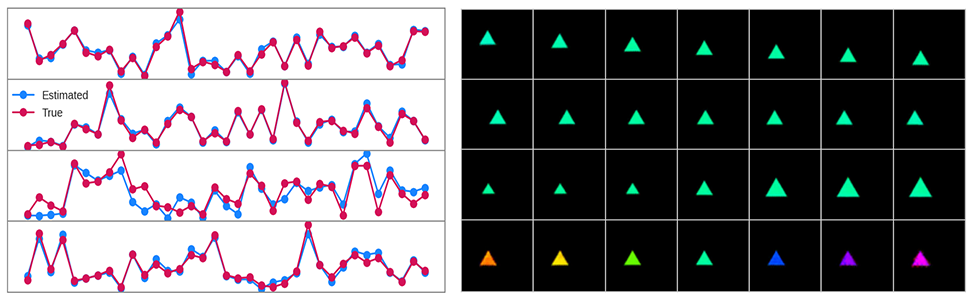}
	\end{subfigure}
	\caption{\textbf{Left:} \gls{cima_local} and \gls{mcc} for Sprites~\citep{spriteworld19} during  training $(\gsq\!=\!1)$;  \textbf{Center:} true and estimated latent factors for the best trained \gls{vae} on Sprites; \textbf{Right:} the corresponding latent interpolations and \gls{mcc} values (from top to bottom): $y$- ($0.989$), $x$-position ($0.996$), scale ($0.933$), and color ($0.989$)}
	\label{figure:dsprites}
\end{figure}

\section{Limitations}
\label{sec:limitations}
\textbf{The near-deterministic regime.} %
Our theory relies on $\gamma\!\!\to\!\!+\infty$; this is the regime where posterior collapse may be avoided~\citep{lucas_dont_2019}, and where calculating the reconstruction loss may be possible even without sampling~\citep{kumar_implicit_2020}.
However, in practice it may be unclear when \gsq is large enough. This seems to be problem-dependent~\citep{rolinek_variational_2019,lucas_dont_2019}, and possibly tied to the covariance of the observations~\cite{seitzer_pitfalls_2021, rybkin_simple_2021}.
Moreover, large values of \gsq may be %
harder %
to optimize due to an exploding reconstruction term in~\eqref{eq:elbo}. This may be one explanation for the slight deviation of~\cref{figure:self_cons}, right %
from our theory's predictions: while convergence of \gls{mu} to \gls{invdec} matches the prediction in~\cref{prop:selfconsist}, its rate is not precisely %
the one predicted for the self-consistent \gls{elbo}~\eqref{eq:self_cons_elbo}. Another cause could be the encoder's finite capacity. %
Nonetheless, we have experimentally shown that for realistic hyperparameters, \glspl{vae}' behavior matches the predictions of our theory for the near-deterministic regime.

\textbf{Dimensionality.}
The setting in~\cref{sec:theory} requires equal dimensionality for observations \gls{obs} and latents \gls{latent}, in line with work on normalizing flows~\cite{papamakarios_normalizing_2021} and nonlinear \gls{ica}~\cite{hyvarinen_nonlinear_2017, hyvarinen_nonlinear_2019, halva_hidden_2020} (but see, e.g., \cite{khemakhem_variational_2020}). For high-dimensional images, however, it is often assumed that \gls{obs} lives on a lower-dimensional manifold embedded in a higher-dimensional space, where the dimensionality of \gls{obs} is greater than \gls{latent}~\cite{dai2018diagnosing}. While our theoretical results do not cover this case, we observe empirically in \cref{figure:dsprites} that the predictions of our theory remain accurate when observations are high-dimensional images. Extending our theory to this setting could leverage ideas explored in, \eg,~\cite{dai2018diagnosing,cunningham2021change, caterini2021rectangular} and is left for future work.

\textbf{The \gls{elbo}, the self-consistent \gls{elbo}, and amortized inference.} \looseness-1
There are in principle multiple ways to obtain self-consistency (\cref{def:selfcons}). Notably, one could simply force the variational mean and variance encoder maps to behave this way; unlike~\cite{kumar_implicit_2020}, we model the actual behavior of \glspl{vae} trained under \gls{elbo} maximization, and obtain self-consistency as a result. For this, we assume that the optimal encoder, which minimizes the gap between \gls{elbo} and log-likelihood, can be learned.
This is not guaranteed in general, since it requires universal approximation capability of the encoder. 
On the other hand, \eqref{eq:self_cons_elbo} requires \textit{unamortized} inference to introduce $\gls{elbo}^*,$ which does not depend on \gls{encpar}. As in practice amortized inference may be used to efficiently estimate a single set of \gls{encpar} for all \gls{obs}~\citep{shu2018amortized}, it can lead to a suboptimal gap to the log-likelihood and discrepancies with our theoretical predictions.
\section{Discussion}
\label{sec:discussion}

\textbf{On disentanglement in unsupervised \glspl{vae}.} \looseness-1
It is widely believed that unsupervised \glspl{vae} cannot learn disentangled representations~\citep{locatello_challenging_2019,khemakhem_variational_2020}, motivating work on models with, \eg, conditional priors~\citep{khemakhem_variational_2020} or sparse decoding~\citep{moran2021identifiable}. We show that under certain assumptions,  
\gls{elbo} optimization can implement useful inductive biases for representation learning, yielding disentangled representations in unsupervised \glspl{vae}. However, while our results are formulated for~\glspl{vae}, some of the most successful models at disentanglement are modifications thereof---e.g., $\beta$-\glspl{vae}~\cite{higgins2016beta, burgess_understanding_2018}, with an additional parameter $\beta$ multiplying the \gls{kld} in~\eqref{eq:elbo}. 
While they %
deviate from the information projection setting considered in~\cref{sec:self_const}, their objectives are equivalent to the \gls{elbo} in a sense described in \cref{subsec:beta_vae_params}, which allows us to derive convergence to the \gls{ima}-regularized likelihood objective for $\nicefrac{\gamma}{\sqrt{\beta}}\to +\infty$. This encompasses the deterministic limit, and also the setting $\beta \to 0$ with constant $\gamma$ described in \cite{kumar_implicit_2020}. Whether this theoretical regime matches common practice remains an open question. 
Overall, we stress that we uncover {\em one} possible mechanism through which \glspl{vae} may achieve disentanglement. By connecting to \gls{ima}~\cite{gresele_independent_2021}, we discuss implications on recovering the ground truth under suitable assumptions, extending uniqueness results presented in~\cite{kumar_implicit_2020}. We speculate that our success in disentanglement is probably due to selecting data sets where the mixing is in the \gls{ima} class (\cf~\citep{horan_when_2021,donoho_image_2005}), which presumably was not the case in \citep{locatello_challenging_2019}. 

\textbf{Characterizing the \gls{elbo} gap for nonlinear models.} ~\cref{prop:vae_ima} characterizes the gap between \gls{elbo} and true log-likelihood for nonlinear \glspl{vae}, and extends the linear analysis of \citet{lucas_dont_2019} and the results of \citet{dai2018connections} in the affine case; we also empirically characterize the gap in the deterministic limit in~\cref{subsec:exp_elbo_ima_likelihood}. An unanticipated consequence of this result is that---consistent with \citep{lucas_dont_2019}---\glspl{vae} optimize the \gls{ima}-regularized log-likelihood in the near-deterministic limit, and not the unregularized one, as stated in~\citep{nielsen_survae_2020}. 

\textbf{Extensions to related work.}
Several papers discuss the (near-)deterministic regime~\citep{nielsen_survae_2020,rolinek_variational_2019,kumar_implicit_2020,dai2018diagnosing}. 
For example,~\citet{nielsen_survae_2020} postulate a deterministic \gls{vae} with the encoder inverting the decoder. Also~\citet{kumar_implicit_2020} work in that regime, %
but without justifying the relationship between the encoder and decoder. Although they show that the choice of \gls{pz} and \gls{q} influences 
uniqueness (by, \eg, ruling out rotations), this does not imply recovering the true latents. Our approach formalizes (\cref{def:selfcons}), proves (\cref{prop:selfconsist}), and demonstrates the practical feasibility of (\cref{sec:experiments})
the near-deterministic regime. To the best of our knowledge, all previous work relied on the linear case~\cite{lucas_dont_2019} or a (linear) approximation and the evaluation of the \gls{elbo} \textit{around a point} to show the inductive bias on the decoder Jacobian. However, our main result (\cref{prop:vae_ima}) yields a nonlinear equation where the decoder Jacobian can be evaluated at \textit{any point} and is equipped with a convergence bound. Moreover, the consistency of \gls{vae} estimation for identifiable models~\cite{khemakhem_variational_2020} requires guarantees on \gls{q}; our result helps proving these. \citet{dai2018diagnosing} use a non-factorized Gaussian variational posterior and prove in their Thm.~2 (including the $\dim \gls{obs}=\dim\gls{latent}$ case) that in the deterministic limit their $\kappa$-simple \gls{vae} can fit perfectly arbitrary observed data (barring few assumptions), while the \gls{elbo} gap tends to zero. In contrast, we use a factorized variational posterior; this prevents the \gls{elbo} gap to vanish in the deterministic limit, except in the special case of a decoder mean in the \gls{ima} class fitting the data perfectly. 
\citet{dai2018diagnosing} take the limit of ${\gamma\to+\infty}$ (here using $\gamma$ as the square root of the decoder precision and not the decoder variance as used in \cite{dai2018diagnosing}) to relate encoder and decoder properties in this limit in their Thm.~5, similarly to \cref{prop:selfconsist}. In contrast to our nonlinear analysis, this is derived when optimizing \wrt both encoder and decoder parameters, and with a non-factorized encoder assumption, leading to fundamentally different behavior of the solutions in the deterministic limit. The work done by \citet{sliwa_probing_2022}, simultaneously to ours, showcases an extensive empirical study highlighting that the \gls{ima} contrast allows distinguishing true and spurious solutions for a broad range of cases and outperforms standard regularizers such as weight decay. 
We discuss extended connections to the literature in \cref{sec:app_related} and \cref{sec:app_ext_ima_vae}.

\textbf{Covariance structure and \gls{ima}.}
We have shown that specific choices for encoder and decoder covariances regularize the decoder Jacobian, such that closing the \gls{elbo} gap constrains the decoder to belong to the \gls{ima} class. Following our intuition (\cref{figure:fig1}), assuming factorized \gls{q} and isotropic \gls{pxz}, \gls{ima} holds only for the \textit{decoder}; since in the other direction the pushforward of \gls{q} through \gls{dec} has covariance $\gtmixjacobian\gls{var_cov}\transpose{\gtmixjacobian},$ which cannot be used to make row orthogonality statements on $\gtmixjacobian$ in the general case. Additionally, we conjecture that assuming an isotropic encoder would constrain \gls{ima} to hold in both encoding and decoding directions (as both \gtmixjacobian and \unmixjacobian need to be column-orthogonal), such that the resulting decoder mean is constrained to have orthogonal columns of equal norms, which is a defining property of conformal maps \citep{buchholz_function_2022}. 
On the other hand, we conjecture that if the observation model is not isotropic, but the encoder model is, \gls{ima} would only tend to be enforced for the mean \textit{encoder} Jacobian, converging to the inverse decoder mean in the deterministic limit.   %

\textbf{Implications for recovering the true latent factors using unsupervised \glspl{vae}.} \looseness-1
Convergence of the \gls{elbo} to the \gls{ima}-regularized log-likelihood suggests that unsupervised \glspl{vae} may recover the true factors of variation according to current identifiability results of the \gls{ima} class~\cite{buchholz_function_2022}. This is based on the following reasoning: \textit{If the ground truth generative model belongs to the IMA class, unsupervised learning of the model with an infinite capacity \gls{vae} will, in the deterministic limit, ensure a solution that perfectly fits the data and whose decoder mean is also in the IMA class (by joint optimization of both the likelihood and the regularization term). 
Identifiability of the IMA class implies that the \gls{vae} will learn the true decoder (up to acceptable ambiguities); then, since self-consistency guarantees that the encoder inverts the decoder, the encoder infers the ground truth generative factors associated to observations.}
Although strict identifiability for all functions in the \gls{ima} class remains to be proven, three concurrent papers provide guarantees that go towards identifiability: \citet{leemann_disentangling_2022} proves identifiability for a subset of the \gls{ima} class in the context of concept discovery; \citet{zheng_identifiability_2022} shows identifiability of nonlinear \gls{ica} by assuming a specific sparsity structure of the decoder Jacobian (called \textit{structural sparsity}); whereas \citet{buchholz_function_2022} introduce the concept of \textit{local identifiability} and proves that \gls{ima} is locally identifiable. 

\looseness-1
Moreover, as mentioned in the above paragraph, we suspect that closing the \gls{elbo} gap with an isotropic encoder (while the encoder in \cref{prop:vae_ima} is only constrained to have diagonal covariance) constrains the decoder to be a conformal map. This is an interesting constraint, as nonlinear \gls{ica} with conformal mixings are identifiable: the two-dimensional case was first addressed with some additional constraints in~\citep{hyvarinen_nonlinear_1999}, while the general case (in arbitrary dimension) was shown to rule out certain spurious solutions for conformal mixings~\citep{gresele_independent_2021}, and finally proven to be identifiable by \citet{buchholz_function_2022} in concurrent work. Hence, we conjecture that given a ground truth generative model with a conformal map from latent to observation space, and an unsupervised \glspl{vae} with isotropic Gaussian encoders and decoders, the true latent factors can be recovered.

\textbf{Conclusion.}
\looseness-1
We provide a theoretical justification for \glspl{vae}' widely-used self-consistency assumption in the near-deterministic regime of small decoder variance. Using this result, we show that %
the self-consistent \gls{elbo} converges to the \gls{ima}-regularized log-likelihood, and not to the unregularized one. Thus, we can characterize the gap between \gls{elbo} and true log-likelihood and reason about its role as an inductive bias for representation learning in nonlinear \glspl{vae}. We characterize a set of assumptions under which unsupervised \glspl{vae} can be expected to disentangle and we demonstrate this behavior in experiments on synthetic and image data.%

\fi

\begin{ack}
The authors thank the anonymous reviewers for their suggestions. This work was supported by the German Federal Ministry of Education and Research (BMBF): Tübingen AI Center, FKZ: 01IS18039A \& 01IS18039B, and by the Machine Learning Cluster of Excellence, EXC number 2064/1 – Project number 390727645. 
Wieland Brendel acknowledges financial support via an Emmy Noether Grant funded by the German Research Foundation (DFG) under grant no. BR 6382/1-1. 
The authors thank the International Max Planck Research School for Intelligent Systems (IMPRS-IS) for supporting Dominik Zietlow and Patrik Reizinger. Patrik Reizinger acknowledges his membership in the European Laboratory for Learning and Intelligent Systems (ELLIS) PhD program.
\end{ack}

\bibliographystyle{plainnat}
\bibliography{processed_references}

\setglossarysection{subsection}
\printacronyms
\printglossary[type=abbrev, style=tree]

\newpage
\section*{Checklist}

\begin{enumerate}

\item For all authors...
\begin{enumerate}
  \item Do the main claims made in the abstract and introduction accurately reflect the paper's contributions and scope?
    \answerYes{See \cref{sec:theory}, \cref{sec:experiments}, \cref{sec:discussion}}
  \item Did you describe the limitations of your work?
    \answerYes{See \cref{sec:discussion}}
  \item Did you discuss any potential negative societal impacts of your work?
    \answerYes{See \cref{sec:app_impact}}
  \item Have you read the ethics review guidelines and ensured that your paper conforms to them?
    \answerYes{We use no data about human subjects.}
\end{enumerate}

\item If you are including theoretical results...
\begin{enumerate}
  \item Did you state the full set of assumptions of all theoretical results?
    \answerYes{See \cref{sec:theory}}
        \item Did you include complete proofs of all theoretical results?
    \answerYes{See \cref{sec:theory}, \cref{sec:app_proofs}}
\end{enumerate}

\item If you ran experiments...
\begin{enumerate}
  \item Did you include the code, data, and instructions needed to reproduce the main experimental results (either in the supplemental material or as a URL)?
    \answerYes{See the supplementary material}
    \item Did you specify all the training details (e.g., data splits, hyperparameters, how they were chosen)?
    \answerYes{See \cref{sec:experiments}, \cref{sec:app_exp}}
        \item Did you report error bars (e.g., with respect to the random seed after running experiments multiple times)?
    \answerYes{See \cref{sec:experiments}}
        \item Did you include the total amount of compute and the type of resources used (e.g., type of GPUs, internal cluster, or cloud provider)?
    \answerYes{See \cref{sec:app_exp}}
\end{enumerate}

\item If you are using existing assets (e.g., code, data, models) or curating/releasing new assets...
\begin{enumerate}
  \item If your work uses existing assets, did you cite the creators?
    \answerYes{See \cref{sec:experiments} and the supplementary material}
    \item Did you mention the license of the assets?
    \answerYes{See the supplementary material}
  \item Did you include any new assets either in the supplemental material or as a URL?
    \answerYes{We included the code and the experiment logs.}
  \item Did you discuss whether and how consent was obtained from people whose data you're using/curating?
    \answerNA{}
  \item Did you discuss whether the data you are using/curating contains personally identifiable information or offensive content?
    \answerNA{}
\end{enumerate}

\item If you used crowdsourcing or conducted research with human subjects...
\begin{enumerate}
  \item Did you include the full text of instructions given to participants and screenshots, if applicable?
     \answerNA{}
  \item Did you describe any potential participant risks, with links to Institutional Review Board (IRB) approvals, if applicable?
     \answerNA{}
  \item Did you include the estimated hourly wage paid to participants and the total amount spent on participant compensation?
     \answerNA{}
\end{enumerate}

\end{enumerate}
\newpage

\appendix

\addcontentsline{toc}{section}{Appendix} %
\part{Appendix} %
\parttoc %
\newpage
\section{Complementary notes}
\label{app:complement}

\subsection{\gls{elbo} decompositions}

\paragraph{Connection between \eqref{eq:elbo} and \eqref{eq:elbo_kl_truepost}.}
Here we show how the two decompositions of the \gls{elbo} objective in~\eqref{eq:elbo} and~\eqref{eq:elbo_kl_truepost} can be connected. We start from equation~\eqref{eq:elbo_kl_truepost}:
\begin{equation*} 
    \gls{elbo} (\gls{obs},\decpar,\gls{encpar}) = \log \gls{px} -\kl{\gls{q}}{\gls{pzx}}.
\end{equation*}
By definition of KL-divergence, and applying Bayes rule, %
we get
\begin{multline*}
    \gls{elbo} (\gls{obs},\decpar,\gls{encpar}) = \log \gls{px} -\int \gls{q}\parenthesis{\log \gls{q}-\log\gls{pzx}}d\gls{latent}\\
    =\log \gls{px} -\int \gls{q}\parenthesis{\log \gls{q}-\log\parenthesis{\gls{pxz}\dfrac{\gls{pz}}{\gls{px}}}}d\gls{latent}\,.
\end{multline*} 
We observe that the two terms involving $\gls{px}$ cancel, resulting in
\begin{equation*}
    \gls{elbo} (\gls{obs},\decpar,\gls{encpar}) = -\int \gls{q}\parenthesis{\log \gls{q}-\log\parenthesis{\gls{pxz}\gls{pz}}}d\gls{latent},
\end{equation*} 
which leads to \eqref{eq:elbo} by rearranging the terms:
\begin{equation*}%
    \gls{elbo}(\gls{obs},\decpar,\gls{encpar})=\expectation{\gls{q}}\brackets{\log \gls{pxz}}-\kl{\gls{q}}{\gls{pz}}.
\end{equation*}
\paragraph{Expressions for the two terms in equation~\eqref{eq:elbo} under \cref{assum:VAE}.}
The above two terms take the following form in our setting. %
For the second (``\gls{kld}'') term, we get
\begin{multline*}
-\kl{\gls{q}}{\gls{pz}}=\int \gls{q}\log \gls{pz}d\gls{latent}-\int \gls{q}\log \gls{q}d\gls{latent}\\=\mathbb{E}_{\gls{q}}[\log(\gls{pz})]+H(\gls{q})\,,
\end{multline*}
where $H$ denotes the entropy.
Writing the expression for the entropy of univariate Gaussian variables ($\nicefrac{1}{2}\log (2\pi \sigma^2)+\nicefrac{1}{2}$), we have under \cref{assum:VAE}
\[
H(\gls{q}) = \frac{d}{2}\left(\log (2\pi) +1\right) +  \frac{1}{2}\sumk[d] \log \gls{sigmak} = \kappa_d +  \frac{1}{2}\sumk[d] \log \gls{sigmak}, 
\]
where we introduce the dimension dependent constant $\kappa_d=\frac{d}{2}\left(\log (2\pi) +1\right).$
This leads to 
\begin{equation}\label{eq:postreg}
-\kl{\gls{q}}{\gls{pz}}=\mathbb{E}_{\gls{q}}[\log(\gls{pz})] +  \frac{1}{2}\sumk[d] \log \gls{sigmak} + \kappa_d\,.
\end{equation}

The first (``reconstruction'') term, under the isotropic Gaussian decoder of \cref{assum:VAE}, takes the form
\begin{equation}
    \expectation{\gls{q}}\brackets{\log \gls{pxz}}=-\frac{\gsq }{2}\expectation{\gls{q}}\brackets{\|\gls{obs}-\gtmix\|^2}+d\log\gamma-\frac{d}{2}\log(2\pi)\,.
    \label{eq:rec_term}
\end{equation}

\paragraph{Expression for the gap between \gls{elbo} and log-likelihood}
Let us now write the \gls{kld} divergence between variational and true posteriors, which is the gap appearing in \eqref{eq:elbo_kl_truepost}.
\[
\kl{\gls{q}}{\gls{pzx}} = -\int \gls{q}\log \gls{pxz}  d\gls{latent} - H(\gls{q})
\]
Using again the expression of the entropy of Gaussian variables, this leads to
\[
\kl{\gls{q}}{\gls{pzx}} = -\int \gls{q}\log \gls{pxz}  d\gls{latent} - \sumk[d] \log \encstdcomp_k (x) -\frac{d}{2}\left(\log (2\pi) +1\right),
\]
such that, using the Bayes formula for the true posterior and Assum.~\ref{assum:VAE}, we get
\begin{multline}
    \kl{\gls{q}}{\gls{pzx}} = -\sumk[d] \log \encstdcomp_k (\gls{obs})+c(\gls{obs},\gamma)\\+\frac{1}{2}\mathbb{E}_{z\sim q_{\encpar}(\cdot|\gls{obs})}\left[\|\gls{obs}-\gtmix\|^2\gamma^2  -2\sumk[d]\log m(z_k)\right] \,,
    \label{eq:KLgeneralexp}
\end{multline}
with additive constant $c(\gls{obs},\gamma)= -\frac{d}{2}\left(\log (\gamma^2)+1\right)+\log p_{\decpar}(\gls{obs})$.
Note the $\log(2\pi)$ term in the previous expression cancels with the one coming from the true log posterior. 

The analysis of the optima of~\eqref{eq:KLgeneralexp} is non-trivial due to the second term which involves taking expectations of  functions of \gls{latent} \wrt its posterior distribution $q_{\encpar}$ parameterized by $\encmean$ and $\encstd$. Much of the derivations to obtain our results will revolve around constructing bounds that no longer involve such expectations, but instead only depend on $\encmean$ and $\encstd$.

\subsection{Justification of the intuition}

We add here more qualitative details to the statement of \autoref{sec:implicit_constraints} that the true posterior density is approximately the pushforward of $\ptheta{\gls{obs}|\gls{latent}=\gls{latent}_0}$. Note that they are not meant to replace a rigorous treatment, which is deferred to \cref{sec:app_proofs}. 

As the decoder becomes deterministic, the marginal observed density becomes the pushforward of the latent prior by $\gls{dec}$ \footnote{because the conditional distribution of the decoder tends to a Dirac measure  at $\gls{dec}$} such that
\[
\gls{px} \approx p_0 \parenthesis{\gls{invdec}(\gls{obs})} |\unmixjacobian|\,.
\]
The true posterior is therefore approximately
\[
\gls{pzx} = \gls{pxz}\gls{pz}/\gls{px} \approx \gls{pxz}\gls{pz}/p_0 \parenthesis{\gls{invdec}(x)}  |\unmixjacobian|^{-1}\,.
\]
Conditioning on a given observation $\gls{obs}=\gtmix[\gls{latent}_0]$, we get
\begin{multline*}
\ptheta{\gls{latent}|\gls{obs}=\gtmix[\gls{latent}_0]} 
= \ptheta{\gtmix[\gls{latent}_0]|\gls{latent}}\gls{pz}/\ptheta{\gls{obs}=\gtmix[\gls{latent}_0]} 
\\
\approx 
\ptheta{\gtmix[\gls{latent}_0]|\gls{latent}}\gls{pz}/p_0 \parenthesis{\gls{invdec}(\gtmix[\gls{latent}_0])}  |\unmixjacobian[\gtmix[\gls{latent}_0]]|^{-1}\\
\approx \ptheta{\gtmix[\gls{latent}_0]|\gls{latent}}\gls{pz}/p_0 \parenthesis{\gls{latent}_0}  |\unmixjacobian[\gtmix[\gls{latent}_0]]|^{-1}\\
\end{multline*}
Neglecting the variations of the prior relative to those of the posterior (due to near-determinism), we make the approximation $\gls{pz}\approx p_0 \parenthesis{\gls{latent}_0}$ such that the above approximation becomes 
\[
\ptheta{\gls{latent}|\gls{obs}=\gtmix[\gls{latent}_0]} 
\approx \ptheta{\gtmix[\gls{latent}_0]|\gls{latent}}|\mixjacobian[\gls{latent}_0]|\,.
\]
Using the isotropic Gaussian decoder assumption, we get
\begin{align*}
\ptheta{\gls{latent}|\gls{obs}=\gtmix[\gls{latent}_0]}
\approx\dfrac{\gamma^d}{\sqrt{2\pi}^d}\exp\parenthesis{-\dfrac{\gsq}{2}\normsquared{\gtmix[\gls{latent}_0]-\gtmix[\gls{latent}]}}|\mixjacobian[\gls{latent}_0]|\,.
\end{align*}
In the near-deterministic regime, this posterior distribution should be concentrated in the region where $\gls{latent}$ is close to $\gls{latent}_0$, we can then further approximate this density using a Taylor formula
\begin{multline*}
\ptheta{z|\gls{obs}=\gtmix[\gls{latent}_0]}
\approx \frac{\gamma^d}{\sqrt{2\pi}^d}\exp\parenthesis{-\dfrac{\gsq}{2}\normsquared{\mixjacobian[\gls{latent}_0]\parenthesis{\gls{latent}_0-\gls{latent}}}}|\mixjacobian[\gls{latent}_0]|\\
=\frac{\sqrt{2\pi}^{-d}\gamma^d}{\sqrt{\left|\mat{G} \transpose{\mat{G}} \right|}}
\exp\parenthesis{-\frac{1}{\gsq}\transpose{\parenthesis{\gls{latent}_0-\gls{latent}}}\parenthesis{\mat{G} \transpose{\mat{G}}}^{-1}\parenthesis{\gls{latent}_0-\gls{latent}}}\,,
\end{multline*}
with $\mat{G}=\unmixjacobian[\gtmix[\gls{latent}_0]]=\mixjacobian[\gls{latent}_0]^{-1}$,
which is also matching the expression of the pushforward of the Gaussian density $\ptheta{\gls{obs}|\gls{latent}=\gls{latent}_0}$ by the linearization of $\gls{invdec}$ around $\gtmix[\gls{latent}_0]$ (i.e. replacing the mapping by its Jacobian at that point, $\mat{G}$).
\subsection{A connection between the $\beta$ parameter of \betavae{}s and the decoder precision $\gamma^2$}
\label{subsec:beta_vae_params}

In the context of disentanglement, a commonly used variant of standard \glspl{vae}~\citep{kingma_auto-encoding_2014} is the \betavae%
~\citep{chen2018isolating,higgins2016beta,kim2018disentangling,rolinek_variational_2019,kumar_implicit_2020}. %
In this model, an additional parameter $\beta$ is added to modify the weight of the \gls{kld} term in \eqref{eq:elbo}%
, whereas the decoder precision $\gamma^2$ is typically set to one~\citep{dai2018diagnosing,ghosh_variational_2020,kumar_implicit_2020,rolinek_variational_2019}.
The $\beta$-VAE objective~\cite{higgins2016beta} can be written as %
\begin{align}
    \betaloss &= \expectation{\gls{q}}\brackets{\log \gls{pxz}} -\beta KL\brackets{\gls{q}\|\gls{pz}}\,,\quad \beta > 0\,.
    \label{eq:betavae}
\end{align}

The influence of the decoder precision \gsq and the $\beta$ parameters on the objective have been related in the literature, see for example~\citep[][\S~2.4.3]{doersch_tutorial_2021}---and similar observations can be found in~\citep[][\S~3.1]{rybkin_simple_2021}.
Under the assumption of a Gaussian decoder, the \gls{elbo} from \cref{eq:elbo} can be written as
(making now explicit mention of its decoder parameter $\gamma$ in parenthesis):
\begin{align}
   \elbolossgamma &= -\kl{\gls{q}}{\gls{pz}} + \expectation{\gls{q}}\brackets{\log \gls{pxz}} 
 \nonumber  \\
 &=-\kl{\gls{q}}{\gls{pz}} -\frac{\gsq }{2}\expectation{\gls{q}}\!\brackets{\normsquared{\gls{obs}-\gtmix}}\!+\!c(\gamma,d)\nonumber\,, %
\end{align}
with $c(\gamma,d)=\gls{obsdim}\log\gamma
\!-\!\frac{d}{2}\log(2\pi) $. 

In contrast, the $\beta$-VAE objective $\betaloss$ (also with explicit mention of $\gamma$) is expressed as: 

\begin{align*}
	\betalossgamma &= -\beta \kl{\gls{q}}{\gls{pz}} + \expectation{\gls{q}}\brackets{\log \gls{pxz}} 
	\\
	&=-\beta \kl{\gls{q}}{\gls{pz}} -\frac{\gsq }{2}\expectation{\gls{q}}\!\brackets{\normsquared{\gls{obs}-\gtmix}}\!+\!c(\gamma,d) \\
\end{align*}
We thus can link this expression to an \gls{elbo} as follows: 
\begin{multline*}
  	\betalossgamma 	=\beta \big{[}-\kl{\gls{q}}{\gls{pz}} -\frac{\gsq }{2\beta}\expectation{\gls{q}}\!\brackets{\normsquared{\gls{obs}-\gtmix}}\!\\
  	+\!\frac{1}{\beta}c(\gamma,d)\big] \\
    =\beta \big(-\kl{\gls{q}}{\gls{pz}} -\frac{\left(\frac{\gamma}{\sqrt{\beta}}\right)^2 }{2}\expectation{\gls{q}}\!\brackets{\normsquared{\gls{obs}-\gtmix}}\\
       	\quad +c\left(\frac{\gamma}{\sqrt{\beta}},d\right)\!-c\left(\frac{\gamma}{\sqrt{\beta}},d\right)+\!\frac{1}{\beta}\left(\gls{obsdim}\log ( \gamma)
    \!-\!\frac{d}{2}\log(2\pi)\right)\big)  \\
    =\beta \left[\elbolossgamma[\frac{\gamma}{\sqrt{\beta}}] \!+\!d\frac{\log \gamma}{\beta}-d\log(\frac{\gamma}{\sqrt{\beta}})%
    \!+\left(1-\frac{1}{\beta}\right)\!\frac{d}{2}\log(2\pi)\right]\,.
\end{multline*}
When we restrict ourselves to common practice, the optimizations of both the \gls{elbo} and the \gls{betaloss} are performed with a fixed value of $\gamma$ %
(and with fixed $\beta$ for \gls{betaloss}). This entails that %
there is an equivalence of the solutions resulting from the optimization of each objective, as they differ only by additive and multiplicative constants. In particular, given the above expression, the following result is immediate:
\begin{prop}\label{prop:beta_vae_ima}
Let us define the self-consistent $\beta$-VAE objective as 
\begin{equation}
    \betalossgammastar[\gamma] = \min_{\encpar}\betalossgamma \,.
\end{equation}
Then the following normalized self-consistent $\beta$-VAE objective
\begin{equation}
\frac{1}{\beta}\left(\betalossgammastar[\frac{\gamma}{\sqrt{\beta}}]-d\log \gamma+\frac{d}{2}\log (2\pi) \right)+d\log(\frac{\gamma}{\sqrt{\beta}})\!-\!\frac{d}{2}\log(2\pi) =  \elbolossgammastar[\frac{\gamma}{\sqrt{\beta}}] \,.
\end{equation}
converges to the regularized IMA objective as $\frac{\gamma}{\sqrt{\beta}}\to +\infty$ under the same conditions as in 
\cref{prop:vae_ima}. 
\end{prop}
\begin{proof}
The $\beta$-VAE objective is, up to an additive constant and a strictly positive multiplicative constant, identical to the \gls{elbo} objective. As a consequence their optimum and the values of parameters $\encpar$ and $\decpar$ at which they are achieved are identical. 
\end{proof}
This suggests that choosing a fixed $\beta$ or a $\beta$ growing as, for example, $\log \gamma$ or even $\sqrt{\gamma}$ would lead to self-consistent solutions with the same properties as the vanilla VAE in the deterministic decoder limit $\gamma \to +\infty$, and notably robustness to spurious solutions. 

\paragraph{Generalizing \citet{kumar_implicit_2020}} Moreover, this connection also shows that our result generalizes previous work. Particularly, considering $\gsq=1$ when expressing the optimal variance analogous to \cref{eq:optimal_sigma}, we can discover the same expression as in \cite{kumar_implicit_2020}.

Namely, the objective function has the form (where we now indicate explicitly the dependence on $\gamma$ and emphasize that both $\gamma$ and $\beta$ are fixed):
\begin{multline}
    \betalossgamma =-\frac{1}{2}\sumk[\gls{obsdim}]\Bigg[\beta\log\frac{1}{\gls{sigmak}}-\beta\\
    +\beta\gls{sigmak}\parenthesis{-\frac{d^2\log p_0}{d\gls{latentcomp}_k^2}(g^{\decpar}_k(\gls{obs}))+\dfrac{\gsq}{\beta}\normsquared{\cols{\mixjacobianbis{\gls{invdec}(\gls{obs})}}} }-2\beta\log(m(g^{\decpar}_k(\gls{obs})))\Bigg]\\+d\log\gamma-\frac{d}{2}\log(2\pi).
\end{multline}
As a consequence, the optimal \gls{sigmak} now includes $\beta$ compared to \eqref{eq:optimal_sigma}:
\begin{align}
    \gls{sigmaoptk} &= \parenthesis{-\frac{d^2\log p_0}{d\gls{latentcomp}_k^2}(g^{\decpar}_k(\gls{obs}))+\dfrac{\gsq}{\beta}\normsquared{\cols{\mixjacobianbis{\gls{invdec}(\gls{obs})}}} }^{-1}. \label{eq:optimal_sigma_beta}
\end{align}
That is, $\beta$ affects the decoder Jacobian column norms as $\nicefrac{1}{\gsq}$. They are not exactly equivalent though---note the $\beta$ factor in front of the log-prior terms:
\begin{multline}
    \gls{betaloss}(\gls{obs};\gls{decpar}, \gls{encparopt}) =-\frac{1}{2}\sumk[\gls{obsdim}]\Bigg[\beta\log\parenthesis{-\frac{d^2\log p_0}{d\gls{latentcomp}_k^2}(g^{\decpar}_k(\gls{obs}))+\dfrac{\gsq}{\beta}\normsquared{\cols{\mixjacobianbis{\gls{invdec}(\gls{obs})}}}}\\ -2\beta\log(m(g^{\decpar}_k(\gls{obs})))\Bigg]+d\log\gamma+cst.\label{eq:betavae_cima}
\end{multline}

Again, as both $\beta$ and $\gamma$ are fixed, the additive/multiplicative constants are irrelevant. Thus, $\beta$ has a similar effect on column-orthogonality as $\nicefrac{1}{\gsq}$. We formalize this observation in the following remark:
\begin{remark}
    $\beta$ affects the column norms of the decoder Jacobian in the same way as $\nicefrac{1}{\gsq}$. One can think of having $\gamma'=\gamma/\sqrt{\beta}$ and tying the tuning of $\beta, \gamma'$. \label{remark:beta_gamma}
\end{remark}

\begin{remark}[Generalization of \citet{kumar_implicit_2020}]
    Assuming the same conditions as in \cref{prop:vae_ima} and $\gsq=1$ (\ie, a Gaussian decoder with a Hessian $\gls{hessian}=\gls{identity}$ and Gaussian prior with \gls{identity} as second-order derivative) with the \betavae loss, then expressing the optimal posterior covariance with $\nicefrac{\gamma}{\sqrt{\beta}}$, we get Eq. (11) of \citet{kumar_implicit_2020},
    \begin{align*}
        \gls{var_cov} &= \inv{\parenthesis{ \gls{identity} + \dfrac{1}{\beta}\mixjacobian\transpose{\mixjacobian} }}
    \end{align*}
    as a special case.
\end{remark}

\subsection{\gls{cima_local} as the (scaled) left \gls{kld} measure of diagonality of $\transpose{\gls{jacobian}_{\gls{dec}}}\gls{jacobian}_{\gls{dec}}$}
\label{subsec:cima_as_left_kl}

In our paper, we used the original definition for \gls{cima_local} (\citep[(8)]{gresele_independent_2021}), which we restate here:
\begin{align*}
        \cima &= 
    \sumk[d]\log\norm{\tfrac{\partial\gls{dec}}{\partial\gls{latentcomp}_k}\parenthesis{\gls{latent}}}-\logabsdet{\gtmixjacobian}.
\end{align*}

However, an alternative formulation exists: namely, \cima can be thought of as the (scaled) \textit{left \gls{kld} measure of diagonality} of the \textit{square} matrix $\transpose{\gtmixjacobian}\gtmixjacobian,$ as shown in \cite{alyani_diagonality_2017}. That is, we can rewrite the above as~\citep[(23)]{gresele_independent_2021} (with $\mat{A} = \transpose{\gtmixjacobian}\gtmixjacobian$):
\begin{align}
    \cima &= \dfrac{1}{2}\mathrm{D}^{\mathrm{left}}_{\mathrm{\gls{kld}}}\parenthesis{\mat{A}} \label{eq:cima_as_left_kl}\\
    &= -\dfrac{1}{2}\logabsdet{\diag{\mat{A}}^{-\nicefrac{1}{2}}\mat{A}\diag{\mat{A}}^{-\nicefrac{1}{2}}}\label{eq:cima_as_scaled_log_det}\\
    &= \dfrac{1}{2}\parenthesis{\logabsdet{\diag{\mat{A}}} -\logabsdet{\mat{A}}}
\end{align}

The importance of \cref{eq:cima_as_left_kl} is twofold: i) it provides a theoretical analysis of \gls{cima_local} as a measure of the column-orthogonality of \gtmixjacobian (or, equivalently, the diagonality of  $\transpose{\gtmixjacobian}\gtmixjacobian$); and ii) it elucidates why \gls{cima_local} \textit{can be used in the $\dim\gls{obs}\neq\dim\gls{latent}$ case,} but only to measure column-orthogonality (to exploit the beneficial properties of \gls{ima} for identifiability, the theory of \gls{ima} needs to be extended to this case). We leverage ii) as a justification to use \gls{cima_local} in our image experiments in \cref{subsec:exp_col_gamma_dis}. 

\subsection{Assessing the value of \gls{cima_local} for recovering the true latents}
\label{subsec:cima_value_assessment}

\gls{cima_local}, given by \eqref{eq:ima_objective}, measures the deviation of the learned decoder from the \gls{ima} function class. As it is positive and unbounded, it is practically relevant to investigate the following question: \textit{how much violation of the \gls{ima} assumption is acceptable to recover the true latents?} Expressed differently, we are interested in whether a threshold can be specified for \cima to decide whether the true (but unknown) latent factors can be recovered.

Unfortunately, our answer is negative, but this is not specific to \gls{ima} theory. Even in the case of linear \gls{ica} we cannot specify a threshold for the non-Gaussianity of the latent (source) variables to ensure that the ground-truth factors can be recovered~\cite{hyvarinen_nonlinear_1999}.

We acknowledge that, among others, \gls{cima_local} being unbounded makes it harder to interpret. Thus, we go back to first principles to develop an imperfect but hopefully more practical intuition on how to assess the value of \gls{cima_local} to decide whether the true latents are recovered (if the true mixing is in the \gls{ima} class). As we already pointed out in \cref{subsec:cima_as_left_kl}, \gls{cima_local} is the left \gls{kld} measure of diagonality of $\transpose{\gls{jacobian}_{\gls{dec}}}\gls{jacobian}_{\gls{dec}}$~\cite{alyani_diagonality_2017}. In their analysis, the authors provide closed form solutions expressions to compare different diagonality measures. 

For our purposes, \citep[Fig.~1]{alyani_diagonality_2017} provides an important insight: it shows that in the two-dimensional case \gls{cima_local} increases nonlinearly in a variable $r$ expressing the degress of diagonality of a matrix (more precisely as $-\log\parenthesis{1-r^2},$ where $r=0$ denotes the identity matrix, whereas $r=1$ a matrix with parallel columns). The takeaway for us is that simply comparing two \gls{cima_local} values can be misleading. For the two-dimensional case, we could define an expression that is linear in $r$ (but this requires us to accept that $r$ is a -in some sense- suitable measure of the diagonality of a matrix), namely:
\begin{align*}
    \cima &= -\dfrac{1}{2}\log\parenthesis{1-r^2} \implies r= \sqrt{{1-\exp\parenthesis{-2\cima}}}.
\end{align*}

Unfortunately, the \gls{obsdim}-dimensional case only has a power series formulation; thus, it is nontrivial how to extend the above reasoning. However, \cref{eq:cima_as_scaled_log_det} expresses \gls{cima_local} as the negative log determinant of a scaled matrix, where the determinant of $\diag{\mat{A}}^{-\nicefrac{1}{2}}\mat{A}\diag{\mat{A}}^{-\nicefrac{1}{2}}$ is between 0 and 1 ($\mat{A}=\transpose{\gtmixjacobian}\gtmixjacobian).$ Thus, we can convert \gls{cima_local} to the \brackets{0;1} interval as follows:
\begin{align}
    \cima &= -\dfrac{1}{2}\logabsdet{\diag{\mat{A}}^{-\nicefrac{1}{2}}\mat{A}\diag{\mat{A}}^{-\nicefrac{1}{2}}} \\
          &= -\dfrac{1}{2}\logabsdet{\hat{\mat{A}}} \\
  0 &\leq \exp\parenthesis{-2\cima} \leq 1.
\end{align}
The merit of the above expression is a more natural way to compare values that are normalized to the \brackets{0;1} interval. Namely, the original formulation of \gls{cima_local} may potentially lead to problems, as, \eg, a value of $t$ and $10t$ does not mean that one model is ten times better in recovering the true latents. Nonetheless, this can be thought (at most) as a small step towards making the analysis of the empirical value of \gls{cima_local} a useful tool in practical scenarios, where we do not have access to the true latent factors.

\section{Main Theoretical Results}
\label{sec:app_proofs}

\subsection{Proof of \autoref{prop:selfconsist}}
We proceed in two steps: first we prove the existence of variational parameters that achieve a global minimum of the \gls{elbo} gap, then we characterize its near-deterministic properties. We then combine these results, which rely on specific assumptions, to obtain our main text result under \cref{assum:VAE}.  

We initially use the following milder assumptions than in main text to prove intermediate results. 
 \begin{assum}[Gaussian Encoder-Gaussian Decoder \gls{vae}, minimal properties] \label{assum:VAEminim}
 We are given a fixed latent prior and three parameterized classes of $\,\RR^d \to \RR^d$ mappings: the mean decoder class $\gls{decpar}\mapsto \gls{dec}$, and the mean and standard deviation encoder classes, $\gls{encpar}\mapsto \encmean$ and $\gls{encpar}\mapsto \encstd$ such that
\begin{enumerate}[label=(\roman*),nolistsep]
    \item  the latent prior has a factorized \gls{iid} density $\gls{pz}\sim \prod_k m(\gls{latentcomp}_k)$, with $m$ smooth fully supported on $\RR$, with concave $\log m$, 
    \item conditional on the latent, the decoder has a factorized Gaussian density $p_{\decpar}$ with mean $\gls{dec}$ such that 
  \begin{equation}
      \gls{obs}|\gls{latent} \sim \mathcal{N}\parenthesis{\gtmix,\gamma^{-2}\gls{identity}}
  \end{equation}
    \item  the encoder is factorized Gaussian with posterior mean and variance maps $\gls{muk},\gls{sigmak}$
    for each component $k$, leading to the factorized posterior density
    \gls{q} such that
      \begin{equation}
         \gls{latentcomp}_{k}|\gls{obs}\sim\mathcal{N}(\gls{muk},\gls{sigmak})
    \end{equation}
    \item the mean and variance encoders classes can fit any function,
    \item for all possible $\decpar$, \gls{dec} is a diffeomorphism of $\RR^d$ with inverse \gls{invdec}.%
\end{enumerate}
\end{assum}

Existence of at least one global minimizer of the gap between true and variational posterior is given by the following proposition. 
\begin{prop}[Existence of global minimum]\label{prop:exist_infoproj}
Under \autoref{assum:VAEminim}. For a fixed $\gls{decpar}$
assume additionally that $\gls{invdec}$ is Lipschitz continuous with Lipschitz constant $B>0$, in the sense that
\[
\forall \gls{obs},\boldsymbol{y} \in \RR^d: \qquad  \left\|\unmix-\gls{invdec}(\boldsymbol{y})\right\|_2\leq B \|\gls{obs}-\boldsymbol{y}\|_2\,.
\] 
Then there exists at least one choice $(\encmean\in \RR^d,\,\encstd \in \RR^{d}_{>0})$ that achieves the minimum of $\kl{\gls{q}}{\gls{pzx}}$.
\end{prop}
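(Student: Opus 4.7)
The plan is to view $F(\encmean, \encstd) := \kl{\gls{q}}{\gls{pzx}}$ as a function on the open domain $\Omega := \RR^d \times \RR^d_{>0}$ and establish existence of a minimizer via the standard continuity-plus-coercivity (Weierstrass) argument. Starting from the explicit expression of the KL in~\eqref{eq:KLgeneralexp}, the term $c(\gls{obs},\gamma)$ is independent of $(\encmean, \encstd)$, while the $(\encmean,\encstd)$-dependent contributions are the entropy piece $-\sumk[d]\log \encstdcomp_k$ and the expectation $\expectation{\gls{q}}\brackets{\tfrac{\gamma^{2}}{2}\|\gls{obs}-\gtmix\|^{2} - \sumk[d]\log m(\gls{latentcomp}_k)}$. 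Under the reparameterization $\gls{latent}=\encmean+\encstd\odot\epsilon$ with $\epsilon$ a standard Gaussian, these become integrals of continuous functions against a fixed Gaussian; smoothness of \gls{dec} and $m$ together with dominated convergence then deliver joint continuity of $F$ on $\Omega$.

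The crux is coercivity. The Lipschitz hypothesis $\|\unmix-\gls{invdec}(\boldsymbol{y})\|\leq B\|\gls{obs}-\boldsymbol{y}\|$, specialized to $\boldsymbol{y}=\gtmix$, yields the pointwise lower bound $\|\gls{obs}-\gtmix\|^{2}\geq B^{-2}\|\unmix-\gls{latent}\|^{2}$. Integrating against \gls{q} gives
\begin{equation*}
    \tfrac{\gamma^{2}}{2}\expectation{\gls{q}}\brackets{\|\gls{obs}-\gtmix\|^{2}}\;\geq\;\tfrac{\gamma^{2}}{2B^{2}}\parenthesis{\|\unmix-\encmean\|^{2}+\sumk[d]\encstdcomp_k^{2}}\,.
\end{equation*}
A smooth, fully supported, log-concave density on $\RR$ is bounded above by some $M<\infty$ (otherwise concavity forces $\log m$ to grow monotonically at $\pm\infty$, contradicting $\int m = 1$), so $-\expectation{\gls{q}}\brackets{\log m(\gls{latentcomp}_k)}\geq -\log M$ uniformly. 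Combining these estimates yields the key coercive bound
\begin{equation*}
    F(\encmean,\encstd)\;\geq\;\sumk[d]\parenthesis{\tfrac{\gamma^{2}}{2B^{2}}\encstdcomp_k^{2}-\log \encstdcomp_k}+\tfrac{\gamma^{2}}{2B^{2}}\|\unmix-\encmean\|^{2}+\mathrm{const}\,,
\end{equation*}
in which each scalar summand $\tfrac{\gamma^{2}}{2B^{2}}s^{2}-\log s$ diverges to $+\infty$ as $s\to 0^{+}$ and as $s\to+\infty$, while the quadratic in $\encmean$ handles $\|\encmean\|\to\infty$; hence $F\to+\infty$ at every boundary point of $\Omega$ (including infinity).

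Coercivity implies every sublevel set $\{F\leq c\}$ lies in a compact subset of $\Omega$; combined with continuity, this gives at least one global minimizer, as claimed. The main obstacle I foresee is the elementary but non-trivial step of deducing $\sup_{\RR} m<\infty$ from smoothness, full support, and log-concavity alone---this reduces to a short real-analysis lemma about smooth concave functions on $\RR$ whose exponentials are integrable. Once this is in hand the argument is routine, and the same estimate will be useful downstream to localize the near-deterministic minimizers and derive the rates $O(1/\gamma)$ and $O(1/\gamma^{2})$ asserted in~\eqref{eq:selfcons}.
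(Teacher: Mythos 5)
Your proposal is correct and takes essentially the same route as the paper's own proof: both use the Lipschitz hypothesis to derive the coercive lower bound $\sum_k\bigl(\tfrac{\gamma^2}{2B^2}\sigma_k^2-\log\sigma_k\bigr)+\tfrac{\gamma^2}{2B^2}\|g(x)-\mu\|^2+\mathrm{const}$, bound the prior contribution uniformly via boundedness of the log-concave density $m$, and conclude by continuity together with compactness of sublevel sets (the paper phrases this as the infimum being unattainable outside a compact box $\{\|\mu\|\le R\}\times[a,b]^d$). The only cosmetic difference is that the paper first applies Jensen's inequality to reduce $-\mathbb{E}_q[\log m(z_k)]$ to $-\log m(\mu_k)$ before invoking the global maximum of $\log m$, whereas you bound the expectation directly by $-\log(\sup m)$.
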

\begin{proof}
Using Prop.~\ref{prop:KLlowerLipschitz}, we have the lower bound
\begin{multline}
    \kl{\gls{q}}{\gls{pzx}}
\geq -\sumk[d]\left[\log \encstdcomp_k (\gls{obs})+\log m(\encmeancomp_k)\right]+c(\gls{obs},\gamma)\\+\frac{\gsq}{2}B^{-2}\left[\left\|\unmix-\gls{mu} \right\|^2+\sumk[d]\gls{sigmak}\right] \,.
\end{multline}
We then notice  (see lemma~\ref{lem:optim}) that for all $k$, 
\[\encstdcomp_k(\gls{obs})\to -\log \encstdcomp_k (\gls{obs})+\frac{\gsq}{2}B^{-2}\gls{sigmak}\] achieves a global minimum $n(B,\gamma)=-\log(B/\gamma)+1/2$ at $\encstdcomp_k(\gls{obs})=B/\gamma$.

For arbitrary $k_0$, we now 1) lower bound the $k\neq k_0$ terms by $n(B,\gamma)$; 2) lower bound and all the $\log m$ terms by their global maximum, which exists by Assum.~1i (log-concave prior); and 3) drop the non-negative squared norm term, leading to the following weaker lower bound:
\begin{multline}
    \kl{\gls{q}}{\gls{pzx}}
\geq (d-1)n(B,\gamma) -\log \encstdcomp_{k_0} (\gls{obs})\\-d  \max_t \left(\log  m(t)\right) +c(\gls{obs},\gamma)+\frac{\gsq}{2}B^{-2}\left[\encstdcomp_{k_0}(\gls{obs})^2\right] \,.
\end{multline}

The \gls{kld} divergence is well-defined and finite for any choice of parameters in their domain, therefore it achieves a particular value $K_0\geq 0$ at one arbitrary selected point of the domain. 
Since for all $k$, the lower bound tends to $+\infty$ for both $\encstdcomp_k\to +\infty$ (as the quadratic term dominates the $-\log$ term) and $\encstdcomp_k\to 0^+$, there exist $a>b>0$ (possibly dependent on $(\gamma,\gls{obs})$) such that ${ \kl{\gls{q}}{\gls{pzx}}>K_0 }$ for any $\encstdcomp_k<b$ or $\encstdcomp_k>a$.

Moreover, starting again from the lower bound from Prop.~\ref{prop:KLlowerLipschitz},
\begin{multline}
    \kl{\gls{q}}{\gls{pzx}}
\geq -\sumk[d]\left[\log \encstdcomp_k (\gls{obs})+\log m(\encmeancomp_k)\right]+c(\gls{obs},\gamma)\\+\frac{\gsq}{2}B^{-2}\left[\left\|\unmix-\gls{mu} \right\|^2+\sumk[d]\gls{sigmak}\right] \,,
\end{multline}
we now focus on $\encmean$ and lower bound all $\encstd$ terms. With this, we get the following weaker lower bound in terms of $\encmean$:
\begin{multline}
    \kl{\gls{q}}{\gls{pzx}}
\geq d n(B,\gamma) -d  \max_t \left(\log  m(t)\right) +c(\gls{obs},\gamma)\\+\frac{\gsq}{2}B^{-2}\left[\left\|\unmix-\gls{mu} \right\|^2\right] \,.
\end{multline}

The lower bound also tends to $+\infty$ for $\|\encmean\| \to +\infty$, so there exists a radius $R>0$ (possibly dependent on $(\gamma,\gls{obs})$) such that $ \kl{\gls{q}}{\gls{pzx}}>K_0 $ if $\|\encmean\|>R$. 

As a consequence, the infimum ($\leq K_0$) of the minimization problem (\ref{eq:minKLopt}) cannot be achieved outside the compact set $(\encmean ,\encstd)\in  \{\encmean \in \RR^d: \|\encmean\|\leq R\}\times[a,b]^d$. Since the divergence is continuous in $(\encmean,\encstd)$, there exists a value $(\encmeanopt ,\encstdopt)$ in this compact set achieving the minimum of the KL over the whole parameter domain, and all values achieving this minimum are in this compact set. 
\end{proof}

For given $\gls{obs}$, $\decpar$ and $\gamma>0$, the variational posterior KL divergence mapping
\[
(\gls{mu},\encstd(\gls{obs}))\to \kl{\gls{q}}{\gls{pzx}}
\]
thus has a minimum, and by smoothness of this mapping, this minimum can be characterized by the vanishing gradient of the KL divergence with respect to the parameters. 
Now, let us try to characterize how this minimum behaves for large $\gamma$.
\begin{prop}[Self-consistency of the encoder in the deterministic limit]\label{prop:selfconsist_app}
Under ~\cref{assum:VAEminim}, assume additionally $\gls{dec}$ and $\gls{invdec}$ are Lipschitz continuous with respective Lipschitz constants $C,B>0$, in the sense that
\begin{align}
    \forall \gls{latent},\boldsymbol{w} \in \RR^d: \qquad  &\left\|\gtmix-\gls{dec}(\boldsymbol{w})\right\|_2\leq C \|\gls{latent}-\boldsymbol{w}\|_2\,,
    \\
    \forall \gls{obs},\boldsymbol{y} \in \RR^d: \qquad &\left\|\unmix-\gls{invdec}(\boldsymbol{y})\right\|_2\leq B \|\gls{obs}-\boldsymbol{y}\|_2\,.
\end{align} 
Assume additionally that $-\log m$ is quadratically dominated, in the sense that
\[
\exists D>0,E>0: \qquad   -\log m(u) \leq D|u|^2 +E\,, \qquad \forall u \in \RR.
\]
Then for all $\gls{obs},\decpar$, as $\gamma\to +\infty$, any global minimum of (\ref{eq:minKLopt}) satisfies
\begin{eqnarray}
\gls{muopt} = \unmix+O(\nicefrac{1}{\gamma})\label{eq:meanconv}\\
\gls{sigmaopt} = O(\nicefrac{1}{\gsq})\label{eq:stdconv}\,.
\end{eqnarray}
More precisely, for all $\gls{obs}\in\RR^{\gls{obsdim}}$, $\gamma>0$
\begin{multline*}
 \left\|\unmix-\gls{muopt} \right\|^2
\leq  B^2\frac{2d}{\gsq}\left(\frac{1}{2}(C^2-1)+E+D\left[\dfrac{\|\unmix\|^2}{d}+\frac{1}{\gsq} \right]\right.\\ \left.+M+\frac{1}{2}\log (B^2)\right)\,.
\end{multline*}
and 
\begin{multline*}
 \sumk[d]\gls{sigmaoptk}
\leq  B^{2}\frac{4d}{\gsq}\left(\frac{1}{2}(C^2-1)+E+D\left[\dfrac{\|\unmix\|^2}{d}+\frac{1}{\gsq} \right]+M+\frac{1}{2}\left(\log (2 B^{2})\right)\right)\,.
\end{multline*}

\end{prop}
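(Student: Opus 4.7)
My strategy is to sandwich the minimal KL-divergence $\kl{\gls{q}}{\gls{pzx}}$ between an explicit lower bound (from \cref{prop:KLlowerLipschitz}, leveraging the $B$-Lipschitz bound on $\gls{invdec}$) and the KL value at a carefully chosen trial variational posterior, then rearrange to isolate $\normsquared{\unmix-\gls{muopt}}$ and $\sum_k\gls{sigmaoptk}$ on one side of the resulting inequality. Existence of a global minimizer is already given by \cref{prop:exist_infoproj} applied under the same Lipschitz hypothesis on $\gls{invdec}$ together with the quadratic domination of $-\log m$, so we may evaluate all quantities at this minimizer throughout.

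For the upper bound, the natural choice of trial posterior is $\encmean=\unmix$ with $\encstdcomp_k=B/\gamma$ for every $k$, so that $\gls{latent}-\unmix$ has covariance $B^{2}/\gsq\cdot\gls{identity}$ under $\gls{q}$. Using the $C$-Lipschitz bound on $\gls{dec}$ one obtains
\[
\expectation{\gls{q}}\brackets{\normsquared{\gls{obs}-\gtmix}}=\expectation{\gls{q}}\brackets{\normsquared{\gls{dec}(\unmix)-\gtmix}}\leq C^{2}\,\expectation{\gls{q}}\brackets{\normsquared{\unmix-\gls{latent}}}=C^{2}dB^{2}/\gsq,
\]
so that the $\tfrac{\gsq}{2}$-weighted reconstruction contribution is bounded by $C^{2}dB^{2}/2$, which is $O(1)$ in $\gamma$. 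The entropy contribution $d\log(\gamma/B)$ cancels exactly the $-\tfrac{d}{2}\log\gsq$ sitting inside the constant $c(\gls{obs},\gamma)$ of \eqref{eq:KLgeneralexp}, and the prior term is controlled via the quadratic domination $-\log m(u)\leq Du^{2}+E$, yielding $\expectation{\gls{q}}\brackets{\sum_k -\log m(\gls{latentcomp}_k)}\leq D(\normsquared{\unmix}+dB^{2}/\gsq)+dE$. Together this produces a uniform $O(1)$ upper bound on the minimal KL. Symmetrically, plugging the optimizer into \cref{prop:KLlowerLipschitz} and using $\sum_k\expectation{\gls{q}}[-\log m(\gls{latentcomp}_k)]\geq -dM$ (with $M=\sup_u\log m(u)<\infty$ by the prior's smoothness), I bound each coupled pair $-\log\encstdcompopt_k+\tfrac{\gsq}{2B^{2}}\gls{sigmaoptk}$ below by its global minimum $\log(\gamma/B)+\tfrac{1}{2}$ via \cref{lem:optim}. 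Summing over $k$ again cancels the $-\tfrac{d}{2}\log\gsq$ in $c(\gls{obs},\gamma)$, leaving the isolated nonnegative term $\tfrac{\gsq}{2B^{2}}\normsquared{\unmix-\gls{muopt}}$ plus a bounded remainder. The sandwich $\kl{\gls{q}}{\gls{pzx}}^{*}\leq\text{trial}$ then gives $\tfrac{\gsq}{2B^{2}}\normsquared{\unmix-\gls{muopt}}\leq O(1)$, i.e.\ the $O(1/\gsq)$ rate of \eqref{eq:meanconv}. To obtain \eqref{eq:stdconv}, I would replay the same lemma-based lower bound but keep one additional copy of $\tfrac{\gsq}{2B^{2}}\gls{sigmaoptk}$ unabsorbed in each term, isolating $\sum_k\gls{sigmaoptk}$ on the left-hand side and inheriting the same $O(1/\gsq)$ rate.

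\textbf{Main obstacle.} The delicate part is the bookkeeping of the divergent $\log\gamma$ contributions: the terms $d\log(\gamma/B)$ in both the trial upper bound (from the entropy of the trial posterior) and the optimum lower bound (from \cref{lem:optim}) must cancel \emph{exactly} against the $-\tfrac{d}{2}\log\gsq$ hidden in $c(\gls{obs},\gamma)$. Any mismatch would degrade the sandwich to an uninformative $O(\log\gamma)$ bound rather than the required $O(1)$, and hence a useless $O(\log\gamma/\gsq)$ rather than $O(1/\gsq)$ after dividing by $\tfrac{\gsq}{2B^{2}}$. A secondary difficulty is allocating enough of the coefficient $\tfrac{\gsq}{2B^{2}}\gls{sigmaoptk}$ to bound $\sum_k \gls{sigmaoptk}$ directly (not only $\sum_k(\gls{sigmaoptk})^{1/2}$) while still having enough left over to absorb the entropy terms via \cref{lem:optim}. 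Finally, tracking the Lipschitz constants $B,C$ and the prior constants $D,E,M$ through both bounds is needed to recover the explicit quantitative form stated in \cref{prop:selfconsist_app}, from which the main-text \cref{prop:selfconsist} under the stronger \cref{assum:VAE} follows via the problem reduction of \eqref{eq:optimreduction}.
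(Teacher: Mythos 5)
Your proposal is correct and follows essentially the same route as the paper's proof: lower-bounding the optimal KL via \cref{prop:KLlowerLipschitz}, upper-bounding it by the trial posterior $\encmean=\unmix$, $\encstdcomp_k \propto \nicefrac{1}{\gamma}$ (the paper's \cref{prop:KLupperLipschitz} uses $\encstdcompstar_k=\nicefrac{1}{\gamma}$ rather than your $\nicefrac{B}{\gamma}$, which only shifts constants), absorbing $-\log\encstdcomp_k$ into the quadratic variance term via \cref{lem:optim} so the $d\log\gamma$ terms cancel against $c(\gls{obs},\gamma)$, and then retaining a fraction of the $\gls{sigmak}$ coefficient unabsorbed to isolate $\sumk[d]\gls{sigmaoptk}$ — exactly the paper's $\nu$-splitting with $\nu=1$ for the mean and $\nu=\nicefrac{1}{2}$ for the variance. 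You also correctly identify the two genuinely delicate points (exact $\log\gamma$ cancellation and the allocation of the variance coefficient), so no gaps remain.
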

\begin{proof}
We start from the lower bound expression of Prop.~\ref{prop:KLlowerLipschitz}
\begin{multline*}
    \kl{\gls{q}}{\gls{pzx}}
\geq -\sumk[d]\left[\log \encstdcomp_k (\gls{obs})+\log m(\encmeancomp_k)\right]+c(\gls{obs},\gamma)\\+\frac{\gsq}{2}B^{-2}\left[\left\|\unmix-\encmean \right\|^2+\sumk[d]\gls{sigmak}\right] \,,
\end{multline*}
with $c(\gls{obs},\gamma)= -\frac{d}{2}\left(\log (\gsq)+1\right)+\log \gls{px}$.
For any $\nu\in (0,1]$,
we can thus write
\begin{multline*}
    \kl{\gls{q}}{\gls{pzx}}
\geq \sumk[d]\left[-\log \encstdcomp_k (\gls{obs})+\nu\gsq B^{-2}\frac{\gls{sigmak}}{2}-\log m(\encmeancomp_k)\right]+c(\gls{obs},\gamma)\\+\frac{\gsq}{2}B^{-2}\left[\left\|\unmix-\encmean \right\|^2+(1-\nu)\sumk[d]\gls{sigmak}\right] \,.
\end{multline*}
Now, from lemma~\ref{lem:optim} we get 
\[
\forall u>0: \qquad  -\log u +\alpha u^2/2\geq \frac{1}{2}\log(\alpha)+\frac{1}{2} \,.
\]
We exploit this lower bound to obtain
\begin{multline*}
    \kl{\gls{q}}{\gls{pzx}}
\geq  \frac{d}{2}\left(\log (\nu \gsq B^{-2})+1\right)-\sumk[d]\left[\log m(\encmeancomp_k)\right]+c(\gls{obs},\gamma)\\+\frac{\gsq}{2}B^{-2}\left[\left\|\unmix-\encmean \right\|^2+(1-\nu)\sumk[d]\gls{sigmak}\right] \,.
\end{multline*}
Using the expression of $c(\gls{obs},\gamma)$ we get
\begin{multline*}
    \kl{\gls{q}}{\gls{pzx}}
    \geq  \frac{d}{2}\left(\log (\nu B^{-2})+ \log\gsq +1\right)-\sumk[d]\left[\log m(\encmeancomp_k)\right] -\frac{d}{2}\left(\log\gsq+1\right) \\+\log \gls{px}+\frac{\gsq}{2}B^{-2}\left[\left\|\unmix-\encmean \right\|^2+(1-\nu)\sumk[d]\gls{sigmak}\right] \,.
\end{multline*}
and both the ``$d\log \gamma$'' as well as ``$\nicefrac{d}{2}$'' terms cancel out such that
\begin{multline*}
    \kl{\gls{q}}{\gls{pzx}}
\geq \frac{d}{2}\left(\log (\nu B^{-2} )\right)-\sumk[d]\left[\log m(\encmeancomp_k)\right]  +\log \gls{px}\\+\frac{\gsq}{2}B^{-2}\left[\left\|\unmix-\encmean \right\|^2+(1-\nu)\sumk[d]\gls{sigmak}\right] \,.
\end{multline*}
Finally, using Prop.~\ref{prop:KLupperLipschitz}, the above right hand side is bounded from above by a constant as $\gamma \to +\infty$, and as a consequence, the positive factor of the $\gsq$ term must vanish (by continuity assumption and its limits note $-\log m$ is bounded from below)
\[
    \left\|\unmix-\encmean \right\|^2+(1-\nu)\sumk[d]\gls{sigmak}\to 0
\]
This entails that both positive terms it comprises must vanish too.  

More precisely, we get the inequality between lower and upper bounds at the optimal solution
\begin{multline*}
    \frac{d}{2}\left(\log (\nu B^{-2} )\right)-\sumk[d]\left[\log m(\encmeancompopt_k)\right]  +\log \gls{px}\\+\frac{\gsq}{2}B^{-2}\left[\left\|\unmix-\gls{muopt} \right\|^2+(1-\nu)\sumk[d]\gls{sigmaoptk}\right] \\
    \leq d \left(\frac{1}{2}C^2+E+D\left[\dfrac{\|\unmix\|^2}{d}+\frac{1}{\gsq} \right]\right)-\frac{d}{2} +\log \gls{px},
\end{multline*}
which simplifies to
\begin{multline*}
    \frac{d}{2}\left(\log (\nu B^{-2} )\right)-\sumk[d]\left[\log m(\encmeancompopt_k)\right]  +\frac{\gsq}{2}B^{-2}\left[\left\|\unmix-\gls{muopt} \right\|^2+(1-\nu)\sumk[d]\gls{sigmaoptk}\right] \\
    \leq d \left(\frac{1}{2}C^2+E+D\left[\dfrac{\|\unmix\|^2}{d}+\frac{1}{\gsq} \right]\right)-\frac{d}{2}. 
\end{multline*}

Moreover by continuity assumption and its limits, $-\log m$ is bounded from below by ${-M=-\max_t \log m(t)}$, yielding
\begin{multline*}
\frac{d}{2}\left(\log (\nu B^{-2} )-2M \right) +\frac{\gsq}{2}B^{-2}\left[\left\|\unmix-\gls{muopt} \right\|^2+(1-\nu)\sumk[d]\gls{sigmaoptk}\right]\\
\leq d \left(\frac{1}{2}(C^2-1)+E+D\left[\dfrac{\|\unmix\|^2}{d}+\frac{1}{\gsq} \right]\right)
\end{multline*}
such that 
\begin{multline*}
 \frac{\gsq}{2}B^{-2}\left[\left\|\unmix-\gls{muopt} \right\|^2+(1-\nu)\sumk[d]\gls{sigmaoptk}\right]\\
\leq d \left(\frac{1}{2}(C^2-1)+E+D\left[\dfrac{\|\unmix\|^2}{d}+\frac{1}{\gsq} \right]-\frac{1}{2}\left(\log (\nu B^{-2} )-2M\right)\right)
\end{multline*}
and finally
\begin{multline}\label{eq:fullbound}
 B^{-2}\left[\left\|\unmix-\gls{muopt} \right\|^2+(1-\nu)\sumk[d]\gls{sigmaoptk}\right]\\
\leq \frac{2d}{\gsq}\left(\frac{1}{2}(C^2-1)+E+D\left[\dfrac{\|\unmix\|^2}{d}+\frac{1}{\gsq} \right]+M+\frac{1}{2}\log (B^2/\nu ) \right)
\end{multline}
Taking $\nu=1$ in (\ref{eq:fullbound}) we get the first intended inequality 
\begin{multline*}
 \left\|\unmix-\gls{muopt} \right\|^2
\leq B^2\frac{2d}{\gsq}\left(\frac{1}{2}(C^2-1)+E+D\left[\dfrac{\|\unmix\|^2}{d}+\frac{1}{\gsq} \right]\right.\\ \left.+M+\frac{1}{2}\log (B^2)\right)\,.
\end{multline*}
Alternatively, (\ref{eq:fullbound}) implies
\begin{multline*}
 (1-\nu)\sumk[d]\gls{sigmaoptk}
\leq B^2\frac{2d}{\gsq}\left(\frac{1}{2}(C^2-1)+E+D\left[\dfrac{\|\unmix\|^2}{d}+\frac{1}{\gsq} \right]\right.\\ \left.+M+\frac{1}{2}\left(\log (B^2/\nu )\right)\right)
\end{multline*}

Taking a fixed value of $\nu$, say $\nicefrac{1}{2}$, we get the second intended inequality
\begin{multline*}
 \sumk[d]\gls{sigmaoptk}
\leq B^{2}\frac{4d}{\gsq}\left(\frac{1}{2}(C^2-1)+E+D\left[\dfrac{\|\unmix\|^2}{d}+\frac{1}{\gsq} \right]+M+\frac{1}{2}\left(\log (2 B^{2})\right)\right)\,.%
\end{multline*}
\end{proof}

We now restate the main text proposition and provide the proof.
\propselfconsist*

\begin{proof}
We only have to check that \cref{assum:VAE} allow fulfilling the following requirements of \cref{prop:selfconsist_app}:
\begin{itemize}[nolistsep]
    \item the Lipschitz continuity requirements in~\cref{prop:selfconsist_app} results from the boundedness of the first order derivatives of the decoder mean and of its inverse (by using the multivariate Taylor theorem),
    \item concavity of $\log m$, required by \cref{assum:VAEminim}, is a direct consequence of non-positivity of the second-order logarithmic derivative of $m$ in \cref{assum:VAE}i,
    \item quadratic domination of $-\log m$ comes from the boundedness of the second-order logarithmic derivative of $m$ (by integrating twice).
\end{itemize} 
Then \cref{prop:selfconsist_app} follows and the $O(\nicefrac{1}{\gamma})$ convergence of the variational posterior mean of the inverse, as well as the $O(\nicefrac{1}{\gsq})$ convergence of the variational posterior variance.
\end{proof}
\paragraph{Finer approximation of parameter values}
We now derive a finer result for the convergence of the mean, that we will exploit in \cref{prop:vae_ima}. This relies on the existence of an optimum shown by  \cref{prop:exist_infoproj}.

At such optimum $\encparopt$ we thus have for all $k$
\[
\frac{\partial}{\partial \encmeancomp_k} \left[\kl{\gls{q}}{\gls{pzx}}\right]_{|\encparopt}=0\,,
\]
and
\[
\frac{\partial}{\partial \encstdcomp_k} \left[\kl{\gls{q}}{\gls{pzx}}\right]_{|\encparopt} =0\,.
\]

We derive the constraints entailed by the first expression:
\begin{multline*}
    \frac{\partial}{\partial \encmeancomp_k} \left[\kl{\gls{q}}{\gls{pzx}}\right]_{|\encparopt}=
     \frac{1}{2}\int \frac{\partial}{\partial \encmeancomp_k} q_{\encpar}(\gls{latent})\left[\|\gls{obs}-\gtmix\|^2\gsq  -2\sumk[d]\log m(\gls{latentcomp}_k)\right] d\gls{latent}
    \\
    =
     \frac{1}{2}%
     \int  \prod_{j\neq k} q_{\encpar}^j(z_j)\frac{\partial q_{\encpar}^k(\gls{latentcomp}_k)}{\partial \encmeancomp_k}\left[\|\gls{obs}-\gtmix\|^2\gsq  -2\sumk[d]\log m(\gls{latentcomp}_k)\right] d\gls{latent}
\end{multline*}
with 
\[
    \frac{\partial q_{\encpar}^k(\gls{latentcomp}_k)}{\partial \encmeancomp_{k}} = \frac{ \encmeancomp_{k}- \gls{latentcomp}_k}{{\encstdcomp_{k}}^{2}}q_{\encpar}^k(\gls{latentcomp}_k),
\]
which leads to a set of constraints at optimum
\begin{multline}\label{eq:postmeanderivconst}
     \int   q_{\encparopt}(\gls{latent}) \gls{muoptk} \left[\|\gls{obs}-\gtmix\|^2\gsq  -2\sumk[d]\log m(\gls{latentcomp}_k)\right] d\gls{latent}\\=
     \int  q_{\encparopt}(\gls{latent}) \gls{latentcomp}_k \left[\|\gls{obs}-\gtmix\|^2\gsq  -2\sumk[d]\log m(\gls{latentcomp}_k)\right] d\gls{latent}\,,\, \forall k
\end{multline}

Based on this expression we derive the following result. 

\begin{prop}\label{prop:selfconsmeansqrate}
Under \cref{assum:VAE}, as $\gamma\to +\infty$
\begin{equation}
        \gls{dec}(\gls{muopt})    =\gls{obs}+\frac{1}{\gsq}\gls{jacobian}_{\gls{dec}|\gls{muopt}}^{-T}  n'(\gls{muopt})+O(\nicefrac{1}{\gamma^3}).
\end{equation}
and 
\begin{equation}\label{eq:meanconvbias}
\gls{muopt}    =\unmix+\frac{1}{\gsq}\gls{jacobian}_{\gls{dec}|\unmix}^{-1}\gls{jacobian}_{\gls{dec}|\unmix}^{-T}  n'(\unmix)+O(\nicefrac{1}{\gamma^3})
\end{equation}
\end{prop}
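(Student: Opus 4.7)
My starting point is the optimality condition~(\ref{eq:postmeanderivconst}) for $\encparopt$, which on rearrangement reads
\begin{equation*}
\expectation{q_{\encparopt}}\!\bigl[(\gls{latentcomp}_k-\gls{muoptk})\bigl(\gsq\|\gls{obs}-\gtmix\|^{2}-2\sum_j\log m(\gls{latentcomp}_j)\bigr)\bigr]=0,\qquad\forall k.
\end{equation*}
Since $q_{\encparopt}$ factorises into Gaussians with means $\gls{muopt}$ and variances $\sigma_j^{\encparopt}(\gls{obs})^{2}$, I would apply Stein's lemma in the $k$-th coordinate to collapse this into the equivalent expected-score identity
\begin{equation*}
\gsq\,\expectation{q_{\encparopt}}\!\bigl[(\gtmix-\gls{obs})^{T}\partial_k\gtmix\bigr]=\expectation{q_{\encparopt}}\!\bigl[n'(\gls{latentcomp}_k)\bigr],
\end{equation*}
with $n:=\log m$. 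This is the workhorse identity for the asymptotic expansion.

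Next, I would second-order Taylor-expand both sides around $\gls{muopt}$, exploiting that $\sigma_j^{\encparopt}(\gls{obs})^{2}=O(1/\gsq)$ from \cref{prop:selfconsist_app} and that odd Gaussian moments vanish. Writing $F_k(\gls{latent}):=(\gtmix-\gls{obs})^{T}\partial_k\gtmix$ this yields
\begin{equation*}
\gsq F_k(\gls{muopt})+\tfrac{\gsq}{2}\sum_j\sigma_j^{\encparopt}(\gls{obs})^{2}\,\partial_j^{2}F_k(\gls{muopt})=n'(\gls{muoptk})+O(1/\gsq).
\end{equation*}
Dropping the Taylor correction gives the crude estimate $\gtmix[\gls{muopt}]-\gls{obs}=O(1/\gsq)$. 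Feeding this back in makes the summand $(\gtmix[\gls{muopt}]-\gls{obs})^{T}\partial_j^{2}\partial_k\gls{dec}$ inside $\partial_j^{2}F_k(\gls{muopt})$ of order $O(1/\gsq)$; combined with the sharper variance approximation $\sigma_j^{\encparopt}(\gls{obs})^{2}=1/(\gsq\|\partial_j\gls{dec}\|^{2})+O(1/\gamma^{4})$ from~\eqref{eq:optl_sigma}, the residual correction in the $\gsq F_k$ equation can be bounded by $O(1/\gamma)$. Collecting the $d$ component equations into a vector form and left-multiplying by $\mixjacobianbis{\gls{muopt}}^{-T}/\gsq$ yields the first identity.

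The second identity then follows by locally inverting $\gls{dec}$. A Taylor expansion around $\unmix$ gives $\gtmix[\gls{muopt}]-\gls{obs}=\mixjacobian\,(\gls{muopt}-\unmix)+O(\|\gls{muopt}-\unmix\|^{2})$. Since the first identity forces $\gtmix[\gls{muopt}]-\gls{obs}=O(1/\gsq)$, invertibility of $\mixjacobian$ gives $\|\gls{muopt}-\unmix\|=O(1/\gsq)$ and the quadratic remainder is $O(1/\gamma^{4})$. Multiplying by $\mixjacobian^{-1}$, and replacing $\mixjacobianbis{\gls{muopt}}^{-T}\to\mixjacobian^{-T}$ and $n'(\gls{muopt})\to n'(\unmix)$ at the cost of an additional $O(1/\gamma^{3})$, produces~\eqref{eq:meanconvbias}.

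The main obstacle is the Taylor step. The naive second-order correction $\tfrac{\gsq}{2}\sum_j\sigma_j^{\encparopt}(\gls{obs})^{2}\partial_j^{2}F_k(\gls{muopt})$ is a priori $O(1)$ and would otherwise limit the attainable accuracy to $O(1/\gsq)$ rather than $O(1/\gamma^{3})$. The delicate work therefore lies in the bootstrap: feeding the crude $\gtmix[\gls{muopt}]-\gls{obs}=O(1/\gsq)$ estimate together with the sharper variance formula from \cref{prop:vae_ima} back into the expansion \emph{just enough} to shrink the residual correction to $O(1/\gamma)$, so that it can be absorbed into the $O(1/\gamma^{3})$ error of the final claims.
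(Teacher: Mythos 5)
Your overall strategy is the one the paper uses: start from the stationarity condition~\eqref{eq:postmeanderivconst} in $\encmeancomp_k$, exploit $\gls{sigmaoptk}=O(1/\gsq)$ to Taylor-expand around the posterior mean, isolate the leading balance, and bootstrap to replace $\gls{muopt}$ by $\unmix$ in the arguments. Your Stein's-lemma step is in fact just the paper's computation of $\partial q_{\encpar}/\partial\encmeancomp_k=\frac{\encmeancomp_k-\gls{latentcomp}_k}{(\encstdcomp_k)^2}q_{\encpar}$ read as an integration by parts, so up to that point the two arguments coincide. The difference that matters is \emph{which object you then expand}: you expand $F_k(\gls{latent})=(\gls{dec}(\gls{latent})-\gls{obs})^{T}\partial_k\gls{dec}(\gls{latent})$, whereas the paper expands $\|\gls{obs}-\gls{dec}(\gls{latent})\|^2$ itself. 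This creates two concrete gaps. First, a second-order Taylor expansion of $F_k$ needs $\partial_j^2 F_k$, hence third derivatives of $\gls{dec}$; \cref{assum:VAE}(iv) only grants $C^2$ with bounded first and second derivatives, so your expansion is not available under the stated hypotheses (the paper's route never differentiates $\gls{dec}$ more than twice).

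Second, and more seriously, your own accounting of the correction term is incomplete. One has
\begin{equation*}
\partial_j^2 F_k(\gls{muopt})=\bigl(\partial_j^2\gls{dec}\bigr)^{T}\partial_k\gls{dec}+2\bigl(\partial_j\gls{dec}\bigr)^{T}\partial_j\partial_k\gls{dec}+\bigl(\gls{dec}(\gls{muopt})-\gls{obs}\bigr)^{T}\partial_j^2\partial_k\gls{dec},
\end{equation*}
and your bootstrap only makes the \emph{last} summand small; the first two are $O(1)$, and since $\gsq\,\sigma_j^{\encparopt}(\gls{obs})^2=O(1)$, the term $\tfrac{\gsq}{2}\sum_j\sigma_j^{\encparopt}(\gls{obs})^2\,\partial_j^2F_k(\gls{muopt})$ retains an $O(1)$ residual --- the same order as $n'(\gls{muoptk})$, which is exactly the quantity the proposition is trying to isolate. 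So the identity $\gsq F_k(\gls{muopt})=n'(\gls{muoptk})+O(1/\gamma)$ does not follow from your expansion as written. The paper's arrangement sidesteps precisely this: after linearizing $\gls{dec}$ inside the squared norm, the dangerous second-order-in-$(\gls{latent}-\gls{muopt})$ contribution appears as $\mathbb{E}\bigl[(\gls{latentcomp}_k-\gls{muoptk})\,\|\gls{jacobian}_{\gls{dec}|\gls{muopt}}(\gls{latent}-\gls{muopt})\|^2\bigr]$, an odd centered third Gaussian moment that vanishes identically, so no curvature term needs to be estimated. A smaller issue: you invoke the refined variance formula~\eqref{eq:optl_sigma} from \cref{prop:vae_ima}, but that theorem's proof relies on the $O(1/\gsq)$ mean rate established by this very proposition, so as written this is circular; the cruder bound $\gls{sigmaoptk}=O(1/\gsq)$ from \cref{prop:selfconsist_app} is what you are entitled to use here, and it suffices for the purposes you put it to.
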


\begin{proof}
We start from the constraints of~\eqref{eq:postmeanderivconst} that we rewrite
\begin{multline*}
     \int   q_{\encparopt}(\gls{latent})\left(\gls{latentcomp}_k - \gls{muoptk})\right) \left[\|\gls{obs}-\gtmix\|^2\gsq  \right] d\gls{latent}\\= \int   q_{\encparopt}(\gls{latent})\left(\gls{latentcomp}_k - \gls{muoptk})\right) \left[  2\sumk[d]\log m(\gls{latentcomp}_k)\right] d\gls{latent}\\
\end{multline*}
We then proceed to approximate the left hand side using a Taylor formula. Assuming bounded Hessian components, we can upper and lower bound using third order centered absolute moments of the Gaussian as
 \begin{multline*}
  \gsq  \int   q_{\encparopt}(\gls{latent})\left(\gls{latentcomp}_k - \gls{muoptk}\right) \left[\|\gls{obs}-\gls{dec}(\gls{muopt})-\gls{jacobian}_{\gls{dec}|\gls{muopt}}(\gls{latent}-\gls{muopt}) \|^2  \right] d\gls{latent} +O(\nicefrac{1}{\gamma}),
\end{multline*}
which we can rewrite (by 1) expanding the norm of the sum; 2) removing constants in the bracket, which lead to zeros after multiplying the zero mean variable and taking the expectation; 3) using Gaussianity, all centered third order terms vanish.)
 \begin{multline*}
   \gsq \int   q_{\encparopt}(\gls{latent})\left(\gls{latentcomp}_k - \gls{muoptk}\right) \left[\|\gls{obs}-\gls{dec}(\gls{muopt})\|^2+\|\gls{jacobian}_{\gls{dec}|\gls{muopt}}(\gls{latent}-\gls{muopt}) \|^2\right.\\ 
    \left.-2\left\langle \gls{obs}-\gls{dec}(\gls{muopt}),\,\gls{jacobian}_{\gls{dec}|\gls{muopt}}(\gls{latent}-\gls{muopt})\right\rangle  \right] d\gls{latent} +O(\nicefrac{1}{\gamma})\\
    = \gsq\int   q_{\encparopt}(\gls{latent})\left(\gls{latentcomp}_k - \gls{muoptk}\right) \left[\|\gls{jacobian}_{\gls{dec}|\gls{muopt}}(\gls{latent}-\gls{muopt}) \|^2 \right.\\
    \left.-2\left\langle \gls{obs}-\gls{dec}(\gls{muopt}),\,\gls{jacobian}_{\gls{dec}|\gls{muopt}}(\gls{latent}-\gls{muopt})\right\rangle   \right] d\gls{latent} +O(\nicefrac{1}{\gamma})\\
    = \gsq\int   q_{\encparopt}(\gls{latent})\left(\gls{latentcomp}_k - \gls{muoptk}\right) \left[(\gls{latent}-\gls{muopt})^T \gls{jacobian}_{\gls{dec}|\gls{muopt}}^T \gls{jacobian}_{\gls{dec}|\gls{muopt}}(\gls{latent}-\gls{muopt}) \right.\\
    \left.-2\left\langle \gls{obs}-\gls{dec}(\gls{muopt}),\,\gls{jacobian}_{\gls{dec}|\gls{muopt}}(\gls{latent}-\gls{muopt})\right\rangle \right] d\gls{latent} +O(\nicefrac{1}{\gamma})\\
     = \gsq\int   q_{\encparopt}(\gls{latent})\left(\gls{latentcomp}_k - \gls{muoptk}\right) \left[ -2\left\langle \gls{obs}-\gls{dec}(\gls{muopt}),\,\gls{jacobian}_{\gls{dec}|\gls{muopt}}(\gls{latent}-\gls{muopt})\right\rangle  \right] d\gls{latent} +O(\nicefrac{1}{\gamma})\\
\end{multline*}
Finally computing this integral we get the left hand side as
\begin{align*}
    -2\gsq \gls{sigmaoptk} \left\langle \gls{obs}-\gls{dec}(\gls{muopt}),\,[\gls{jacobian}_{\gls{dec}|\gls{muopt}}]_{.k}\right\rangle   +O(\nicefrac{1}{\gamma})\\
\end{align*}

For the right hand side we get using a Taylor expansion (with notation $n:\gls{latent}
\to \log(m(\gls{latent}))$)
\begin{multline*}
     \int   q_{\encparopt}(\gls{latent})\left(\gls{latentcomp}_k - \gls{muoptk})\right) \left[  2\sumk[d]\log m(\gls{latentcomp}_k)\right] d\gls{latent}\\= \int   q_{\encparopt}(\gls{latent})\left(\gls{latentcomp}_k - \gls{muoptk})\right) \left[  2\sumk[d]\log m(\gls{muoptk})+n'(\gls{muoptk})(\gls{latentcomp}_k-\gls{muoptk})\right]d\gls{latent} + O(\nicefrac{1}{\gsq})\\
     = 2\gls{sigmaoptk} n'(\gls{muoptk})+ O(\nicefrac{1}{\gsq}).
\end{multline*}
Equating the non-negligible terms of the left and right-hand sides we get for each $k$
\begin{equation*}
    \gsq  \left\langle \gls{obs}-\gls{dec}(\gls{muopt}),\,[\gls{jacobian}_{\gls{dec}|\gls{muopt}}]_{.k}\right\rangle   = -n'(\gls{muoptk})+O(\nicefrac{1}{\gamma})
\end{equation*}
such that
\begin{equation*}
      (\gls{obs}-\gls{dec}(\gls{muopt}))^T \gls{jacobian}_{\gls{dec}|\gls{muopt}}   = -\frac{1}{\gsq} n'(\gls{muopt})+O(\nicefrac{1}{\gamma^3}),
\end{equation*}
where $n'$ is applied component-wise. 
Because the Jacobian is everywhere invertible (implicit consequence of Lipschitz assumptions), we can solve for this equations and get
\begin{equation}\label{eq:selfconstbias_variationalmean}
      \gls{dec}(\gls{muopt})    =\gls{obs}+\frac{1}{\gsq}\gls{jacobian}_{\gls{dec}|\gls{muopt}}^{-T}  n'(\gls{muopt})+O(\nicefrac{1}{\gamma^3}).
\end{equation}
Using again a similar Taylor approximation we get 
\begin{equation*}
\gls{muopt}    =\unmix+\frac{1}{\gsq}\gls{jacobian}_{\gls{dec}|\gls{muopt}}^{-1}\gls{jacobian}_{\gls{dec}|\gls{muopt}}^{-T}  n'(\gls{muopt})+O(\nicefrac{1}{\gamma^3}).
\end{equation*}

This equation has the shortcoming of still referring to the posterior mean on both sides. To fix this, we first note that it implies, by boundedness of the Jacobian, that
\begin{equation*}
    |\gls{muopt}-\unmix|\leq\frac{1}{\gsq} K |n'(\gls{muopt})|+O(\nicefrac{1}{\gamma^3}).
\end{equation*}
By bounding the second-order derivative of the log prior, we get
\begin{equation*}
    |\gls{muopt}-\unmix|\leq\frac{1}{\gsq} K |n'(\unmix)+O(\gls{muopt}-\unmix)|+O(\nicefrac{1}{\gamma^3}),
\end{equation*}
which implies 
\begin{equation*}
    \gls{muopt}=\unmix+O(\nicefrac{1}{\gsq})\,,
\end{equation*}
\ie, we obtain an improved convergence rate. Using this rate and Taylor theorem, we obtain the final equation by replacing the variational posterior mean by the inverse decoder in \eqref{eq:selfconstbias_variationalmean} 
\begin{equation*}
\gls{muopt}    =\unmix+\frac{1}{\gsq}\gls{jacobian}_{\gls{dec}|\unmix}^{-1}\gls{jacobian}_{\gls{dec}|\unmix}^{-T}  n'(\unmix)+O(\nicefrac{1}{\gamma^3})
\end{equation*}

\end{proof}

\subsection{Proof of \autoref{prop:vae_ima}}
\label{subsec:app_vae_ima_log_conv}
This will be a corollary of the following result, that uses as a key assumption a rate of $O(\nicefrac{1}{\gsq})$ in the convergence of the self-consistency equation of the variational mean. %
\begin{prop}[\glspl{vae} with log-concave factorized prior and close-to-deterministic decoder approximate the \gls{ima} objective]\label{prop:vae_ima_selfcons}
Under \cref{assum:VAE}, if additionally the \gls{vae} satisfies the following self-consistency in the deterministic limit
\begin{eqnarray}
    \norm{\gls{muopt} -\unmix}&= & O_{\gamma\to +\infty}(\nicefrac{1}{\gsq})\,,\label{eq:meanselfconstassum}\\
    \normsquared{\gls{sigmaopt}} &= & O_{\gamma\to +\infty}(\nicefrac{1}{\gsq})\,.\label{eq:varselfconstassum}
\end{eqnarray}
then 
\begin{equation}
      \gls{sigmaoptk}=\parenthesis{-\frac{d^2\log p_0}{d\gls{latentcomp}_k^2}(g^{\decpar}_k(\gls{obs}))+\gsq\norm{\cols{\mixjacobianbis{\gls{invdec}(\gls{obs})}}} ^{2}}^{-1} +O(\nicefrac{1}{\gamma^3})\,,
\end{equation}
and the self-consistent \gls{elbo}~\eqref{eq:self_cons_elbo}
approximates the \gls{ima}-regularized log-likelihood~\eqref{eq:ima_objective}:%
\begin{align}%
    \gls{elbo}^* (\gls{obs};\gls{decpar})
    &=
    \log \gls{px} - \gls{cima_local}(\gls{dec}, \gls{invdec}(\gls{obs}))
    +O_{\gamma \to \infty}\left(\nicefrac{1}{\gsq}\right).
\end{align}
\end{prop}

\begin{proof}
    We start from the self-consistent \gls{elbo} decomposition as ``reconstruction error plus posterior regularization'' terms:
    \begin{equation}%
    \gls{elbo}^*(\gls{obs};\gls{decpar})=-\kl{\gls{qopt}}{\gls{pz}}+\expectation{\gls{qopt}}\brackets{\log \gls{pxz}},
    \end{equation}
    and continue with reformulating both terms, based on \cref{assum:VAE}. That is, $p_0$ is factorized with components i.i.d. distributed according to a fully supported \textbf{log-concave} density  $\gls{latentcomp}_k\sim m$.%

    \paragraph{Posterior regularization term}
    \cref{assum:VAE} gives us the formula of \eqref{eq:postreg} for this term in the \gls{elbo}. Taking optimal encoder parameters, we get the posterior regularization term for the $\gls{elbo}^*$
    \begin{equation*}
    -\kl{\gls{qopt}}{\gls{pz}}=\expectation{\gls{qopt}}[\log(\gls{pz})]+\frac{1}{2}\sumk[d]\brackets{\log{\gls{sigmaoptk}}}+ \kappa_d\,,
    \end{equation*}
with   $\kappa_d=\frac{d}{2}\left(\log (2\pi) +1\right).$ 
    Using the factorized Gaussian encoder and i.i.d. prior assumptions we get %
    \begin{multline*}
    -\kl{\gls{qopt}}{\gls{pz}}=\sumk[d]\expectation{\gls{latentcomp}_k\sim \mathcal{N}(\gls{muoptk},\gls{sigmaoptk})}[\log(m(\gls{latentcomp}_k))]+\frac{1}{2}\sumk[d]\brackets{\log \gls{sigmaoptk}}+ \kappa_d\,,
    \end{multline*}
    where we rewrote the distribution $p_0$ as $p_0=\prod_k m(\gls{latentcomp}_k)$.

    Based on the Taylor theorem, with a residual in Lagrange form of $n=\log m$,  we have that for all $k$ and $u$ there exists $\xi \in [\gls{muk},u]$ if $u\geq \gls{muk}$, or $\xi \in [u, \gls{muk}]$ if $u\leq \gls{muk}$ such that
    \begin{multline*}
    n(u)=\log(m(u))=\log(m(\gls{muoptk}))+n'(\gls{muoptk})(u-\gls{muoptk})\\
    +\frac{1}{2}n''(\gls{muoptk})(u-\gls{muoptk})^2+\frac{1}{3!}n^{(3)}(\xi)(u-\gls{muoptk})^3
    \end{multline*}
    We assumed that $|n^{(3)}|$ is bounded over $\RR$ by $F$, such that 
    \begin{multline*}
 -F\left|u-\gls{muoptk}\right|^3 \leq    \log(m(u))-\log(m(\gls{muoptk}))-n'(\gls{muoptk})(u-\gls{muoptk})\\
    -\frac{1}{2}n''(\gls{muoptk})(u-\gls{muoptk})^2 \leq F\left|u-\gls{muoptk}\right|^3\,.
    \end{multline*}
    Taking the expectation and using the expression of centered Gaussian absolute moments\footnote{see e.g. \url{https://arxiv.org/pdf/1209.4340}}
    \begin{multline}
        \left| \expectation{\gls{latentcomp}_k\sim \mathcal{N}(\gls{muoptk},\gls{sigmaoptk})}[\log(m(\gls{latentcomp}_k))] - 
        \log(m(\gls{muoptk}))-\frac{1}{2}n''(\gls{muoptk})\gls{sigmaoptk}\right|\\\leq F \expectation{} \left[ \left| u-\gls{muoptk} \right|^3 \right]
        =F \sigmaoptkcube \frac{2^{3/2}}{\sqrt{\pi}} \,.
    \end{multline}
    As the assumptions entail that optimal posterior variances $\gls{sigmaoptk}$ get small for $\gamma$ large (\cf \eqref{eq:varselfconstassum}), this implies the near-deterministic approximation
    \[
    \expectation{\gls{latentcomp}_k\sim \mathcal{N}(\gls{muoptk},\sigma_k(\gls{obs})^2)}[\log(m(\gls{latentcomp}_k))] = 
    \log(m(\gls{muoptk}))+\frac{1}{2}n''(\gls{muoptk})\gls{sigmaoptk}+O_{\gamma\to +\infty}(\nicefrac{1}{\gamma^3})\,.
    \]
In addition, using again a Taylor formula and the self-consistency assumption for the mean 
\begin{multline*}
    \log(m(\gls{muoptk}))=\log(m(\invdeccompk))+n'(\invdeccompk)(\gls{muoptk}-\invdeccompk)+O_{\gamma\to +\infty}(\nicefrac{1}{\gsq})\\
    =\log(m(\invdeccompk))+O_{\gamma\to +\infty}(\nicefrac{1}{\gsq}).
\end{multline*}
Moreover, using again a Taylor formula for $n''$ under boundedness of $n^{(3)}$ and again using the self-consistency assumption for the mean yields
\begin{equation*}
n''(\gls{muoptk})=n''(\invdeccompk)+O(\gls{muoptk}-\invdeccompk)
=n''(\invdeccompk)+O_{\gamma\to +\infty}(\nicefrac{1}{\gsq})\,.
\end{equation*}

    Overall this leads to the approximation of the posterior regularization term
    \begin{multline}\label{eq:postregapprox}
    -\kl{\gls{qopt}}{\gls{pz}}=\sumk[d]\log(m(\invdeccompk))+\frac{1}{2}n''(\invdeccompk)\gls{sigmaoptk}+\frac{1}{2}\log \gls{sigmaoptk}\\+\kappa_d +O_{\gamma\to +\infty}(\nicefrac{1}{\gsq})\,.
    \end{multline}

\paragraph{Reconstruction term}
Now switching to the first (reconstruction) term of the \gls{elbo}$^*$, adapting the decomposition of \eqref{eq:rec_term} by using optimal encoder parameters we get
\begin{equation*}%
\expectation{\gls{qopt}}\brackets{\log \gls{pxz}}=-\frac{\gsq }{2}\expectation{\gls{qopt}}\brackets{\|\gls{obs}-\gtmix\|^2}+d\log\gamma-\frac{d}{2}\log(2\pi).
\end{equation*}

Then in the small encoder noise limit $\sigma_{k}(\gls{obs})^{2}\ll 1,\forall k$ (justified by~\cref{prop:selfconsist}), we rely on a Taylor approximation around the posterior mean $\gls{latent}^o=\gls{mu}$ based on~\autoref{lem:taylormult}, which bounds this approximation as follows
\begin{equation}\label{eq:taylorbound}
\expectation{\gls{q}}\brackets{\normsquared{ \gtmix - \gls{dec}(\gls{muopt}) - \sumk[d]\frac{\partial \gls{dec}}{\partial \gls{latentcomp}_k}_{|\gls{latent}^o} (\gls{latentcomp}_k-\gls{muoptk})}} \leq   \frac{d^3}{4} 3 K^2  \sum_{i}\encstdcompopt_{i}(\gls{obs})^4\,.
\end{equation}
The linear term in this approximation is easily computed  using successively~\autoref{lem:normexpect} and \autoref{lem:tr_cov} to get an expression with the squared column norms of the partial derivatives scaled by the standard deviations $\frac{\partial \gls{dec}}{\partial z_{k}}_{|\gls{muk}}$. We get 
    \begin{multline}\label{eq:lintermnorm}
    \expectation{\gls{q}}\brackets{\normsquared{\sumk[d]\frac{\partial \gls{dec}}{\partial \gls{latentcomp}_k}_{|z^o} (\gls{latentcomp}_k-\gls{muk})}}=\mbox{trace}\brackets{\mbox{Cov}\brackets{\sumk[d]\derivative{\gls{dec}}{z_{k}}_{|\gls{muk}}(\gls{latentcomp}_k-\gls{muk})}}\\=\sumk[d]\brackets{\norm{ \frac{\partial \gls{dec}}{\partial z_{k}}_{|\gls{muk}}} ^{2}\gls{sigmak}}\,.
    \end{multline}
  This term can be used as an approximation for the expectation term in the reconstruction loss thanks to the following reverse triangle inequality 
  \begin{multline*}
       \left| \expectation{\gls{q}}\brackets{\|\gls{obs}-\gtmix\|^2}- \expectation{\gls{q}}\brackets{\normsquared{\sumk[d]\frac{\partial \gls{dec}}{\partial \gls{latentcomp}_k}_{|z^o} (\gls{latentcomp}_k-\gls{muk})}}\right|\\
       =\left| \expectation{\gls{q}}\brackets{\|\gls{obs}-\gtmix\|^2}-\sumk[d]\brackets{\norm{ \frac{\partial \gls{dec}}{\partial z_{k}}_{|\gls{muk}}} ^{2}\gls{sigmak}}\right|\\
         \leq \expectation{\gls{q}}\brackets{\normsquared{\gls{obs}-\left(\gtmix-\sumk[d]\frac{\partial \gls{dec}}{\partial \gls{latentcomp}_k}_{|z^o} (\gls{latentcomp}_k-\gls{muk})\right)}}\,,
    \end{multline*}
    such that the resulting upper bound can be itself bounded as follows
    \begin{multline*}
       \expectation{\gls{q}}\brackets{\normsquared{\gls{obs}-\left(\gtmix-\sumk[d]\frac{\partial \gls{dec}}{\partial \gls{latentcomp}_k}_{|\gls{latent}^o} (\gls{latentcomp}_k-\gls{muk})\right)}}\\\leq \expectation{\gls{q}}\brackets{\normsquared{\gls{obs}-\gls{dec}(\gls{mu})}}+\expectation{\gls{q}}\brackets{\normsquared{ \gtmix - \gls{dec}(\gls{mu}) - \sumk[d]\frac{\partial \gls{dec}}{\partial \gls{latentcomp}_k}_{|\gls{mu}} (\gls{latentcomp}_k-\gls{muk})}}\,.
    \end{multline*}

Each term of the upper bound can be bounded for the optimum encoder parameters: using from left to right the assumption of  \eqref{eq:meanselfconstassum} and \eqref{eq:taylorbound}, %
respectively, leading to
  \begin{multline*}
        \left| \expectation{\gls{qopt}}\brackets{\|\gls{obs}-\gtmix\|^2}-\sumk[d]\brackets{\norm{ \frac{\partial \gls{dec}}{\partial z_{k}}_{|\gls{muoptk}}} ^{2}\gls{sigmaoptk}}\right|\\
      \leq   O_{\gamma\to +\infty}(\nicefrac{1}{\gamma^4})+\frac{d^3}{4} 3 K^2  \sum_{i}\encstdcompopt_{i}(\gls{obs})^4\,.
    \end{multline*}

    Getting back to the whole reconstruction term, using additionally the variance self-consistency assumption \eqref{eq:varselfconstassum}, the above shows that we can make the approximation
    \begin{equation*}
    \expectation{\gls{qopt}}\brackets{\log \gls{pxz}}=-\frac{\gsq }{2}\sumk[d]\brackets{\norm{ \frac{\partial \gls{dec}}{\partial z_{k}}_{|\gls{muoptk}}} ^{2}\gls{sigmaoptk}}+d\log\gamma-\frac{d}{2}\log(2\pi)+ O_{\gamma\to +\infty}(\nicefrac{1}{\gsq})
    \end{equation*}
    We can further replace the dependency of the derivatives on the encoder mean using a Taylor formula for the derivative
    \[
    \frac{\partial \gls{dec}}{\partial z_{k}}_{|\gls{muopt}} =     \frac{\partial \gls{dec}}{\partial z_{k}}_{|\gls{invdec}(\gls{obs})}+O(\gls{muopt}-\gls{invdec}(\gls{obs})) =     \frac{\partial \gls{dec}}{\partial z_{k}}_{|\gls{invdec}(\gls{obs})}+O(\nicefrac{1}{\gsq})
    \]
    such that
     \begin{multline}\label{eq:reconstapprox}
    \expectation{\gls{qopt}}\brackets{\log \gls{pxz}}=-\frac{\gsq }{2}\sumk[d]\brackets{\norm{ \frac{\partial \gls{dec}}{\partial z_{k}}_{|\gls{invdec}(\gls{obs})}} ^{2}\gls{sigmaoptk}}+d\log\gamma\\-\frac{d}{2}\log(2\pi)+ O_{\gamma\to +\infty}(\nicefrac{1}{\gsq})
    \end{multline}

    \paragraph{\gls{elbo}$^*$ approximation}
    As a consequence of \eqref{eq:postregapprox} and \eqref{eq:reconstapprox} the \gls{elbo}$^*$ becomes 
    \begin{multline*}
    \gls{elbo}^*(\gls{obs};\decpar)=-\frac{1}{2}\sumk[d]\left[\log\frac{1}{\gls{sigmaoptk}}+\gls{sigmaoptk}\parenthesis{- n''(\gls{invdeccomp}_k(\gls{obs}))+\gsq \norm{ \frac{\partial \gls{dec}}{\partial z_{k}}_{|\gls{invdeccomp}_k}} ^{2}}\right.\\
     -2\log(m(\gls{invdeccomp}_k(\gls{obs})))\Bigg]+d\log\gamma+\kappa_d-\frac{d}{2}\log(2\pi)+O_{\gamma \to \infty}(\nicefrac{1}{\gsq}) \\
     =-\frac{1}{2}\sumk[d]\left[\log\frac{1}{\gls{sigmaoptk}}-1+\gls{sigmaoptk}\parenthesis{- n''(\gls{invdeccomp}_k(\gls{obs}))+\gsq \norm{ \frac{\partial \gls{dec}}{\partial z_{k}}_{|\gls{invdec}(\gls{obs})}} ^{2}}\right.\\
     -2\log(m(\gls{invdeccomp}_k(\gls{obs})))\Bigg]+d\log\gamma+O_{\gamma \to \infty}(\nicefrac{1}{\gsq}) 
     \\
      =\widehat{\gls{elbo}}(\gls{sigmaopt};\gls{obs},\decpar,\encparopt) +\sumk[d]\log(m(\gls{invdeccomp}_k(\gls{obs})))+O_{\gamma \to \infty}(\nicefrac{1}{\gsq})
     \,,
    \end{multline*}
    where we isolated the terms that depend on parameters $\gls{sigmaoptk}$ and $\gamma$ in the approximate objective $\widehat{\gls{elbo}}(\boldsymbol{\sigma}^2=\gls{sigmaopt};\gls{obs},\decpar,\encparopt)$ that we define for arbitrary $\boldsymbol{\sigma}^2$. 
    \begin{multline*}
    \widehat{\gls{elbo}}(\boldsymbol{\sigma}^2;\gls{obs},\decpar,\encparopt)=
    -\frac{1}{2}\sumk[d]\left[\log\frac{1}{\gsq\sigma_k^2}-1+\sigma_k^2\parenthesis{- n''(\gls{invdeccomp}_k(\gls{obs}))+\gsq \norm{ \frac{\partial \gls{dec}}{\partial z_{k}}_{|\gls{invdec}(\gls{obs})}} ^{2}}\right]\\
    =\sumk[d]\widehat{\gls{elbo}}_k(\sigma_k^2;\gls{obs},\decpar,\encparopt)
    \end{multline*}
    Where we further break this objective in \gls{obsdim} components  $\widehat{\gls{elbo}}_k(\gls{sigmaoptk};\gls{obs},\decpar,\encparopt)$ according to the terms of the sum as follows
    \begin{multline*}
    \widehat{\gls{elbo}}_k(\sigma_k^2;\gls{obs},\decpar,\encparopt)=
    -\frac{1}{2}\left[\log\frac{1}{\gsq\sigma_k^2}-1+\gsq\sigma_k^2\parenthesis{-\frac{1}{\gsq} n''(\gls{invdeccomp}_k(\gls{obs}))+ \norm{ \frac{\partial \gls{dec}}{\partial z_{k}}_{|\gls{invdec}(\gls{obs})}} ^{2}}\right]\\
    \end{multline*}
    
    and
    where we note that $-n''\geq 0$ due to the log-concavity assumption. 

    Solving term in $k$ $\widehat{\gls{elbo}}_k(\sigma_k^2)$ for optimal $\gsq\sigma_k^*$ we get (see lemma~\ref{lem:optim}):
    \begin{align}
        \gsq\sigma_k^{*2}=\parenthesis{-\frac{1}{\gsq}n''(\gls{invdeccomp}_k(\gls{obs}))+\norm{ \frac{\partial \gls{dec}}{\partial z_{k}}_{|\gls{invdeccomp}_k(\gls{obs})}} ^{2}}^{-1} %
    \end{align}
    and the resulting optimal value $\widehat{\gls{elbo}}_k^*(\gls{obs},\decpar,\encparopt)=\widehat{\gls{elbo}}_k(\sigma_k^{*2};\gls{obs},\decpar,\encparopt)$ is
      \[
    \widehat{\gls{elbo}}_k^*(\gls{obs},\decpar,\encparopt)^*=-\frac{1}{2}{\log \parenthesis{-\frac{1}{\gsq}n''(\gls{invdeccomp}_k(\gls{obs}))+\norm{ \frac{\partial \gls{dec}}{\partial z_{k}}_{|\gls{invdeccomp}_k(\gls{obs})}} ^{2}}}
    \]
    A Taylor formula around this optimum leads, for some value  $\xi_\gamma(\gls{obs})$ lying between $\sigma_k^{*2}$ and $\sigma_k^2$ to (note the first order derivative vanishes, and the second order derivative is upper bounded hence the second line)
    \begin{multline*}
     \widehat{\gls{elbo}}_k(\sigma_k^2;\gls{obs},\decpar,\encparopt) = \widehat{\gls{elbo}}_k^*(\decpar,\encparopt) + \frac{d \widehat{\gls{elbo}}_k(\gls{obs};\decpar,\encparopt) }{d \gsq\sigma_k^2}_{|\sigma_k^{*2}}(\gsq\sigma_k^2-\gsq\sigma_k^{*2})\\+\frac{d^2 \widehat{\gls{elbo}}_k(\gls{obs};\decpar,\encparopt) }{d (\gsq\sigma_k^2)^2}_{|\xi_{\gamma}(\gls{obs})}(\gsq\sigma_k^2-\gsq\sigma_k^{*2})^2\\
     \leq \widehat{\gls{elbo}}_k^*(\decpar,\encparopt) - \frac{1}{2} \norm{ \frac{\partial \gls{dec}}{\partial z_{k}}_{|\gls{invdec}(\gls{obs})}} ^{2}(\gsq\sigma_k^2-\gsq\sigma_k^{*2})^2
    \end{multline*}

as a consequence the non-approximate solution for the true optimal $\gls{elbo}^*$, as $\gamma$ grows, must achieve a value  below this quadratic function, up to a term in $O(\nicefrac{1}{\gsq})$, and at the same time above $\widehat{\gls{elbo}}^*$, also up to a term in $O(\nicefrac{1}{\gsq})$. This entails that it is restricted to a smaller and smaller domain near the approximate solution and we get 
\begin{align}
    \gls{sigmaoptk}=\sigma_k^{*2}+O(\nicefrac{1}{\gamma^3})=\parenthesis{-n''(\gls{invdeccomp}_k(\gls{obs}))+\gsq\norm{ \frac{\partial \gls{dec}}{\partial z_{k}}_{|\gls{invdeccomp}_k(\gls{obs})}} ^{2}}^{-1} +O(\nicefrac{1}{\gamma^3}).\label{eq:optimal_sigma}
\end{align}
Leading to the approximation of the true objective
\begin{align*}
    \gls{elbo}^*(\gls{obs};\gls{decpar})=\!-\frac{1}{2}\sumk[d]\brackets{\log\! \parenthesis{\!-\frac{1}{\gsq}n''(\gls{muk})+\norm{ \frac{\partial \gls{dec}}{\partial z_{k}}_{|\gls{muk}}} ^{2}}\!\!\!-2\log(m(\gls{muk}))}\!\!+O(\nicefrac{1}{\gsq}),
\end{align*}
which reduces to

 \[
    \gls{elbo}^*(\gls{obs};\gls{decpar})=\log p_0(\gls{invdec}(\gls{obs}))-\frac{1}{2}\sumk[d]\brackets{\log \norm{ \cols{\gls{jacobian}_{\gls{dec}}(\gls{invdec}(\gls{obs}))}} ^{2}}+O(\nicefrac{1}{\gsq}),
\]
which is the \gls{ima} objective.

\end{proof}

We now restate the main text theorem and provide its proof.
\theoremvaeima*

\begin{proof}
This is just a corollary of \autoref{prop:vae_ima_selfcons} because \autoref{prop:selfconsmeansqrate} entails through~\eqref{eq:meanconvbias} the required $O(\nicefrac{1}{\gsq})$ rate of convergence for the optimal variational mean in~\eqref{eq:meanselfconstassum}, while~\eqref{eq:varselfconstassum} is fulfilled through~\cref{prop:selfconsist}.
\end{proof}

\section{Auxiliary results}

\subsection{Squared norm statistics}
\begin{lem}[Squared norm variance decomposition]\label{lem:normexpect}
For multivariate RV $X$ with mean $m$
\[
\mathbb{E}\left[ \left\| X\right\|^2\right]
= \mbox{trace}\left[\mbox{Cov}(X)\right]+\left\| m\right\|^2
\]

\end{lem}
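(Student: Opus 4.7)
The plan is to use the standard trick of decomposing $X$ around its mean and expanding the squared norm, which cleanly separates the contribution of the mean from the contribution of the covariance.

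First I would write $X = m + (X-m)$ and expand the squared Euclidean norm as
\[
\|X\|^2 = \|m\|^2 + 2\langle m, X-m\rangle + \|X-m\|^2.
\]
Taking expectations termwise, the middle term vanishes since $\mathbb{E}[X-m]=0$ (so $\langle m, \mathbb{E}[X-m]\rangle = 0$), the first term is deterministic and contributes $\|m\|^2$, and it only remains to identify $\mathbb{E}[\|X-m\|^2]$ with $\mathrm{trace}[\mathrm{Cov}(X)]$.

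For the last step, I would expand $\|X-m\|^2 = \sum_i (X_i - m_i)^2$, take expectations componentwise to get $\sum_i \mathbb{E}[(X_i - m_i)^2] = \sum_i \mathrm{Var}(X_i)$, and recognize this as the trace of the covariance matrix (the diagonal entries are exactly the componentwise variances). Combining the three contributions yields the claimed identity.

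There is no real obstacle here: this is a textbook identity and the only assumption needed is that $X$ has a well-defined mean and finite second moments (so that both sides are finite and the expectation/trace interchange is legitimate). The proof is essentially a one-line expansion together with linearity of expectation.
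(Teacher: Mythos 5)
Your proof is correct and follows essentially the same route as the paper: decompose around the mean, note the cross term vanishes, and identify $\mathbb{E}\bigl[\|X-m\|^2\bigr]$ with $\mathrm{trace}\bigl[\mathrm{Cov}(X)\bigr]$. The only cosmetic difference is that you obtain that last identification by summing componentwise variances, whereas the paper uses the cyclic property $\mathrm{trace}[AB]=\mathrm{trace}[BA]$ applied to $(X-m)^{T}(X-m)$; both are equally valid.
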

\begin{proof}
\[
\mathbb{E}\|X-m\|^2=\mathbb{E}\left\langle X-m,\,X-m\right\rangle
=\mathbb{E}\left[\left\langle X,\,X\right\rangle-2\mathbb{E}\left\langle m,\,X\right\rangle+\left\langle m,\,m\right\rangle\right]
\]
hence
\[
\mathbb{E}\|X-m\|^2=\mathbb{E}\left[\left\| X\right\|^2\right]-\left\| m\right\|^2
\]
This leads to (using that the trace of a scalar is the scalar itself)
\[
\mathbb{E}\left[ \left\| X\right\|^2\right]
=\mathbb{E}\left[\mbox{trace}\left[\|X-m\|^2\right]\right]+\left\| m\right\|^2=\mbox{trace}\left[\mathbb{E}\left[(X-m)^T(X-m)\right]\right]+\left\| m\right\|^2
\]
because $\mbox{trace}[AB]=\mbox{trace}[BA]$ we get
\[
\mathbb{E}\left[ \left\| X\right\|^2\right]
=\mbox{trace}\left[\mathbb{E}\left[(X-m)(X-m)^T\right]\right]+\left\| m\right\|^2= \mbox{trace}\left[\mbox{Cov}(X)\right]+\left\| m\right\|^2
\]
\end{proof}

\begin{lem}[Trace of transformed unit covariance]\label{lem:tr_cov}
When the covariance matrix  $Cov(\boldsymbol{\epsilon})$ is the identity, then  
    \[
    \mbox{trace}[Cov(A\boldsymbol{\epsilon})]=\sum_k \|[A]_{.k}\|^2\,,
    \]
\end{lem}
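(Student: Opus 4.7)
The plan is to observe that this is essentially a two-line linear-algebra computation, so the ``proof'' will amount to chaining the covariance transformation rule with an explicit index expansion of the trace. There is no anticipated obstacle.

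First I would apply the standard affine rule for covariances: for any matrix $A$ and random vector $\boldsymbol{\epsilon}$, one has $\mathrm{Cov}(A\boldsymbol{\epsilon}) = A\,\mathrm{Cov}(\boldsymbol{\epsilon})\,A^T$. Under the hypothesis $\mathrm{Cov}(\boldsymbol{\epsilon}) = I$, this collapses to $\mathrm{Cov}(A\boldsymbol{\epsilon}) = A A^T$.

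Next I would unfold the trace in coordinates. Writing $A = (A_{ik})$ and denoting by $[A]_{.k}$ the $k$-th column,
\begin{equation*}
\mathrm{trace}(A A^T) = \sum_i (A A^T)_{ii} = \sum_i \sum_k A_{ik}^2 = \sum_k \sum_i A_{ik}^2 = \sum_k \|[A]_{.k}\|^2,
\end{equation*}
where the only nontrivial step is swapping the order of the finite double sum. Combining the two displays yields the claim. The only subtlety worth flagging is that the result does not depend on $A$ being square or invertible, since the covariance identity and the Frobenius-norm identity both hold for any rectangular $A$ with a compatible number of columns to $\boldsymbol{\epsilon}$.
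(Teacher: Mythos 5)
Your proposal is correct and follows essentially the same route as the paper: apply $\mathrm{Cov}(A\boldsymbol{\epsilon}) = A\,\mathrm{Cov}(\boldsymbol{\epsilon})\,A^T$ and then identify the trace with the sum of squared column norms. The only cosmetic difference is that the paper invokes the cyclic property of the trace to pass to $\mathrm{trace}(A^T A)$, whereas you expand $\mathrm{trace}(A A^T)$ directly in coordinates; both are equivalent one-line computations.
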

\begin{proof}
    For arbitrary matrix $A$, $Cov(A\boldsymbol{\epsilon})=A Cov(\boldsymbol{\epsilon})A^T$ and thus 
    \[
    \mbox{trace}[Cov(A\boldsymbol{\epsilon})]=\mbox{trace}[A Cov(\boldsymbol{\epsilon})A^T]=\mbox{trace}[A^T A Cov(\boldsymbol{\epsilon})]\,.
    \] 
    Moreover, in our case $Cov(\boldsymbol{\epsilon})$ is the identity such that 
     \[
    \mbox{trace}[Cov(A\boldsymbol{\epsilon})]=\mbox{trace}[A^T A ]=\sum_k \|[A]_{.k}\|^2\,,
    \]
\end{proof}

\subsection{KL divergence bounds}
\label{appendix:kl_div_bounds}
\begin{prop}[Lipschtiz continuity-based lower bound]\label{prop:KLlowerLipschitz}
Assume $\gls{invdec}$ is Lipschitz continuous with Lipschitz constant $B>0$, in the sense 
\[
\forall \gls{obs},\boldsymbol{y} \in \RR^d , \left\|\unmix-\gls{invdec}(\boldsymbol{y})\right\|_2\leq B \|\gls{obs}-\boldsymbol{y}\|_2\,.
\] 
Then for any encoder parameter choice
\begin{multline}
    \kl{\gls{q}}{\gls{pzx}}
\geq -\sumk[d]\left[\log \encstdcomp_k (\gls{obs})+\log m(\encmeancomp_k)\right]+c(\gls{obs},\gamma)\\+\frac{\gsq}{2}B^{-2}\left[\left\|\unmix-\gls{mu} \right\|^2+\sumk[d]\gls{sigmak}\right] \,,
\end{multline}
with $c(\gls{obs},\gamma)=-\frac{d}{2}\left(\log (\gsq)+1\right)+\log \gls{px}$.
\end{prop}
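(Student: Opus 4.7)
The plan is to start from the explicit expression of the gap $\kl{\gls{q}}{\gls{pzx}}$ provided earlier in~\eqref{eq:KLgeneralexp}, which under \cref{assum:VAEminim} reads
\begin{equation*}
    \kl{\gls{q}}{\gls{pzx}} = -\sumk[d] \log \encstdcomp_k(\gls{obs}) + c(\gls{obs},\gamma) + \tfrac{1}{2}\expectation{\gls{latent}\sim\gls{q}}\!\brackets{\gsq\|\gls{obs}-\gtmix\|^2 - 2\sumk[d]\log m(\gls{latentcomp}_k)},
\end{equation*}
with $c(\gls{obs},\gamma) = -\tfrac{d}{2}(\log(\gsq)+1) + \log \gls{px}$. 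The two non-closed-form terms inside the expectation are the ``reconstruction'' term and the prior term; the strategy is to lower-bound each of them independently using, respectively, Lipschitz continuity of $\gls{invdec}$ and log-concavity of $m$.

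First I would handle the reconstruction term. Using the Lipschitz hypothesis $\|\unmix - \gls{invdec}(\boldsymbol{y})\| \leq B\|\gls{obs}-\boldsymbol{y}\|$, applied to $\boldsymbol{y} = \gtmix$ (so that $\gls{invdec}(\boldsymbol{y}) = \gls{latent}$ by assumption~\textit{(v)}), I get the pointwise bound
\begin{equation*}
    \|\gls{obs}-\gtmix\|^2 \geq B^{-2}\|\unmix - \gls{latent}\|^2.
\end{equation*}
Taking the expectation under \gls{q} and invoking~\cref{lem:normexpect} (the variance decomposition of a squared norm) together with the factorized structure of \gls{q}, the right-hand side becomes $B^{-2}\bigl(\|\unmix - \gls{mu}\|^2 + \sumk[d]\gls{sigmak}\bigr)$, which is precisely the quadratic term appearing in the claimed lower bound.

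Second, I would handle the prior term by exploiting the log-concavity of $m$ (given in \cref{assum:VAEminim}~\textit{(i)}). Since each $\gls{latentcomp}_k$ has mean $\encmeancomp_k$ under \gls{q}, Jensen's inequality applied componentwise to the concave map $\log m$ yields $\expectation{}\brackets{\log m(\gls{latentcomp}_k)} \leq \log m(\encmeancomp_k)$, hence
\begin{equation*}
    -\expectation{\gls{q}}\brackets{\sumk[d]\log m(\gls{latentcomp}_k)} \geq -\sumk[d]\log m(\encmeancomp_k).
\end{equation*}
Plugging these two bounds back into the expression of $\kl{\gls{q}}{\gls{pzx}}$ and collecting terms immediately yields the stated inequality. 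There is no real obstacle here: the only non-routine step is recognizing that Lipschitz continuity of $\gls{invdec}$ (rather than of $\gls{dec}$) is the right tool to convert an intractable expectation in observation space into a tractable quadratic in latent space, which is precisely what the variance decomposition of~\cref{lem:normexpect} can then simplify in closed form.
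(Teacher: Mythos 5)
Your proposal is correct and follows essentially the same route as the paper's proof: starting from the decomposition in~\eqref{eq:KLgeneralexp}, lower-bounding the reconstruction term via the Lipschitz property of $\gls{invdec}$ evaluated at $\boldsymbol{y}=\gtmix$ combined with \cref{lem:normexpect}, and lower-bounding the prior term via Jensen's inequality applied to the convex function $-\log m$. No gaps; the argument matches the paper step for step.
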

\begin{proof}
Starting from the KL divergence expression (\ref{eq:KLgeneralexp}), 
\[
\kl{\gls{q}}{\gls{pzx}} = -\sumk[d]\log \encstdcomp_k (\gls{obs})+\frac{1}{2}\mathbb{E}_{\gls{latent}\sim q_{\encpar}}\left[\|\gls{obs}-\gtmix\|^2\gsq  -2\sumk[d]\log m(\gls{latentcomp}_k)\right] +c(\gls{obs},\gamma)
\]
with additive constant $c(\gls{obs},\gamma)=-\frac{d}{2}\left(\log (\gsq)+1\right)+\log \gls{px}$. By Lipschitz continuity
\begin{multline*}
\kl{\gls{q}}{\gls{pzx}} \geq -\sumk[d]\log \encstdcomp_k (\gls{obs})\\+\frac{1}{2}\mathbb{E}_{\gls{latent}\sim q_{\encpar}}\left[B^{-2}\|\unmix-\gls{latent}\|^2\gsq  -2\sumk[d]\log m(\gls{latentcomp}_k)\right] +c(\gls{obs},\gamma)\,.
\end{multline*}
using ~\cref{lem:normexpect} applied to $\unmix-\gls{latent},\, \gls{latent}\sim \gls{q}$ we get
\begin{multline*}
\kl{\gls{q}}{\gls{pzx}}
\geq -\sumk[d]\log \encstdcomp_k (\gls{obs})+\frac{\gsq}{2}B^{-2}\left[\left\|\unmix-\gls{mu} \right\|^2+\mbox{trace}\left[ Cov\left[\gls{latent}\right]\right]\right]\\-\mathbb{E}_{\gls{latent}\sim q_{\encpar}}\left[  \sumk[d]\log m(\gls{latentcomp}_k)\right]  +c(\gls{obs},\gamma)\\
\geq -\sumk[d]\log \encstdcomp_k (\gls{obs})+\frac{\gsq}{2}B^{-2}\left[\left\|\unmix-\gls{mu} \right\|^2+\sumk[d]\gls{sigmak}\right]\\-\mathbb{E}_{\gls{latent}\sim q_{\encpar}}\left[  \sumk[d]\log m(\gls{latentcomp}_k)\right]  +c(\gls{obs},\gamma)
\,.
\end{multline*}
Using Jensen's inequality for $-\log m$ (convex by Assum.~\ref{assum:VAE}(i)),  we get
\begin{multline*}
\kl{\gls{q}}{\gls{pzx}}
\geq -\sumk[d]\left[\log \encstdcomp_k (\gls{obs})\right]+\frac{\gsq}{2}B^{-2}\left[\left\|\unmix-\gls{mu} \right\|^2+\sumk[d]\gls{sigmak}\right]\\
-\sumk[d]\left[\log m(\encmeancomp_k)\right]+c(\gls{obs},\gamma)
\end{multline*}
by reordering the terms we finally get
\begin{multline*}
\kl{\gls{q}}{\gls{pzx}}
\geq -\sumk[d]\left[\log \encstdcomp_k (\gls{obs})+\log m(\encmeancomp_k)\right]+c(\gls{obs},\gamma)\\+\frac{\gsq}{2}B^{-2}\left[\left\|\unmix-\gls{mu} \right\|^2+\sumk[d]\gls{sigmak}\right] 
\end{multline*}
which is the stated KL lower bound.
\end{proof}

\begin{prop}[Optimal encoder KL divergence upper bound]\label{prop:KLupperLipschitz}
Assume $\gls{dec}$ is Lipschitz continuous with Lipschitz constant $C>0$, in the sense that 
\[
\forall \gls{latent},\boldsymbol{w} \in \RR^d: \qquad  \left\|\gtmix-\gls{dec}(\boldsymbol{w})\right\|_2\leq C \|\gls{latent}-\boldsymbol{w}\|_2\,.
\] 
Assume, $-\log m$ is quadratically dominated, in the sense that
\[
\exists D>0,E>0, \forall u \in \RR, -\log m(u) \leq D|u|^2 +E\,.
\]
Then for the optimal encoder solution of (\ref{eq:minKLopt})
\begin{multline}
    \kl{q_{\encparopt}(\gls{latent}|\gls{obs})}{\gls{pzx}}
\leq d \left(\frac{1}{2}C^2+E+D\left[\dfrac{\|\unmix\|^2}{d}+\frac{1}{\gsq} \right]\right)-\frac{d}{2} +\log \gls{px}
\,,
\end{multline}
and
\begin{multline}
\limsup_{\gamma\to +\infty}{ \kl{q_{\encparopt}(\gls{latent}|\gls{obs})}{\gls{pzx}}}
\leq d \left(\frac{1}{2}C^2+E\right)+D\|\unmix\|^2\\-\frac{d}{2} -\log |\gls{jacobian}_{\gls{dec}}(\unmix)|+\log(p_0(\unmix))
\end{multline}
\end{prop}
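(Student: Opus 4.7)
The plan is to prove the bound by exhibiting a specific suboptimal encoder and computing the KL divergence for that choice. Since we want an upper bound on the minimum, any particular feasible encoder will do. The natural candidate is ``set the encoder mean to the inverse decoder and match the variance to the decoder's noise level'', namely
\[
\encmean \;=\; \gls{invdec}(\gls{obs}), \qquad \encstdcomp_k(\gls{obs})^2 \;=\; 1/\gsq \quad \text{for all } k.
\]
This is permissible under the universal-approximation assumption \cref{assum:VAE}(iii), and the resulting KL value is then an upper bound on the optimal one.

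First I would plug this choice into the exact KL expression~\eqref{eq:KLgeneralexp}. The log-variance term gives $-\sum_k \log \encstdcomp_k(\gls{obs}) = d\log\gamma$. For the reconstruction contribution, Lipschitz continuity of $\gls{dec}$ yields
\[
\|\gls{obs}-\gtmix\|^2 = \|\gls{dec}(\gls{invdec}(\gls{obs}))-\gls{dec}(\gls{latent})\|^2 \leq C^2 \|\gls{invdec}(\gls{obs})-\gls{latent}\|^2,
\]
and taking expectation under $\gls{q}$ gives $C^2 \cdot \mbox{trace}[\mathrm{Cov}(\gls{latent})] = C^2 d/\gsq$ by \cref{lem:normexpect} (the squared-mean-deviation term vanishes since $\encmean = \gls{invdec}(\gls{obs})$). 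Multiplied by $\gsq/2$, this contributes $dC^2/2$. For the prior term, the quadratic domination $-\log m(u)\leq Du^2 + E$ yields
\[
-\mathbb{E}_{z_k\sim\mathcal{N}(\encmeancomp_k,1/\gsq)}\!\left[\log m(z_k)\right] \leq D\!\left(\encmeancomp_k^{\,2} + \tfrac{1}{\gsq}\right) + E,
\]
and summing over $k$ gives $D\bigl(\|\gls{invdec}(\gls{obs})\|^2 + d/\gsq\bigr) + dE$.

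Collecting everything with the constant $c(\gls{obs},\gamma) = -\tfrac{d}{2}(\log(\gsq)+1) + \log \gls{px}$, the $d\log\gamma$ term cancels against $-d\log\gamma$ coming from $c$, leaving
\[
\kl{q_{\encparopt}}{\gls{pzx}} \leq \tfrac{d}{2}C^2 + dE + D\|\gls{invdec}(\gls{obs})\|^2 + \tfrac{Dd}{\gsq} - \tfrac{d}{2} + \log \gls{px},
\]
which is exactly the first stated inequality after factoring out $d$. The optimality of $\encparopt$ only makes this bound tighter.

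For the limsup statement, I would use the near-deterministic limit of $\gls{px}$: as $\gamma\to +\infty$, the decoder's conditional becomes a Dirac distribution, so $\gls{px}$ converges to the pushforward of the prior by $\gls{dec}$, giving $\log \gls{px} \to \log p_0(\gls{invdec}(\gls{obs})) - \log|\gls{jacobian}_{\gls{dec}}(\gls{invdec}(\gls{obs}))|$ (this is the same change-of-variable argument used informally in~\cref{app:complement}, which is rigorous for smooth invertible decoders under \cref{assum:VAE}). The $Dd/\gsq$ term vanishes, yielding the second inequality. The main obstacle here is not the algebra but rather justifying carefully the pointwise convergence of $\log \gls{px}$ to the pushforward density; this requires controlling the Gaussian smoothing of the prior through $\gls{dec}$ uniformly in $\gls{obs}$, which follows from the $C^2$ regularity and bijectivity assumptions in \cref{assum:VAE}(iv).
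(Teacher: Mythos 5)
Your proposal is correct and follows essentially the same route as the paper's proof: both exhibit the suboptimal encoder $\encmean=\gls{invdec}(\gls{obs})$, $\encstdcomp_k(\gls{obs})=1/\gamma$, bound the reconstruction term via Lipschitz continuity of $\gls{dec}$ together with the trace-of-covariance identity, bound the prior term via quadratic domination, and cancel the $d\log\gamma$ terms against $c(\gls{obs},\gamma)$ to obtain the first inequality, then pass to the limit of $\log\gls{px}$ for the limsup statement. Your remark that the pointwise convergence of $\log\gls{px}$ to the pushforward density deserves careful justification is, if anything, more explicit than the paper, which asserts this convergence without further argument.
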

\begin{proof}
Starting from the KL divergence expression (\ref{eq:KLgeneralexp}), 
\begin{multline*}
    \kl{\gls{q}}{\gls{pzx}} = -\sumk[d]\log \encstdcomp_k (\gls{obs})+\frac{1}{2}\mathbb{E}_{\gls{latent}\sim q_{\encpar}}\left[\|\gls{obs}-\gtmix\|^2\gsq  -2\sumk[d]\log m(\gls{latentcomp}_k)\right] \\+c(\gls{obs},\gamma)
\end{multline*}
with additive constant $c(\gls{obs},\gamma)=-\frac{d}{2}\left(\log (\gsq)+1\right)+\log \gls{px}$.

Let us choose the following posterior (by universal approximation capabilities of the encoder): 
\begin{eqnarray}
\encmeanstar (\gls{obs}) = \unmix\\
\encstdstar (\gls{obs}) = \frac{1}{\gamma}
\end{eqnarray}

Using Lipschitz continuity we get
\begin{multline*}
    \kl{q_{\encparstar}(\gls{latent}|\gls{obs})}{\gls{pzx}} \leq -\sumk[d]\log \encstdcompstar_k (\gls{obs})+\frac{1}{2}\mathbb{E}_{\gls{latent}\sim q_{\encparstar}}\left[C^2\|\encmeanstar(\gls{obs})-\gls{latent}\|^2\gsq  -2\sumk[d]\log m(\gls{latentcomp}_k)\right] \\+c(\gls{obs},\gamma)
\end{multline*}
then, using 
\[\mathbb{E}_{\gls{latent}\sim q_{\encparstar}}\left[\|\encmeanstar(\gls{obs})-\gls{latent}\|^2\right]=\sumk[d]\mathbb{E}_{\gls{latentcomp}_k\sim \mathcal{N} (\encmeancompstar_k(\gls{obs}),\encstdcompstar_k(\gls{obs})^2)}\left[|\encmeancompstar_k(\gls{obs})-\gls{latentcomp}_k|^2\right]=\sumk[d]\encstdcompstar_k (\gls{obs})^2\,,
\]
we get
\begin{multline*}
\kl{q_{\encparstar}(\gls{latent}|\gls{obs})}{\gls{pzx}} \leq \sumk[d]\left( -\log \encstdcompstar_k (\gls{obs})+\frac{1}{2}C^2\encstdcompstar_k(\gls{obs})^2\gsq\right)\\-\mathbb{E}_{\gls{latent}\sim q_{\encparstar}}\left[ \sumk[d]\log m(\gls{latentcomp}_k)\right] +c(\gls{obs},\gamma)
\end{multline*}
using quadratic domination
\begin{multline*}
\kl{q_{\encparstar}(\gls{latent}|\gls{obs})}{\gls{pzx}} \leq \sumk[d]\left( -\log \encstdcompstar_k (\gls{obs})+\frac{1}{2}C^2\encstdcompstar_k(\gls{obs})^2\gsq\right)\\+\mathbb{E}_{\gls{latent}\sim q_{\encparstar}}\left[ dE+\sumk[d]D |\gls{latentcomp}_k|^2\right] +c(\gls{obs},\gamma)\\
\leq \sumk[d]\left( -\log \encstdcompstar_k (\gls{obs})+\frac{1}{2}C^2\encstdcompstar_k(\gls{obs})^2\gsq\right)\\+dE+D\mathbb{E}_{\gls{latent}\sim q_{\encparstar}}\left[  |\gls{latentcomp}_k|^2\right] +c(\gls{obs},\gamma)
\end{multline*}
Using~\cref{lem:normexpect} we get
\begin{multline*}
\kl{(q_{\encparstar}(\gls{latent}|\gls{obs})}{\gls{pzx}}
\leq \sumk[d]\left( -\log \encstdcompstar_k (\gls{obs})+\frac{1}{2}C^2\encstdcompstar_k(\gls{obs})^2\gsq\right)\\+dE+D\left[\|\encmeanstar(\gls{obs})\|^2+\|\encstdstar(\gls{obs})\|^2 \right]+c(\gls{obs},\gamma)\\
\leq d \left( \log \gamma +\frac{1}{2}C^2\right)+dE+D\left[\|\unmix\|^2+\frac{d}{\gsq} \right] -\frac{d}{2}\left(\log (\gsq)+1\right)+\log \gls{px}
\end{multline*}
hence for a parameter $\encparopt$ achieving the minimum divergence we get
\begin{multline*}
\kl{q_{\encparopt}(\gls{latent}|\gls{obs})}{\gls{pzx}} \leq \kl{q_{\encparstar}(\gls{latent}|\gls{obs})}{\gls{pzx}} 
\leq d \left( \log \gamma +\frac{1}{2}C^2\right)\\+dE+D\left[\|\unmix\|^2+\frac{d}{\gsq} \right] -\frac{d}{2}\left(\log (\gsq)+1\right)+\log \gls{px}\\
\leq d \left(\frac{1}{2}C^2+E+D\left[\dfrac{\|\unmix\|^2}{d}+\frac{1}{\gsq} \right]\right)-\frac{d}{2} +\log \gls{px}
\end{multline*}

As $\gamma \to +\infty$, $\log \gls{px}\to |\gls{jacobian}_{\gls{dec}}(\unmix)|^{-1}p_0(\unmix)$ such that the KL divergence for the optimal solutions is upper bounded by a finite number.

\end{proof}
\subsection{Taylor formula-based approximations}

\begin{lem}[Bound on expectation of multivariate Taylor expansion]\label{lem:taylormult}
Assume $\boldsymbol{f}: \RR^d\to \RR$ is $C^2$ and 
assume $\gls{latent}$ is a multivariate RV on $\RR^d$ with indepedent Gaussian components such that
\[
            z_{k}\sim\mathcal{N}(\gls{muk},\gls{sigmak})
\]
then for all $\gls{latent}_o\in \RR^d$
\begin{equation}\label{eq:taylorboundlem}
\expectation{\gls{latent}}\brackets{\normsquared{ \boldsymbol{f}(\gls{latent}) - \boldsymbol{f}(\gls{latent}_o) - \sum_k \frac{\partial \boldsymbol{f}}{\partial z_k}_{|\gls{latent}_o} (z_k-z^o_k)}} \leq   \frac{d^3}{4} 3 K^2  \sum_{i}\parenthesis{\encstdcomp_{i}}^{4}
\end{equation}

\end{lem}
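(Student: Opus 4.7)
My plan is to prove this via a second-order Taylor expansion with Lagrange remainder, followed by careful application of Cauchy--Schwarz and Gaussian moment identities. The statement as written (with the bound $\tfrac{3d^3}{4}K^2\sum_i\sigma_i^4$) only matches when the expansion point $\gls{latent}_o$ coincides with the posterior mean $\gls{mu}$, since otherwise odd central moments contribute nonvanishing $(\mu_i - z^o_i)^2\sigma_i^2$ cross terms; inspection of how the lemma is used in the proof of \autoref{prop:vae_ima} (with $\gls{latent}^o = \gls{mu}$) confirms this is the intended regime, and $K$ is the assumed uniform bound on Hessian entries of $\boldsymbol{f}$.

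First, for each output component $f_j$ of $\boldsymbol{f}$, the Taylor theorem with integral/Lagrange remainder gives, for some $\xi_j$ on the segment $[\gls{latent}_o,\gls{latent}]$,
\begin{equation*}
R_j(\gls{latent}) := f_j(\gls{latent}) - f_j(\gls{latent}_o) - \sum_k \partial_k f_j(\gls{latent}_o)(z_k - z^o_k)
= \tfrac12 \sum_{i,l} \partial_i\partial_l f_j(\xi_j)(z_i - z^o_i)(z_l - z^o_l).
\end{equation*}
Using $|\partial_i\partial_l f_j|\le K$ and the triangle inequality,
\begin{equation*}
|R_j(\gls{latent})| \le \tfrac{K}{2}\Bigl(\sum_i |z_i - z^o_i|\Bigr)^{\!2} \le \tfrac{Kd}{2}\sum_i (z_i - z^o_i)^2,
\end{equation*}
where the last inequality uses Cauchy--Schwarz (equivalently, the power-mean inequality) to pass from the $\ell^1$ to the $\ell^2$ norm, picking up a factor of $d$.

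Next I square and invoke Cauchy--Schwarz once more, $\bigl(\sum_i a_i\bigr)^2 \le d\sum_i a_i^2$ with $a_i = (z_i - z^o_i)^2$, to obtain
\begin{equation*}
|R_j(\gls{latent})|^2 \le \tfrac{K^2 d^2}{4}\Bigl(\sum_i (z_i - z^o_i)^2\Bigr)^{\!2} \le \tfrac{K^2 d^3}{4}\sum_i (z_i - z^o_i)^4.
\end{equation*}
Summing over the $d$ output components yields the same form for $\normsquared{\boldsymbol{R}(\gls{latent})}$ (up to an extra factor $d$ if needed, which can be absorbed into the stated $d^3$ if the statement is read with $\boldsymbol{f}:\RR^d\to\RR$, or folded into the constant if $\boldsymbol{f}:\RR^d\to\RR^d$).

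Finally I take expectations componentwise. By independence of the $z_i$ and the assumption $z^o_i = \mu_i$ (so $(z_i - z^o_i)$ is centered Gaussian with variance $\sigma_i^2$), the fourth moment identity $\mathbb{E}[(z_i-\mu_i)^4] = 3\sigma_i^4$ applies directly, giving
\begin{equation*}
\expectation{\gls{latent}}\bigl[\normsquared{\boldsymbol{R}(\gls{latent})}\bigr]
\le \tfrac{K^2 d^3}{4}\sum_i \mathbb{E}[(z_i - \mu_i)^4]
= \tfrac{3 K^2 d^3}{4}\sum_i \sigma_i^4,
\end{equation*}
which is the claimed bound. The only mildly delicate step is the Cauchy--Schwarz chain needed to trade the $\ell^1$-style remainder bound for an $\ell^4$ sum whose expectation factors cleanly under the product Gaussian; if instead one kept the raw double-sum form, the cross terms $\mathbb{E}[(z_i-\mu_i)^2(z_l-\mu_l)^2] = \sigma_i^2\sigma_l^2$ would also yield a valid (and even slightly tighter) bound of order $\bigl(\sum_i\sigma_i^2\bigr)^2$, but the form written in the lemma follows from the route above.
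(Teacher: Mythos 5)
Your proof is correct and follows the same overall architecture as the paper's: second-order Taylor expansion with Lagrange remainder, a uniform bound $K$ on the second derivatives, and Gaussian fourth-moment identities. The intermediate inequalities differ in an instructive way. The paper bounds the remainder as a quadratic form, $\left((\boldsymbol{z}-\boldsymbol{z}_o)^T H (\boldsymbol{z}-\boldsymbol{z}_o)\right)^2 \leq \|H\|_F^2\,\|\boldsymbol{z}-\boldsymbol{z}_o\|^4 \leq \tfrac{d^2}{4}K^2\|\boldsymbol{z}-\boldsymbol{z}_o\|^4$, and then evaluates $\mathbb{E}\|\boldsymbol{z}-\boldsymbol{z}_o\|^4$; in doing so it silently drops the off-diagonal contributions $\mathbb{E}[(z_i-z_i^o)^2(z_j-z_j^o)^2]=\sigma_i^2\sigma_j^2$ for $i\neq j$, so the reduction to $3\sum_i\sigma_i^4$ is not actually justified there (the correct value is $\bigl(\sum_i\sigma_i^2\bigr)^2+2\sum_i\sigma_i^4$). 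Your route --- an $\ell^1$ bound on the remainder followed by two applications of Cauchy--Schwarz to reach $\tfrac{d^3}{4}K^2\sum_i(z_i-z_i^o)^4$ --- sidesteps this entirely, since only diagonal fourth moments survive, and thus gives a clean derivation of the stated form for scalar-valued $\boldsymbol{f}$, at the price of one extra factor of $d$ when the $d$ output components are summed (harmless for the $O(\nicefrac{1}{\gamma^4})$ conclusion, but worth noting since the lemma's constant $d^3$ is calibrated to the paper's per-component bound of $d^2$). You are also right on two points the paper glosses over: the ``for all $\boldsymbol{z}_o$'' in the statement is too strong, as the bound requires $\boldsymbol{z}_o$ to equal the mean so that $(z_k-z_k^o)$ is centered (this is exactly how the lemma is invoked in the proof of the main theorem, where the expansion point is the posterior mean); and keeping the cross terms yields the alternative, slightly tighter bound of order $\bigl(\sum_i\sigma_i^2\bigr)^2$.
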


\begin{proof}
    As described in \citep[p. 162]{marsden_vector_2012}, for the $l$-th component of the function
 \begin{multline}\label{eq:decoder_lin}
    f_l(\gls{latent}) %
    = f_l(\gls{latent}_o)+\sum_k \frac{\partial f_l}{\partial z_k}_{|\gls{latent}_o} (z_k-z^o_k)
    +   \frac{1}{2!} \sum_{i,j} \frac{\partial f_l}{\partial z_i \partial z_j}_{|\gls{latent}_o+t_{ij}(\gls{latent}-\gls{latent}_o)}\!\!\!\!\!\! (z_i-z^o_i)(z_j-z^o_j) \,,  t_{ij}\in \parenthesis{0;1}\,.
    \\
    = f_l(\gls{latent}_o)+\sum_k \frac{\partial f_l}{\partial z_k}_{|\gls{latent}_o} (z_k-z^o_k)
    +   \frac{1}{2!} \sum_{i,j} (\gls{latent}-\gls{latent}_o)^T \gls{hessian}_k (\gls{latent}-\gls{latent}_o) \,,
     \end{multline}
     where the second line puts $\nicefrac{1}{2}$ of the partial derivatives in matrix form (note it is not exactly the Hessian as derivatives are taken at different points). As a consequence 
         \begin{align*}
\parenthesis{f_l(\gls{latent}) - f_l(\gls{latent}_o) - \sum_k \frac{\partial f_l}{\partial z_k}_{|\gls{latent}_o} (z_k-z^o_k)}^2 = \parenthesis{(\gls{latent}-\gls{latent}_o)^T \gls{hessian}_k (\gls{latent}-\gls{latent}_o)}^2,\\
\leq \norm{\gls{hessian}_k}_2^2  \norm{\gls{latent}-\gls{latent}_o}^4\\
\leq \norm{\gls{hessian}_k}_F^2  \norm{\gls{latent}-\gls{latent}_o}^4
    \end{align*}
    where $\norm{\gls{hessian}_k}_2$ is the spectral norm of the matrix and $\norm{\gls{hessian}_k}_F$ is the Frobenious norm \footnote{first inequality comes from Cauchy-Schwartz: $<x,Ax>\leq \|x\|\|Ax\|\leq \|x\|\|A\|_2\|x\|$, second is a classical inequality between norms}
    leading to the bound 
    \begin{align*}
        \parenthesis{f_l(\gls{latent}) - f_l(\gls{latent}_o) - \sum_k \frac{\partial f_l}{\partial z_k}_{|\gls{latent}_o} (z_k-z^o_k)}^2 \leq \frac{d^2}{4} K^2  \norm{\gls{latent}-\gls{latent}_o}^4,
    \end{align*}
    where $K$ is an upper bound on the absolute second order derivatives. 
    We have $(z_k-z^o_k) = \encstdcomp_{k}(x)\epsilon_k$, with $\epsilon$ multivariate normal, so taking the expectation of the above simplifies to:
    \begin{align*}
       \expectation{\boldsymbol{Z}}  \parenthesis{f_l(\gls{latent}) - f_l(\gls{latent}_o) - \sum_k \frac{\partial f_l}{\partial z_k}_{|\gls{latent}_o} (z_k-z^o_k)}^2 &\leq \frac{d^2}{4} K^2  \expectation{\boldsymbol{Z}}\norm{\gls{latent}-\gls{latent}_o}^4,\\
       &= \frac{d^2}{4} K^2  \expectation{\boldsymbol{Z}} \sum_{i,j}\norm{z_i-z_j^o}^2\norm{z_i-z_j^o}^2\\
       &= \frac{d^2}{4} K^2  \sum_{i} \expectation{\boldsymbol{Z}} \norm{z_i-z_i^o}^4\\
       &= \frac{d^2}{4} 3 K^2  \sum_{i}\parenthesis{\encstdcomp_{i}}^4\,.
    \end{align*}
    Now gathering all components $f_l$ to get the squared norm yields:
  \begin{align*}
       \expectation{\boldsymbol{Z}}\brackets{\normsquared{ \boldsymbol{f}(\gls{latent}) - \boldsymbol{f}(\gls{latent}_o) - \sum_k \frac{\partial \boldsymbol{f}}{\partial z_k}_{|\gls{latent}_o} (z_k-z^o_k)}} & \leq \frac{d^3}{4} 3 K^2  \sum_{i}\parenthesis{\encstdcomp_{i}}^4\,.
    \end{align*}

\end{proof}

\subsection{Variational posterior variance optimization problem}

\begin{lem}\label{lem:optim}
For $\alpha>0$, the function
\begin{align*}
    h_\alpha: & \RR_{>0}  \to \RR\\
    & u \mapsto -\log u -\frac{1}{2} +\alpha u^2/2 = \frac{1}{2}\log \frac{1}{u^2} -\frac{1}{2} +\alpha u^2/2
\end{align*}
is strictly convex and achieves its global minimum $\min h_\alpha = \frac{1}{2} \log \alpha$ for $u^*=\frac{1}{\sqrt{\alpha}}$.
\end{lem}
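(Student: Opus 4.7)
The plan is to treat this as a routine one-variable calculus exercise, since $h_\alpha$ is a smooth function on $\mathbb{R}_{>0}$ and we simply need to locate its unique critical point and check the convexity/value.

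First, I would compute the first two derivatives directly: $h_\alpha'(u) = -1/u + \alpha u$ and $h_\alpha''(u) = 1/u^2 + \alpha$. Since both $1/u^2$ and $\alpha$ are strictly positive for $u > 0$ and $\alpha > 0$, the second derivative is strictly positive throughout $\mathbb{R}_{>0}$, giving strict convexity on this open convex domain.

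Next I would solve $h_\alpha'(u) = 0$, i.e.\ $\alpha u = 1/u$, which yields $u^2 = 1/\alpha$ and hence the unique positive solution $u^* = 1/\sqrt{\alpha}$. By strict convexity this critical point is the unique global minimizer. Substituting back,
\begin{equation*}
h_\alpha(u^*) = -\log(1/\sqrt{\alpha}) - \tfrac{1}{2} + \tfrac{\alpha}{2}\cdot \tfrac{1}{\alpha} = \tfrac{1}{2}\log \alpha - \tfrac{1}{2} + \tfrac{1}{2} = \tfrac{1}{2}\log \alpha,
\end{equation*}
which matches the claimed minimum value.

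There is no real obstacle here; the only thing to be mildly careful about is not to introduce a spurious negative root when solving $u^2 = 1/\alpha$, which is avoided by restricting to the stated domain $\mathbb{R}_{>0}$. The lemma is purely an auxiliary fact used for the posterior-variance optimization in the proof of Theorem~\ref{prop:vae_ima}, so a three-line argument along the lines above suffices.
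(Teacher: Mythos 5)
Your proposal is correct and follows essentially the same route as the paper: establish strict convexity (the paper does so by noting $h_\alpha$ is a sum of strictly convex functions, you via $h_\alpha''(u)=1/u^2+\alpha>0$, which amounts to the same thing), solve the first-order condition $-1/u+\alpha u=0$ for $u^*=1/\sqrt{\alpha}$, and substitute back to get $\tfrac{1}{2}\log\alpha$. No gaps.
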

\begin{proof}
Function $h_\alpha$ is stricly convex as a sum of two stricly convex functions. Its derivative, 
\[
\frac{dh_\alpha}{du}(u) = -\frac{1}{u} +\alpha u,
\]
thus vanishes only at the minimum for $u^*=\frac{1}{\sqrt{\alpha}}$. We then get that  
\[
\min h_\alpha = h_\alpha(u^*)=\frac{1}{2} \log \alpha\,.
\]
\end{proof}

\section{Related work}
\label{sec:app_related}
\subsection{Implicit inductive biases in the \gls{elbo}}
\label{subsec:related_implicit_elbo}

\looseness-1
\citet{rolinek_variational_2019} reason about the connection to \gls{pca} in the context of nonlinear Gaussian \glspl{vae} with an isotropic prior and assume that the variational posterior has \textit{diagonal covariance with distinct singular values}. The authors make it explicit that they investigate the consequences of optimizing the \gls{elbo}. They locally linearize the decoder to show the inductive bias in \glspl{vae} that promotes decoder orthogonality. Their results hold for \betavae{}s, where $\beta$ should be in the range of satisfying the polarized regime assumption (\ie, when the \gls{vae} is close to partial posterior collapse). The validity of the assumptions (polarized regime and distinct singular values in \gls{var_cov}) are only experimentally investigated. The same authors extend their work in \citep{zietlow_demystifying_2021}, completing the connection to \gls{pca} for \textit{linear} models. Their experiments, inspired by the connection to \gls{pca} for linear models, show that local perturbations in the data prohibit disentanglement for non-linear models.
\citet{nakagawa_quantitative_2021} builds upon the results of \citep{rolinek_variational_2019} and provides a \textit{novel interpretation} of \glspl{vae} by introducing implicit variables to express the latent space in terms of an isometric embedding.

\looseness-1
\citet{lucas_dont_2019} prove that \textit{linear Gaussian} \glspl{vae} with an isotropic prior give rise to a \textit{column-orthogonal decoder} and therefore uniquely recover the \gls{pca} coordinate axes (not just the correct subspace, as \gls{ppca}~\citep{tipping1999probabilistic} does), yielding identifiability for Gaussian models---but only when the eigenvalues of the data covariance are distinct. In their work, the decoder variance is shown to be small when avoiding posterior collapse. More interestingly, the authors derive a formula for the \gls{elbo} gap in the linear case that is remarkably similar to the \gls{ima} objective. We show in \cref{subsec:app_lin_vae} that in the limit of a deterministic decoder linear Gaussian \glspl{vae} optimize the \gls{ima} objective with $\lambda=1$. \citet{dai2018diagnosing} present more general results than \cite{lucas_dont_2019} since they use affine functions. Additionally, a connection to Robust \gls{pca}~\cite{robustpca} is established.

\citet{kumar_implicit_2020} generalizes \citep{rolinek_variational_2019}, as it admits a variational posterior \gls{q} with \textit{block-diagonal covariance} with a uniqueness result for diagonal \gls{var_cov}. The authors derive a formula for the optimal \gls{var_cov}~\citep[Eq. 12]{kumar_implicit_2020}, showing that when the decoder Hessian \gls{hessian} is diagonal, the decoder Jacobian will be column-orthogonal even for \textit{non-Gaussian} decoders. Their analysis relies on a ``concentrated'' \gls{q} (\ie, they work in what we term the near-deterministic regime) and sufficiently small values of $\beta$---this relationship can be read off from ~\citep[Eq. 12]{kumar_implicit_2020}. %
Interestingly, the authors also show that rotations of the latents can be ruled out, though they do not connect the decoder structure (especially, column-orthogonality of its Jacobian) to any specific generative model for the data, or to considerations on identifiability of the ground truth sources.

\citet{dai2018diagnosing} use a different setting that turns out to be very interesting to compare to ours. The most important is that while we use a factorized Gaussian variational posterior, \citet{dai2018diagnosing} use a non-factorized Gaussian, which leads to major differences. Broadly construed, \citet{dai2018diagnosing} are able to show in their Theorem 2 (which includes the case of equal latent and observation dimensions matching our setting) that in the deterministic limit, their $\kappa$-simple \gls{vae} can perfectly fit arbitrary observed data (barring few assumptions), while the \gls{elbo} gap tends to zero. 
The way it is proven relies on a first step with the Darmois construction~\citep{hyvarinen_nonlinear_1999}, chosing the decoder mean parameter such that its pushforward is exactly the observation distribution. %
Then in a second step, by an appropriate choice of variational posterior parameters, they show that asymptotically the \gls{elbo} gap (\ie, the \gls{kld} divergence between true and variational posteriors) tends to zero in the deterministic limit. 
In contrast, our constraint of factorized variational posterior does not allow the \gls{elbo} gap to vanish in the deterministic limit (unless the decoder mean that fits the data perfectly is in the \gls{ima} class, which is a very special case; in particular, if the Darmois construction is used). For this reason, the proofs and scope of our results are very different:  (i) we use information theoretic bounds to show that the encoder inverts the decoder mean (independently from the fact that this one may or may not fit the data perfectly); (ii) we obtain a rigorous convergence to the \gls{ima} regularized likelihood, which demonstrates that the gap is not eliminated in the deterministic limit.

Regarding our result for \betavae{}s (\cref{prop:beta_vae_ima}), the approach of \citet{mathieu_disentangling_2019} is similar as the authors show that \gls{betaloss} can be expressed in terms of a \textit{rescaled} \gls{elbo}. The difference is that \citet{mathieu_disentangling_2019} uses a rescaling of the parameters $\encpar, \decpar,$ whereas we only scale \gsq.

\subsection{(Near)-deterministic \glspl{vae}}
\label{subsec:related_vae}
\looseness-1 Recent work was inspired by the normalizing flow literature and the shortcomings of the stochastic \gls{vae} architecture to propose designs that are (near)-deterministic. Arguments for this regime range from avoiding posterior collapse (as demonstrated in \citep{lucas_dont_2019}) to avoiding sampling for the reconstruction loss term~\citep{kumar_implicit_2020}. Several papers argued for a similar setting: \citet{dai2018diagnosing} take the limit of ${\gamma\to+\infty}$ (here using $\gamma$ as the square root of the decoder precision and not the decoder variance as used in \cite{dai2018diagnosing}) to derive a result relating encoder and decoder properties in this limit in their Theorem 5, that has a similar flavor to \cref{prop:selfconsist}. In contrast to our nonlinear analysis, this is derived when optimizing with respect to both encoder and decoder parameter, and as stated in the previous section, the non-factorized encoder assumptions leads to fundamentally different behavior of the solutions in the deterministic limit.  \citet{rolinek_variational_2019} refer to the \textit{polarized  regime} (a property of which is that encoder variances are small, \cf \citep[Definition 1]{rolinek_variational_2019}), \citet{kumar_implicit_2020} argue for ``concentrated'' variational posteriors. \citet{ghosh_variational_2020} substitute stochasticity with a regularizer on the decoder Jacobian from an intuitive, whereas \citet{kumar_regularized_2020} motivate these results from an injective flow perspective. \citet{nielsen_survae_2020} also take a normalizing flow perspective to connect \glspl{vae} to deterministic models. 
Besides benefits of avoiding posterior collapse or possible improvements during optimization, this regime serves as a potential connection to the identifiability literature. %

\section{Further remarks on the the \gls{ima}--\gls{vae} connection%
}
\label{sec:app_ext_ima_vae}
In this section, we elaborate on the connection between \glspl{vae} and \gls{ima}, by showing that previous work on linear \glspl{vae} can be directly connected to optimizing \gls{imaloss}. Our intent with this analysis is to provide additional insights about the role of \gls{gamma} in a simpler setting.

\subsection{Linear \gls{vae} from \citet{lucas_dont_2019}}
\label{subsec:app_lin_vae}

We restate the linear \gls{vae} model of \citep{lucas_dont_2019}:
\begin{align}
    \gls{pxz} &= \mathcal{N}\parenthesis{\gls{lindec} \gls{latent}+\boldsymbol{\mu}; \frac{1}{\gsq}\gls{identity}}\\
    \gls{q} &= \mathcal{N}\parenthesis{\gls{linenc}\parenthesis{\gls{obs}-\boldsymbol{\mu}}; \gls{d}},
\end{align}
where \gls{d} is a diagonal matrix, \gls{lindec} the decoder and \gls{linenc} the encoder weights, $\boldsymbol{\mu}$ the mean latent representation. %

The authors show that in stationary points, the optimal value for \gls{d} is
\begin{align}
    \gls{d}^* &= \dfrac{1}{\gsq} \inv{\parenthesis{ \diag{\transpose{\gls{lindec}}\gls{lindec}}+\frac{1}{\gsq}\gls{identity}}} \label{eq:lin_vae_post_cov}
\end{align}
If we substitute this expression into the \gls{elbo} gap (\ie, the \gls{kld} between the variational and true posteriors), we get a similar expression to \gls{cima_local}---as formalized in \cref{prop:lin_gauss_vae}.

\begin{prop}[The \gls{elbo} converges to \gls{imaloss} for linear Gaussian \glspl{vae} if $\gamma\to+\infty$]
For linear Gaussian \glspl{vae}, in the limit of $\gamma \to \infty$, the \gls{elbo} equals the \gls{ima}-regularized log-likelihood in stationary points with $\lambda=1$.\label{prop:lin_gauss_vae}
\end{prop}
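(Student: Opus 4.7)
The plan is to view Proposition~\ref{prop:lin_gauss_vae} as a direct specialization of Theorem~\ref{prop:vae_ima} to the linear Gaussian setting, after verifying that the stationary-point condition of \citet{lucas_dont_2019} coincides with the self-consistent-encoder regime on which Theorem~\ref{prop:vae_ima} relies. First, I would check that the linear Gaussian VAE satisfies Assumption~\ref{assum:VAE}: for invertible $\gls{lindec}$, the mean decoder $\gls{lindec}\gls{latent}+\boldsymbol{\mu}$ is a smooth bijection with constant Jacobian (hence trivially bounded first- and second-order derivatives); the standard Gaussian prior has log-density whose first, second, and third logarithmic derivatives are linear, constant, and zero, respectively (all bounded), and is log-concave. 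The linear encoder class, though not universal, is sufficient here because the true posterior mean is itself an affine function of \gls{obs}, so the affine family captures the optimal variational mean exactly, while the diagonal-covariance restriction plays precisely the same role as in the general case.

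Next, I would verify that the stationary-point optimum from~\citet{lucas_dont_2019} agrees with the optimal encoder of Theorem~\ref{prop:vae_ima}. Substituting $\gls{jacobian}_{\gls{dec}}=\gls{lindec}$ and the standard Gaussian prior ($-d^2\log p_0/dz_k^2 = 1$) into eq.~\eqref{eq:optl_sigma} yields
\begin{equation*}
    \sigma_k^{*2} = \bigl(1 + \gamma^2 \|[\gls{lindec}]_{:k}\|^2\bigr)^{-1},
\end{equation*}
which matches $[\gls{d}^*]_{kk}$ of eq.~\eqref{eq:lin_vae_post_cov} exactly (not only asymptotically). On the mean side, one checks by direct computation that at any stationary point of the ELBO in $(\gls{linenc},\gls{d})$, the affine encoder satisfies $\gls{linenc}^{*}(\gls{obs}-\boldsymbol{\mu}) \to \gls{lindec}^{-1}(\gls{obs}-\boldsymbol{\mu})$ as $\gamma\to+\infty$, so self-consistency in the sense of Definition~\ref{def:selfcons} holds.

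Having established self-consistency and Assumption~\ref{assum:VAE}, I would invoke Theorem~\ref{prop:vae_ima} to obtain
\begin{equation*}
    \gls{elbo}^{*}(\gls{obs};\gls{decpar}) = \log \gls{px} - \gls{cima_local}(\gls{dec},\gls{invdec}(\gls{obs})) + O(\nicefrac{1}{\gamma^2}).
\end{equation*}
For a linear decoder, $\gls{cima_local}(\gls{dec},\gls{latent}) = \sum_k \log \|[\gls{lindec}]_{:k}\| - \log|\det \gls{lindec}|$ is independent of \gls{latent}, so in the limit $\gamma\to+\infty$, $\gls{elbo}^{*}$ converges to $\log \gls{px} - \gls{cima_local}$, which is exactly the IMA-regularized log-likelihood in eq.~\eqref{eq:ima_objective} with $\lambda = 1$.

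The main obstacle I anticipate is reconciling the joint stationary points used by~\citet{lucas_dont_2019} (critical points of the ELBO over $\gls{lindec}, \gls{linenc}, \gls{d}$ simultaneously) with the ``optimal-encoder-at-fixed-decoder'' regime of Theorem~\ref{prop:vae_ima}. The reconciliation follows because at any joint stationary point, the partial gradients w.r.t. $(\gls{linenc},\gls{d})$ vanish, so the encoder is also a critical point of the ELBO at the given decoder, and for a linear Gaussian VAE with Gaussian true posterior this critical point is the unique minimizer of $\kl{\gls{q}}{\gls{pzx}}$ within the linear-mean, diagonal-covariance family---exactly the information projection used in Section~\ref{sec:self_const}. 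Care will be needed to make this argument rigorous without invoking the universal-approximation clause of Assumption~\ref{assum:VAE}.
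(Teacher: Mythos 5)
Your proposal is correct in substance, but it takes a genuinely different route from the paper's own proof. The paper does not invoke \cref{prop:vae_ima} at all: it works entirely inside the linear setting, starting from the closed-form stationary-point gap of \citet{lucas_dont_2019}, $\kl{\gls{q}}{\gls{pzx}}=\tfrac{1}{2}(\log\det\tilde{\Mmat}-\log\det\Mmat)$ with $\Mmat=\transpose{\gls{lindec}}\gls{lindec}+\gamma^{-2}\gls{identity}$ and $\tilde{\Mmat}=\diag{\transpose{\gls{lindec}}\gls{lindec}}+\gamma^{-2}\gls{identity}$, diagonalizes $\transpose{\gls{lindec}}\gls{lindec}=\Umat\Lambdamat\transpose{\Umat}$, and recognizes the resulting ratio of determinants as the left \gls{kld}-measure of diagonality of $\Umat[\Lambdamat+\gamma^{-2}\gls{identity}]\transpose{\Umat}$, \ie, exactly the local \gls{ima} contrast of a perturbed decoder; letting $\gamma\to+\infty$ then gives \gls{cima_local} of $\gls{lindec}$ itself. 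This argument is elementary, exact at every finite $\gamma$ (the gap \emph{is} a diagonality measure, which merely converges to the IMA contrast), and sidesteps all regularity assumptions. Your route, by contrast, buys a consistency check between the linear and nonlinear analyses, but at the cost of two patches you correctly flag: the linear encoder class violates the universal-approximation clause of \cref{assum:VAE}(iii) and must be argued sufficient because the true posterior mean is affine, and the joint stationary points of \citet{lucas_dont_2019} must be reconciled with the fixed-decoder information projection of \cref{sec:self_const}. Both patches go through (the Gaussian-to-Gaussian projection decouples into a convex mean problem and a convex per-coordinate variance problem, so the critical point is the unique global minimizer, and your specialization of \eqref{eq:optl_sigma} does reproduce \eqref{eq:lin_vae_post_cov} exactly), but your conclusion only holds up to $O(\nicefrac{1}{\gamma^2})$ corrections inherited from the nonlinear theorem, whereas the paper's computation identifies the gap with a diagonality measure identically and needs no error terms. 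If you keep your approach, make the affine-sufficiency argument explicit rather than deferring it, since it is the one place where the hypotheses of \cref{prop:vae_ima} are not literally satisfied.
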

\begin{proof}
In \citep[Appendix C.2]{lucas_dont_2019}, it is shown that the gap between exact log-likelihood and \gls{elbo}  for linear Gaussian \glspl{vae} in stationary points reduces to

\begin{align}
    \kl{\gls{q}}{\gls{pzx}} &= \dfrac{1}{2}\parenthesis{\log\det \tilde{\Mmat}-\log\det \Mmat} \label{eq:lin_vae_kl_gap_stationary}\\
    \Mmat &= {\transpose{\gls{lindec}}\gls{lindec}}+\frac{1}{\gsq}\gls{identity}\\
    \tilde{\Mmat} &= \diag{\transpose{\gls{lindec}}\gls{lindec}}+\frac{1}{\gsq}\gls{identity},
\end{align}
where \gls{lindec} is the decoder weight matrix. Reformulating the above expression, we arrive at :
\begin{align}
        \kl{\gls{q}}{\gls{pzx}} &= \log \dfrac{\abs{\diag{\transpose{\gls{lindec}}\gls{lindec}}+\frac{1}{\gsq}\gls{identity}}}{\abs{{\transpose{\gls{lindec}}\gls{lindec}}+\frac{1}{\gsq}\gls{identity}}}\\
        &= \log \dfrac{\abs{\diag{\transpose{\gls{lindec}}\gls{lindec}+\frac{1}{\gsq}\gls{identity}}}}{\abs{{\transpose{\gls{lindec}}\gls{lindec}}+\frac{1}{\gsq}\gls{identity}}} \label{eq:lin_vae_kl_gap_matrix}
\end{align}

Noting that $\transpose{\gls{lindec}}\gls{lindec}$ is symmetric with a \gls{svd} of $\Umat\Lambdamat\transpose{\Umat}$ ($\Umat$ is orthogonal, $\Lambdamat_{ii}=\normsquared{\cols{\gls{lindec}}}$), and $\gls{identity} = \Umat\transpose{\Umat}$; thus:
\begin{align*}
    {\transpose{\gls{lindec}}\gls{lindec}}+\frac{1}{\gsq}\gls{identity}&= {\Umat\Lambdamat\transpose{\Umat}}+\frac{1}{\gsq}\Umat\transpose{\Umat}={\Umat\brackets{\Lambdamat+\frac{1}{\gsq}\gls{identity}}\transpose{\Umat}}
\end{align*}
    
Therefore, \eqref{eq:lin_vae_kl_gap_matrix} can be reformulated as the left \gls{kld}-measure of diagonality~\citep{alyani_diagonality_2017} of the matrix ${\Umat\brackets{\Lambdamat+\nicefrac{1}{\gsq}\gls{identity}}\transpose{\Umat}}$:
\begin{align}
    \kl{\gls{q}}{\gls{pzx}} &= \log \dfrac{\abs{\diag{\transpose{\gls{lindec}}\gls{lindec}+\frac{1}{\gsq}\gls{identity}}}}{\abs{{\transpose{\gls{lindec}}\gls{lindec}}+\frac{1}{\gsq}\gls{identity}}} \\
    &= \log \dfrac{\abs{\diag{\Umat\brackets{\Lambdamat+\frac{1}{\gsq}\gls{identity}}\transpose{\Umat}}}}{\abs{\Umat\brackets{\Lambdamat+\frac{1}{\gsq}\gls{identity}}\transpose{\Umat}}},\label{eq:lin_vae_cima}
\end{align}
which is by definition the local \gls{ima} contrast \gls{cima_local} (\cf  \citep[Appendix C.1]{gresele_independent_2021}).
When $\gamma \rightarrow+\infty$, the above expression converges to the left \gls{kld}-measure of diagonality for $\transpose{\gls{lindec}}\gls{lindec},$ \ie, the local \gls{ima} contrast for the decoder.

\looseness-1
$\gamma\!\rightarrow\!+\infty$ thus means that the \gls{elbo} converges to the \gls{ima} regularized log-likelihood \gls{imaloss} with $\lambda=1$ :
\begin{align*}
    \gls{elbo} &= \log\gls{px}-\kl{\gls{q}}{\gls{pzx}}\\
    &= \log\gls{px}-\cima[\gls{lindec}],
\end{align*}
which concludes the proof.
\end{proof}

\cref{prop:lin_gauss_vae}, especially \eqref{eq:lin_vae_cima}, gives us intuitive understanding on why and how $\gamma$ influences how much the orthogonality of \gls{lindec} is enforced. 
\begin{enumerate}[nolistsep]
    \item Small $\gamma$ (high observation noise) means that there is no reason to promote the orthogonality of the decoder, as the high noise level (\ie, low-quality fit of \gls{obs})  will drive \eqref{eq:lin_vae_cima} towards diagonality via $\nicefrac{1}{\gsq}$.
    \item On the other hand, when $\gamma\rightarrow+\infty,$ then the orthogonality of the decoder is promoted. That is, the decoder precision \gsq acts akin to a weighting factor influencing how strong the \gls{ima} principle should be enforced.
\end{enumerate}  

We can observe that the \gls{elbo} recovers the exact log-likelihood for column-orthogonal \gls{lindec}:
\begin{cor}[For column-orthogonal \gls{lindec} the \gls{elbo} equals the exact log-likelihood]\label{cor:lin_vae_ima}
When \gls{lindec} is in the form $\gls{lindec} = \gls{o}\gls{d},$ then $\diag{\transpose{\gls{lindec}}\gls{lindec}}=\transpose{\gls{lindec}}\gls{lindec}=  \gls{d}\transpose{\gls{o}}\gls{o}\gls{d}=\gls{d}^2,$ \ie the \gls{elbo} corresponds to the exact log-likelihood since \eqref{eq:lin_vae_cima} is zero.
\end{cor}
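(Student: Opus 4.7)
The plan is to directly substitute the assumed factorization $\gls{lindec} = \gls{o}\gls{d}$ into the expression for the ELBO gap derived in Proposition~\ref{prop:lin_gauss_vae} and show that it vanishes identically (not just in the limit). First I would compute $\transpose{\gls{lindec}}\gls{lindec}$ using the orthogonality relation $\transpose{\gls{o}}\gls{o}=\gls{identity}$, which immediately gives $\transpose{\gls{lindec}}\gls{lindec} = \gls{d}\transpose{\gls{o}}\gls{o}\gls{d} = \gls{d}^2$. Since $\gls{d}$ is diagonal, so is $\gls{d}^2$, and therefore $\diag{\transpose{\gls{lindec}}\gls{lindec}} = \transpose{\gls{lindec}}\gls{lindec}$.

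Next I would add the isotropic term $\nicefrac{1}{\gsq}\gls{identity}$ to both sides. Since $\gls{identity}$ is diagonal and the sum of two diagonal matrices is diagonal, this preserves the equality: $\diag{\transpose{\gls{lindec}}\gls{lindec}+\nicefrac{1}{\gsq}\gls{identity}} = \transpose{\gls{lindec}}\gls{lindec}+\nicefrac{1}{\gsq}\gls{identity}$. Taking determinants of both sides, the numerator and denominator of the ratio in~\eqref{eq:lin_vae_cima} agree, so the log of the ratio is $\log 1 = 0$. Hence $\cima[\gls{lindec}] = 0$ in the column-orthogonal case.

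Finally, plugging this into the ELBO decomposition $\gls{elbo} = \log\gls{px} - \kl{\gls{q}}{\gls{pzx}}$ from Proposition~\ref{prop:lin_gauss_vae} (where the KL gap equals $\cima[\gls{lindec}]$ at stationary points), I obtain $\gls{elbo} = \log\gls{px}$, which is the claimed equality with the exact log-likelihood. There is no real obstacle here: the corollary is a direct algebraic consequence of the structural assumption $\gls{lindec} = \gls{o}\gls{d}$ combined with the identity established in Proposition~\ref{prop:lin_gauss_vae}. The only thing worth noting is that this conclusion holds at any finite $\gamma$ (not merely in the limit $\gamma\to+\infty$), since the $\nicefrac{1}{\gsq}\gls{identity}$ terms on numerator and denominator cancel exactly when $\transpose{\gls{lindec}}\gls{lindec}$ is already diagonal.
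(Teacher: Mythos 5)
Your proof is correct and takes essentially the same route as the paper: the corollary is established there by exactly the same direct substitution $\transpose{\gls{lindec}}\gls{lindec}=\gls{d}\transpose{\gls{o}}\gls{o}\gls{d}=\gls{d}^2$, whose diagonality makes the determinant ratio in~\eqref{eq:lin_vae_cima} equal to one and hence the gap from Proposition~\ref{prop:lin_gauss_vae} zero. Your closing observation that this holds at any finite $\gamma$ (since adding $\nicefrac{1}{\gsq}\gls{identity}$ preserves the equality of numerator and denominator) is likewise made explicitly by the paper in the remark immediately following the corollary.
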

\cref{cor:lin_vae_ima} also implies that $\gamma$ does not affect the gap between \gls{elbo} and exact log-likelihood for column-orthogonal \gls{lindec}.

\section{Experimental details}
\label{sec:app_exp}
\subsection{The relationship of weight matrix structures and the \gls{ima} function class}
During the experiments we have used different weight matrices either to \textit{ensure} that the mixing is within or to \textit{exclude} it from the \gls{ima} function class. Here we summarize our choices also including the \textit{depth} of the network as it can affect the mixing's place \wrt the \gls{ima} function class.

When we use \textit{orthogonal} weight matrices (\cref{subsec:exp_self_cons},\cref{subsec:exp_elbo_ima_likelihood}), then a single-layer network is within the \gls{ima} class, but adding more layers with elements-wise nonlinearities will move the \gls{mlp} outside the function class. When using \textit{triangular} \glspl{mlp} (\cref{subsec:exp_elbo_ima_likelihood}), the network is also outside the \gls{ima} class (triangular matrices are orthogonal only when they are \textit{diagonal}, see also \citep[Lemma C.1]{gresele_independent_2021}). %

Notably, Möbius transforms~\cite{phillips1969liouville} are conformal maps (thus, they are in the \gls{ima} class) irrespective of the structure of the weight matrix used (\cf \cref{subsec:app_exp_col_gamma_dis} for details).

\subsection{Self-consistency in practical conditions (\cref{subsec:exp_self_cons})}
\label{subsec:app_exp_self_cons}
\looseness-1
For the self-consistency experiments, the mixing is a 3-layer \gls{mlp} with smooth Leaky ReLU nonlinearities~\citep{gresele_relative_2020} and orthogonal weight matrices---which intentionally does not belong to the \gls{ima} class, since our self-consistency result is not constrained to the \gls{ima} class. The 60,000 source samples are drawn from a standard normal distribution and fed into a \gls{vae} composed of a 3-layer \gls{mlp} encoder and decoder with a Gaussian prior. We use 20 seeds for each  $\gsq\xspace \in \braces{\expnum{1}{1};\expnum{1}{2};\expnum{1}{3};\expnum{1}{4};\expnum{1}{5}}$. Additional parameters are described in \cref{tab:exp_self_cons}. Training is continued until the $\gls{elbo}^*$ improves on the \textit{validation set} (we use early stopping~\citep{Prechelt97earlystopping}), then all metrics are reported for the maximum $\gls{elbo}^*$  (\cref{figure:self_cons}).

\begin{table}[H]
    \caption{Hyperparameters for the self-consistency experiments (\cref{subsec:exp_self_cons})}
    \label{tab:exp_self_cons}
    \vskip 0.15in
    \begin{center}
    \begin{small}
    \begin{sc}
    \begin{tabular}{lr} \toprule
        Parameter & Values \\ \midrule
        Encoder & 3-layer \gls{mlp}\\
        Decoder & 3-layer \gls{mlp}\\
        Activation & smooth Leaky ReLU~\citep{gresele_relative_2020}\\
        Batch size & $64$\\
        \# Samples (train-val-test) & $42-12-6$k\\
        Learning rate & \expnum{1}{-3}\\
        \gls{obsdim} & 3\\
        Ground truth & Gaussian\\
        \gls{pz} & Gaussian\\
        \gls{var_cov} & Diagonal\\
        \gsq & \braces{\expnum{1}{1};\expnum{1}{2};\expnum{1}{3};\expnum{1}{4};\expnum{1}{5}} \\
        \# Seeds & 20\\
        \bottomrule
    \end{tabular}
    \end{sc}
    \end{small}
    \end{center}
    \vskip -0.1in
\end{table}

\subsection{Relationship between $\gls{elbo}^*$, \gls{ima}-regularized, and unregularized log-likelihoods (\cref{subsec:exp_elbo_ima_likelihood})}
\label{subsec:app_exp_elbo_ima_likelihood}

\begin{table}[H]
    \caption{Hyperparameters for the \textit{triangular \gls{mlp}} (\textit{not} from the \gls{ima} class) $\gls{elbo}^*$--\gls{imaloss}--log-likelihood experiments (\cref{subsec:exp_elbo_ima_likelihood})}
    \label{tab:hyper_elbo_ima_likelihood_not_ima}
    \vskip 0.15in
    \begin{center}
    \begin{small}
    \begin{sc}
    \begin{tabular}{lr} \toprule
        Parameter & Values \\ \midrule
        Encoder & 3-layer \gls{mlp}\\
        Decoder & 2-layer triangular \gls{mlp} (ground truth)\\
        Activation & Sigmoid\\
        Batch size & $64$\\
        \# Samples (train-val-test) & $100-30-15$k\\
        Learning rate & \expnum{1}{-4}\\
        \gls{obsdim} & 2\\
        Ground truth & Gaussian\\
        \gls{pz} & Gaussian\\
        \gls{var_cov} & Diagonal\\
        \gsq & \brackets{\expnum{1}{1};\expnum{1}{5}} \\
        \# Seeds & 5\\
        \gls{cima_global} (mixing) & $7.072$\\
        \bottomrule
    \end{tabular}
    \end{sc}
    \end{small}
    \end{center}
    \vskip -0.1in
\end{table}

\begin{wrapfigure}{r}{6.5cm}
\centering
    \includesvg[width=11.5cm,keepaspectratio]{figures/ima_elbo_likelihood_ima_class.svg}
    \caption{\looseness-1 Comparison of the $\gls{elbo}^*$, the \gls{ima}-regularized and unregularized log-likelihoods over different \gsq with an \gls{ima}-class mixing}
    \label{figure:ima_elbo_likelihood_ima_class}
\end{wrapfigure} 
\looseness-1
For the experiments comparing the $\gls{elbo}^*$, \gls{ima}-regularized, and unregularized log-likelihoods, data is generated by mixing points from a standard Gaussian prior using an invertible neural network. When the mixing is not in the \gls{ima}-class (\cref{tab:hyper_elbo_ima_likelihood_not_ima}), we use a two-layer neural network with sigmoid nonlinearites and triangular weight matrices. When the mixing is from the \gls{ima}-class (\cref{tab:hyper_elbo_ima_likelihood_ima}), we use a one-layer neural network with orthogonal weight matrices. The data dimensionality in both cases is two.

\looseness-1
Training is carried out using a \gls{vae} with a decoder fixed to the ground-truth and separate encoder models for the means and variances of the approximate posterior. The encoder comprises two three-layer neural networks with ReLU non-linearities and a hidden layer size of $50$. Due to training instabilities when using a large $\gamma$, we train the model by first fixing the mean encoder to the ground-truth inverse of the mixing for the first $30$ epochs; thus, only training the variances. We then train both for the remaining epochs. Training is stopped after the $\gls{elbo}^*$ plateaus on the \textit{validation set}. A training set of ${100,\!000}$ samples is used, with a validation set and test set of ${30,\!000}$ and ${15,\!000}$ samples, respectively. The learning rate is \expnum{1}{-4} and the batch size $64$.

We provide additional results when the mixing is from the \gls{ima} class (\cref{tab:hyper_elbo_ima_likelihood_ima}): as \gls{cima_global} is zero, we expect that both \gls{imaloss} and the unregularized log-likelihood match. Indeed, this is what \cref{figure:ima_elbo_likelihood_ima_class} demonstrates.

\begin{table}[H]
    \caption{Hyperparameters for the \textit{orthogonal \gls{mlp}} (from the \gls{ima} class) $\gls{elbo}^*$--\gls{imaloss}--log-likelihood experiments (\cref{subsec:exp_elbo_ima_likelihood})}
    \label{tab:hyper_elbo_ima_likelihood_ima}
    \vskip 0.15in
    \begin{center}
    \begin{small}
    \begin{sc}
    \begin{tabular}{lr} \toprule
        Parameter & Values \\ \midrule
        Encoder & 3-layer \gls{mlp}\\
        Decoder & 1-layer orthogonal \gls{mlp} (ground truth)\\
        Activation & Sigmoid\\
        Batch size & $64$\\
        \# Samples (train-val-test) & $100-30-15$k\\
        Learning rate & \expnum{1}{-4}\\
        \gls{obsdim} & 2\\
        Ground truth & Uniform\\
        \gls{pz} & Uniform\\
        \gls{var_cov} & Diagonal\\
        \gsq & \brackets{\expnum{1}{1};\expnum{1}{5}} \\
        \gls{cima_global} (mixing) & $0$\\
        \bottomrule
    \end{tabular}
    \end{sc}
    \end{small}
    \end{center}
    \vskip -0.1in
\end{table}

\subsection{Connecting the \gls{ima} principle, \gsq, and disentanglement (\cref{subsec:exp_col_gamma_dis})}
\label{subsec:app_exp_col_gamma_dis}

\paragraph{Synthetic data (Möbius transform)}
We use 3-dimensional conformal mixings (\ie, the Möbius transform~\citep{phillips1969liouville}) from the \gls{ima} class with the functional form:
\begin{align*}
    \gls{obs} &= \boldsymbol{t} + \gls{scalar}\dfrac{\gls{lindec}\parenthesis{\gls{latent}-\boldsymbol{b}}}{\norm{\gls{latent}-\boldsymbol{b}}^\epsilon},
\end{align*}
where $\boldsymbol{t}, \boldsymbol{b} \in \RR^{\gls{obsdim}}$, $\gls{lindec}\in \RR^{\gls{obsdim}\times \gls{obsdim}}$, $\gls{scalar} \in \RR$, and $\epsilon=2$ (to ensure nonlinearity) with $\gls{obsdim}=3$. Both ground-truth and prior distributions are \textit{uniform} to avoid the singularity when $\gls{latent}=\boldsymbol{b}.$

To determine whether a mixing from the \gls{ima} class is beneficial for disentanglement, we apply a volume-preserving linear map after the Möbius transform (using $100$ seeds) to construct a mixing outside of the \gls{ima} class. We fix $\gsq=\expnum{1}{5}$ and  report further parameters in \cref{tab:app_exp_col_gamma_dis_synth}. 
Training is continued until the $\gls{elbo}^*$ improves on the \textit{validation set} (we use early stopping~\citep{Prechelt97earlystopping}), then all metrics are reported for the maximum $\gls{elbo}^*$ (\cref{figure:mcc_vs_cima_vs_gamma}).

\begin{table}[H]
    \caption{Hyperparameters for the \textit{synthetic (Möbius)} \gls{ima}--disentanglement experiments (\cref{subsec:exp_col_gamma_dis}) with a linear map}
    \label{tab:app_exp_col_gamma_dis_synth}
    \vskip 0.15in
    \begin{center}
    \begin{small}
    \begin{sc}
    \begin{tabular}{lr} \toprule
        Parameter & Values \\ \midrule
        Encoder & 3-layer \gls{mlp}\\
        Decoder & 3-layer \gls{mlp}\\
        Activation & smooth Leaky ReLU~\citep{gresele_relative_2020}\\
        Batch size & $64$\\
        \# Samples (train-val-test) & $42-12-6$k\\
        Learning rate & \expnum{1}{-3}\\
        \gls{obsdim} & 3\\
        Ground truth & Uniform\\
        \gls{pz} & Uniform\\
        \gls{var_cov} & Diagonal\\
        \gsq & \expnum{1}{5} \\
        \# Seeds & 100\\
        \gls{cima_global} (mixing) & \brackets{0.398;6.761}\\
        \bottomrule
    \end{tabular}
    \end{sc}
    \end{small}
    \end{center}
    \vskip -0.1in
\end{table}

\paragraph{Image data (Sprites)}

We train a \gls{vae} (not \betavae) with a factorized Gaussian posterior and Beta prior on a Sprites image dataset generated using the spriteworld renderer~\citep{spriteworld19} with a Beta ground truth distribution. Similar to~\citep{jack_brady_isprites_2020}, we use four latent factors, namely, \textit{x- and y-position, color and size}, and omit factors that can be problematic, such as shape (as it is discrete) and rotation (due to symmetries)~\citep{rolinek_variational_2019,klindt_towards_2021}. Our choice is motivated by~\citep{horan_when_2021,donoho_image_2005} showing that the data-generating process presumably is in the \gls{ima} class. The architecture both for encoder and decoder consists of four convolutional and three linear layers with ReLU nonlinearities (\cref{tab:app_exp_col_gamma_dis_img}).
Training is continued until the $\gls{elbo}^*$ improves on the \textit{validation set} (we use early stopping~\citep{Prechelt97earlystopping}), then all metrics are reported for the maximum $\gls{elbo}^*$.
\begin{table}[H]
    \caption{Hyperparameters for the \textit{image (Sprites)} \gls{ima}--disentanglement experiments (\cref{subsec:exp_col_gamma_dis})}
    \label{tab:app_exp_col_gamma_dis_img}
    \vskip 0.15in
    \begin{center}
    \begin{small}
    \begin{sc}
    \begin{tabular}{lr} \toprule
        Parameter & Values \\ \midrule
        Encoder & 4-layer Conv2D + 3-layer \gls{mlp}\\
        Decoder & 4-layer Conv2D + 3-layer \gls{mlp}\\
        Activation & ReLU\\
        Batch size & $64$\\
        \# Samples (train-val-test) & $42-12-6$k\\
        Learning rate & \expnum{1}{-5}\\
        \gls{obsdim} & 3\\
        Ground truth & Beta\\
        \gls{pz} & Beta\\
        \gls{var_cov} & Diagonal\\
        \gsq & \expnum{1}{0} \\
        \# Seeds & 10\\
        \bottomrule
    \end{tabular}
    \end{sc}
    \end{small}
    \end{center}
    \vskip -0.1in
\end{table}

\subsection{Optimality of \gsq \wrt its \acrshort{mle}}

During our experiments, we \textit{do not optimize} \gsq, as it is generally the case in the literature~\cite{rolinek_variational_2019,lucas_dont_2019,kumar_implicit_2020}. However, as noted by \citet{rybkin_simple_2021}, doing so could lead to superior sample quality. The price we need to pay for improved sample generation is a more difficult optimization task (also noted in \citep{seitzer_pitfalls_2021}): including \gsq as a trainable parameter might require a careful learning rate tuning, and smaller learning rates can yield suboptimal likelihood values in the beginning~\cite{rybkin_simple_2021}.

During our experiments, we confirmed that making \gsq learnable (all else being equal) yields suboptimal results, particularly in terms of \gls{mcc}. Thus, we opted for comparing our hyperparameter setting to the \textit{maximum likelihood estimate of the decoder variance}, as proposed in \citep[Eq.~(8)]{rybkin_simple_2021}. Accomodating the parameter \gsq instead of the decoder variance, we reformulate the equation as:
\begin{align}
    \gsq_{\mathrm{\gls{mle}}} &= \arg\max_{\gsq} \mathcal{N}\parenthesis{\gtmix,\dfrac{1}{\gsq}\gls{identity}} = \dfrac{1}{\mathrm{\acrshort{mse}}\parenthesis{\gls{obs}, \gls{dec}\parenthesis{\gls{mu}}}} \label{eq:gamma_mle}\\
    &= \inv{\brackets{\dfrac{1}{\abs{\gls{Obs}}}\sum_{\gls{obs}\in \gls{Obs}}\normsquared{\gls{obs}- \gls{dec}\parenthesis{\gls{mu}}}}},
\end{align}
\ie, the \gls{mle} is the mean squared error between observations \gls{obs} and the \textit{decoded mean encodings} $\gls{dec}\parenthesis{\gls{mu}},$ where $\abs{\gls{Obs}}$ denotes the number of observations. Interestingly, this is the inverse of the quantity we report on the right plot of \cref{figure:self_cons}.

\begin{figure}[tb]
    \centering
    \includesvg[]{figures/gamma_vs_mle_estimate.svg}
    \caption{Comparison of $\gsq_{\mathrm{\gls{mle}}}$ and the optimal \gsq we found via grid search (experimental details are the same as in \cref{subsec:exp_self_cons}, detailed in \cref{subsec:app_exp_self_cons})}
    \label{figure:gamma_vs_mle_estimate}
\end{figure}

To compare $\gsq_{\mathrm{\gls{mle}}}$ (calculated as \cref{eq:gamma_mle}) and the optimal value of \gsq we found via grid search from the values $\braces{\expnum{1}{1};\expnum{1}{2};\expnum{1}{3};\expnum{1}{4};\expnum{1}{5}},$ we plot the log of both values in \cref{figure:gamma_vs_mle_estimate}. We can observe that for all values except \expnum{1}{5}, $\gsq_{\mathrm{\gls{mle}}}$ is larger, sometimes with more than one order of magnitude. For \expnum{1}{5}, the mean (for the 20 seeds) lie in the range $[\expnum{0.8}{5};\expnum{3.8}{5}]$ with a mean and standard deviation of $\expnum{2.3\pm0.77}{5},$ indicating that $\gsq=\expnum{1}{5}$ and $\gsq_{\mathrm{\gls{mle}}}$ are in the same order of magnitude, corroborating that we used the optimal setting up to the granularity of our original grid search.

\section{Computational resources}
\label{subsec:app_compute}

The self-consistency (\cref{subsec:exp_self_cons}), the likelihood comparison (\cref{subsec:exp_elbo_ima_likelihood}), and the synthetic experiments with the Möbius transform (\cref{subsec:exp_col_gamma_dis}, particularly \cref{figure:mcc_vs_cima_vs_gamma}) were ran on a MacBook Pro with a Quad-Core Intel Core i5 CPU and required approximately nine days. The Sprites experiments (\cref{subsec:exp_col_gamma_dis}, particularly \cref{figure:dsprites}) required approximately four and a half days on an Nvidia RTX 2080 GPU.

\section{Societal impact}
\label{sec:app_impact}
Our paper presents basic research and is mainly theoretical, though the lack of direct connection to a specific application does not mean that our results could not be used for malevolent purposes. We acknowledge that providing a possible mechanism for why unsupervised \glspl{vae} can learn disentangled representations can inform specific actors that unsupervised \glspl{vae} might be used to extract the true generating factors. Since no auxiliary variables, labels, or conditional distributions are required, this might lead to a broader use of unsupervised \glspl{vae} for trying to learn the true generating factors---including applications with potentially negative societal impact such as extracting features from images, video, or text for personal identification; thus, possibly violating the desire of those who intend to remain anonymous.

\section{Notation}
\label{sec:app_notation}

\end{document}